\theoremstyle{plain}
\newtheorem{theorem}{Theorem}[section]
\theoremstyle{definition}
\theoremstyle{remark}
\icmltitlerunning{A Koopman-Kalman Enhanced Variational AutoEncoder for Probabilistic Time Series Forecasting}
\begin{document}

\twocolumn[
\icmltitle{$K^2$VAE: A Koopman-Kalman Enhanced Variational AutoEncoder for Probabilistic Time Series Forecasting}

% It is OKAY to include author information, even for blind
% submissions: the style file will automatically remove it for you
% unless you've provided the [accepted] option to the icml2025
% package.

% List of affiliations: The first argument should be a (short)
% identifier you will use later to specify author affiliations
% Academic affiliations should list Department, University, City, Region, Country
% Industry affiliations should list Company, City, Region, Country

% You can specify symbols, otherwise they are numbered in order.
% Ideally, you should not use this facility. Affiliations will be numbered
% in order of appearance and this is the preferred way.
\icmlsetsymbol{equal}{*}

\begin{icmlauthorlist}
\icmlauthor{Xingjian Wu}{equal,yyy}
\icmlauthor{Xiangfei Qiu}{equal,yyy}
\icmlauthor{Hongfan Gao}{yyy}
\icmlauthor{Jilin Hu}{yyy}
\icmlauthor{Bin Yang}{yyy}
\icmlauthor{Chenjuan Guo}{yyy}

% \icmlauthor{Firstname7 Lastname7}{comp}
%\icmlauthor{}{sch}
% \icmlauthor{Firstname8 Lastname8}{sch}
% \icmlauthor{Firstname8 Lastname8}{yyy,comp}
%\icmlauthor{}{sch}
%\icmlauthor{}{sch}
\end{icmlauthorlist}

\icmlaffiliation{yyy}{School of Data Science and Engineering, East China Normal University, Shanghai, China}
% \icmlaffiliation{comp}{Company Name, Location, Country}
% \icmlaffiliation{sch}{School of ZZZ, Institute of WWW, Location, Country}

\icmlcorrespondingauthor{Bin Yang}{byang@dase.ecnu.edu.cn}
% \icmlcorrespondingauthor{Firstname2 Lastname2}{first2.last2@www.uk}

% You may provide any keywords that you
% find helpful for describing your paper; these are used to populate
% the "keywords" metadata in the PDF but will not be shown in the document
\icmlkeywords{Machine Learning, ICML}

\vskip 0.3in
]

% this must go after the closing bracket ] following \twocolumn[ ...

% This command actually creates the footnote in the first column
% listing the affiliations and the copyright notice.
% The command takes one argument, which is text to display at the start of the footnote.
% The \icmlEqualContribution command is standard text for equal contribution.
% Remove it (just {}) if you do not need this facility.

%\printAffiliationsAndNotice{}  % leave blank if no need to mention equal contribution
\printAffiliationsAndNotice{\icmlEqualContribution} % otherwise use the standard text.

\begin{abstract}
Probabilistic Time Series Forecasting (PTSF) plays a crucial role in decision-making across various fields, including economics, energy, and transportation. Most existing methods excell at short-term forecasting, while overlooking the hurdles of Long-term Probabilistic Time Series Forecasting (LPTSF). As the forecast horizon extends, the inherent nonlinear dynamics have a significant adverse effect on prediction accuracy, and make generative models inefficient by increasing the cost of each iteration. To overcome these limitations, we introduce $K^2$VAE, an efficient VAE-based generative model that leverages a KoopmanNet to transform nonlinear time series into a linear dynamical system, and devises a KalmanNet to refine predictions and model uncertainty in such linear system, which reduces error accumulation in long-term forecasting. Extensive experiments demonstrate that $K^2$VAE outperforms state-of-the-art methods in both short- and long-term PTSF, providing a more efficient and accurate solution.
\end{abstract}

\section{Introduction}
% Multivariate Time series (MTS) forecasting stands out as a critical and widely studied task~\cite{yu2024ginar,DSformer,zhu2024fcnet,HybridZheng,DBLP:conf/aaai/HuangSZCDZW24,DBLP:journals/pvldb/ChengCGZWYJ23}. It has been extensively applied in diverse domains, including economics~\cite{sezer2020financial,huang2022dgraph}, traffic~\cite{wu2024autocts++,wu2024fully,DBLP:journals/pvldb/FangPCDG21}, energy~\cite{HaoWang1,sun2022solar}, and AIOps~\cite{lin2024cocv,Monotonic,pan2023magicscaler,lin2025benchmarking}. 

\begin{figure}[!htbp]
  \centering
\includegraphics[width=0.91\linewidth]{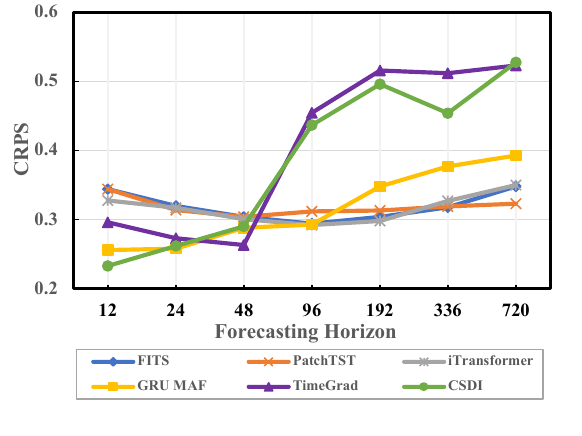}
  \caption{We compare three native probabilistic forecasting models including GRU MAF, TimeGrad, and CSDI with three point forecasting models equipped with distributional heads including FITS, PatchTST, and iTransformer on ETTh1. Longer forecasting horizons lead to rapid collapse of the CRPS metric (lower is better) on probabilistic forecasting models, even worse than point forecasting models. }
  \label{fig: intro}
\end{figure}

% However, the forecasting task is extremely challenging under the long-term setting.

% 概率时间序列预测（PTSF）是一项关键且广泛研究的任务，通过量化多个连续变量的随机时间演化，它能够为经济学、能源、交通等多个领域的决策提供重要支持。在这些实际应用中，一个迫切的需求是将预测时间延长到遥远的未来，即长序列概率时间序列预测（LPTSF), 这对于长期规划和预警非常有意义。然而，现有的方法大多是在短期问题设置下设计的，比如预测48步或更少~\cite{}。将其直接作用于长期预测任务，往往会产生较差的效果---see。作为一个实证例子，图（1）展示了在已有方法在真实数据集上的预测结果，其中Diffusion网络预测电力变压器站的小时温度，从短期（12个点，0.5天）到长期（480个点，20天）。当预测长度超过48个点（图（1b）中的实心星号）时，整体性能差距显著增大，均方误差（MSE）上升至不令人满意的水平，推理速度急剧下降，Diffusion模型开始失效。
In recent years, time series analysis has seen remarkable progress, with key tasks such as anomaly detection~\cite{D3R, liu2024elephant, miao2025parameter, hu2024multirc, wu2024catch}, classification~\cite{DBLP:conf/icde/YaoJC0GW24,DBLP:journals/pacmmod/0002Z0KGJ23}, and imputation~\cite{gao2025ssdts,escmtifs,wang2024spot,yu2025ginarp}, among others~\cite{wang2024entire,miao2024less,liu2025timecma,DBLP:conf/nips/HuangSZDWZW23,DBLP:journals/pvldb/YaoLJ00CW0G23}, gaining attention. Among these, Probabilistic Time Series Forecasting (PTSF) is a crucial and widely studied task. By quantifying the stochastic temporal evolutions of multiple continuous variables, it provides significant support for decision-making in various fields such as economics~\cite{sezer2020financial,huang2022dgraph}, traffic~\cite{wu2024autocts++,wu2024fully,pan2023ising,cirstea2022towards,DBLP:journals/pvldb/FangPCDG21,DBLP:conf/ijcai/YangGHT021,DBLP:journals/tkde/YangGY22}, energy~\cite{HaoWang1,guo2015ecomark,sun2022solar}, and AIOps~\cite{lin2024cocv,davidpvldb,Monotonic,pan2023magicscaler,lin2025benchmarking}. In these practical applications, an urgent need is to extend the prediction time to the distant future, known as Long-term Probabilistic Time Series Forecasting (LPTSF), which is highly meaningful for long-term planning and early warning. Most existing methods excell at short-term problem settings, such as predicting up to 48 steps or fewer~\cite{TimeGrad,TSDiff,normalizing-flow}, while directly applying these methods to long-term forecasting tasks often results in poor performance--see Figure~\ref{fig: intro}. 

% economics, energy, and transportation~\citep{li2024generative,dumas2022deep,huang2023metaprobformer}.
% multivariate time series forecasting~(MTSF)~\cite{yu2024ginar,DSformer,wang2024rose,zhu2024fcnet,HybridZheng,DBLP:conf/aaai/HuangSZCDZW24,DBLP:journals/pvldb/ChengCGZWYJ23} stands out as a critical and widely studied task. It has been extensively applied in diverse domains, including economics~\cite{sezer2020financial,huang2022dgraph}, traffic~\cite{wu2024autocts++,wu2024fully,cirstea2022towards,DBLP:journals/pvldb/FangPCDG21,kieu2024Team,DBLP:conf/ijcai/YangGHT021,DBLP:journals/tkde/YangGY22}, energy~\cite{HaoWang1,guo2015ecomark,sun2022solar}, and AIOps~\cite{lin2024cocv,davidpvldb,Monotonic,pan2023magicscaler,lin2025benchmarking}, highlighting its importance and impact. 

% As an empirical example, Figure (1) shows the forecasting results on a real-world dataset, where the Diffusion network predicts the hourly temperature of a power transformer station, from short-term (12 points, 0.5 days) to long-term (480 points, 20 days). When the prediction length exceeds 48 points (solid stars in Figure (1b)), the overall performance gap significantly widens, with the Mean Squared Error (MSE) rising to unsatisfactory levels, inference speed sharply declining, and the Diffusion model beginning to fail.

% 然而，在长期预测任务中，概率预测面临着诸多挑战。首先，时间序列通常具有高度的非线性特性，因为现实生活中的时间序列往往是非平稳的，且各个变量之间存在复杂的相互关系。这种非线性使得数据中的复杂模式呈现动态变化，难以通过简单的线性关系加以表达。随着时间步长的增加，序列的演化变得更加复杂，长时间跨度的预测要求概率模型能够有效捕捉这些非线性变化，并精确地建模其动态演化过程。其次，随着预测时间跨度的延长，长期预测的准确性和效率成为主要瓶颈。长期预测不仅需要更精确地建模时间序列中的长期依赖关系，还需保持高效的计算能力，以应对长序列的处理和快速训练的需求。
However, probabilistic forecasting faces numerous challenges in long-term forecasting tasks. \textit{First, the inherent nonlinearity of time series challenges probabilistic models in modeling dynamic evolution.} Due to factors such as non-stationarity and complex interdependencies between variables, time series typically exhibit nonlinear characteristics, complicating the construction of probabilistic models. Specifically, the nonlinearity makes it difficult for these models to derive a simple equation that precisely describes the state transition process. As a result, the uncertainties within the models are also hard to quantify, particularly in long-term forecasting tasks. \textit{Second, as the forecasting horizon extends, the accuracy and efficiency become major bottlenecks.} Longer foreacasting horizons lead to more intricate target distributions, which causes remarkable error accumulation. It also makes the diffusion-based~\cite{TSDiff,TimeGrad} or flow-based models~\cite{normalizing-flow} difficult to find a clear probabilistic transition path and inefficient to perform each iteration, which results in more computational consumption but worse performance.

% Long-term forecasting not only requires more precise modeling of the long-term dependencies in the time series but also needs to maintain high computational efficiency to handle long sequences and meet the demands for fast training.

% 从动力系统的角度出发，时间序列的非线性主要体现在难以从原始空间找到线性算子来刻画状态的转移过程，而自回归的非线性算子可能导致的问题是建模的误差累积且无法准确描述长步预测任务下的不确定性。Koopman Theory为非线性动力系统的建模提供了线性化方案：通过定义一组观测函数（可以有无穷个）来完善地刻画系统在每个时刻的所有观测量，并可以通过无穷维的线性Koopman Operator刻画观测量的转移过程。而Kalman Filter则为线性动力系统提供了预测精度矫正和不确定性建模，并且可以通过融合不同传感器的观测值进行实时矫正，适用于长步预测任务。这启发了我们将时间序列概率预测任务转化为：建模测量函数描述的线性动力系统的过程不确定性。

Since the nonlinearity in time series leads to the dynamic evolution of complex patterns, probabilistic models struggle to effectively capture these changes and accurately model their evolution. To tackle this thorny issue, Koopman Theory~\citep{KoopmanTheory} provides a linearization approach to transform the nonlinear time series into the space of measurement function, which is a theoretically infinite-dimeansional space characterizing all measurements of the dynamical system at each moment, and the transition process of these measurements can be captured by a linear Koopman Operator~\citep{lan2013linearization}. On the other hand, in order to accurately and efficiently model the process uncertainty and mitigate the error accumulation phenomenon in long-term forecasting, the Kalman Filter~\citep{KalmanFilter} provides a solution, which fuses observations from multiple sensors to extract the Kalman gains, to refine the prediction and process uncertainty. This inspires us to transform the probabilistic time series forecasting into modeling the process uncertainty of a linear dynamical system in the space of the measurement function.

% 在本文中，我们基于Koopman Theory和Kalman Filter提出了K^2VAE，一个基于条件生成的概率预测模型。我们将非线性的时间序列取patch划分保留足够语义后，映射到高维测量空间，并在其中应用动力系统进行预测和不确定性建模。我们首先基于eDMD和数据驱动的方法拟合了Koopman算子，构造出一个有偏的线性动力系统。 然后我们应用Kalman Filter，通过整合偏置的非线性信息作为控制输入，并将koopman算子构造的线性动力系统作为观测对象，迭代优化了预测精度并且建模了每步预测的不确定性。至此，我们成功描述了测量空间下线性动力系统预测的不确定性，这可以视为在一个具有明确语义的隐空间中建模了输入序列和高维预测序列（视为隐变量）之间的近似后验，这恰巧契合了VAE中变分推断的思想，自然地，我们从这个后验分布中采样，反向映射回原始空间构造隐变量与输出序列的似然分布，优雅地形成了K^2VAE的基本架构。通过大量实验，我们发现虽然理论上基于Diffusion的模型可以通过迭代去噪具备更强的分布拟合能力，但其易受时间序列非线性性质的影响，在长步概率预测任务上的实际表现较差，且耗时难以接受。而K^2VAE是单步生成模型，生成效率极高，且得益于Koopman Theory的非线性建模能力和Kalman Filter的误差修正以及不确定性建模能力，其能够从一个具备明确语义的后验分布中采样并且高质量和高效地完成长步和短步预测任务中的似然分布建模。贡献总结如下：

% 在本研究中，我们提出了$K^2$VAE，一种针对长序列概率时间序列预测（LPTSF）量身定制的条件生成概率预测模型。首先, 为了处理非线性并捕捉时间序列中的潜在动态，我们将时间序列分割为多个块（tokens），并将其转换为高维度的测量空间，在该空间中我们建模块之间的状态转移并评估过程的不确定性。遵循库普曼理论的假设，我们拟合库普曼算子，以构建一个“偏置”的线性动态系统。其次，为了得到更精准的长期预测性能。我们应用卡尔曼滤波器，将残差非线性信息作为控制输入，同时将偏置的线性动态系统视为观测值。通过预测并随后使用卡尔曼增益更新估计和过程不确定性，卡尔曼滤波器有效减轻了长期预测中的误差积累。与计算效率较低且可能容易受到时间序列非线性影响的扩散模型（Diffusion-based models）相比，$K^2$VAE是一个单步生成模型，具备强大的线性化和不确定性建模能力。主要贡献总结如下：

In this study, we propose $K^2$VAE, a generative probabilistic forecasting model tailored for LPTSF--see Figure~\ref{fig:data flow}. First, to handle the nonlinearity and capture the underlying dynamics in time series, we patchify the time series into tokens and model them through the KoopmanNet. The KoopmanNet provides a way to simulate the Koopman Theory, which transforms the nonlinear time series into latent measurements, and fit the Koopman Operator to construct a ``biased'' linear dynamical system easy to describe and model. Second, to achieve accurate long-term forecasting performance, we design a KalmanNet in a data-driven manner based on the principle of Kalman Filter. Through integrating the residual nonlinear information as control inputs, while treating the biased linear dynamical system as the observation, the KalmanNet predicts and updates to model and refine the uncertainty with Kalman gain. This effectively mitigates the error accumulation of the linear system and helps construct the variational distribution in the space of the measurement function with clear semantics. Compared to diffusion-based models or flow-based models with longer generation processes, which cause more computational consumption and memory overhead, $K^2$VAE adopts a VAE-based structure composed of lightweight but effective KoopmanNet and KalmanNet, which contributes to fast one-step generation and lower memory occupation. The contributions are summarized as follows:

% 1 我们提出了一个高效精准的概率预测生成模型K^2VAE，通过将非线性时间序列转化为隐空间中的线性动力系统，在其中进行精准地预测、修正和不确定性建模，从而构造了具有明确语义的近似后验分布。该方法在短步预测性能优秀的基础上，增强了模型在长步概率预测任务中的表现。

% 2 我们通过利用Koopman Theory充分利用非线性时序数据在测量空间中的潜在的高维线性动力特性，通过数据驱动的方式拟合高维测量函数的映射和线性Koopman算子，将时间序列建模为测量函数空间中的线性动力系统，并且简化了不确定性建模。

% 3 我们应用Kalman Filter整合非线性残差调整koopman Operator生成的有偏线性动力系统，通过逐步计算kalman增益，调整预测值和不确定性分布，可以缓解长步预测任务中不确定性的累积，并在koopman的测量空间中显式建模了不确定性，增强了模型在长步预测方面的能力。

% 4 K^2VAE同时兼顾了短步和长步的概率预测能力，在大量评测中取得SOTA。

\begin{itemize}[left=0.1cm]
    \item To address PTSF, we propose an efficient framework called $K^2$VAE. It transforms nonlinear time series into a linear dynamical system. Through predicting and refining the process uncertainty of the system, $K^2$VAE demonstrates strong generative capability and excells in both the short- and long-term probabilistic forecasting.
    
    \item To distengle the complex nonlinearity in the time series, we design a KoopmanNet to fully exploit the underlying linear dynamical characteristics in the space of measurement function, simplify the modeling, and thus contributing to high model efficiency.
    
    \item To mitigate the error accumulation in LPTSF, we devise a KalmanNet to model, and refine the prediction and uncertainty iteratively.
    
    \item Comprehensive experiments on both short- and long-term PTSF show that $K^2$VAE outperforms state-of-the-art baselines. Additionally, all datasets and code are avaliable at \url{https://github.com/decisionintelligence/K2VAE}.

\end{itemize}

% \item 通过利用库普曼理论，我们充分挖掘了非线性时间序列在测量函数空间中的潜在线性动态特性。

% \item 我们应用卡尔曼滤波器来建模线性系统在测量函数空间中的过程不确定性，从而有效减轻了长期预测中的误差积累。

% \item 在短期和长期概率预测的全面实验中，$K^2$VAE超越了现有的最先进基准方法。此外，所有数据集和代码可以通过 \url{https://github.com/decisionintelligence/K2VAE} 获取。

\section{Preliminaries}
\begin{figure*}[t!]
    \centering
    \includegraphics[width=0.95\linewidth]{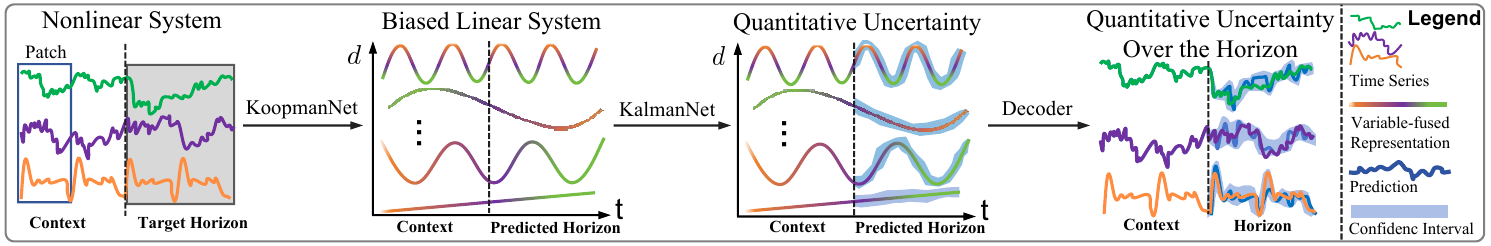}
    \caption{The data flow of $K^2$VAE. It models time series through the KoopmanNet, which constructs a biased linear system. Then the linear system is refined through the KalmanNet while the uncertainty is modeled. Finally, the target distributions over the horzion are predicted through the Decoder. }
    \label{fig:data flow}
\end{figure*}

% \subsection{Koopman Theory}
\textbf{Koopman Theory.} Koopman Theory~\citep{KoopmanTheory, lan2013linearization} is a widely used mathematical tool for dynamic system analysis, providing a way to linearize the nonlinear systems. For nonlinear system $x_{k+1} = f(x_k)$, where $x_k$ denotes system state and $f$ is a nonlinear function, it assumes that the system's state can be mapped into the space of measurement function $\psi$, where it can be modeled by an infinite-dimensional linear Koopman Operator $\mathcal{K}$:
\begin{small}\begin{gather}
    \psi(x_{k+1}) = \psi(f(x_k)) = \mathcal{K} \circ \psi(x_k)
\end{gather}\end{small}%
Koopman Theory helps understand the underlying dynamics of complex nonlinear systems and serves as a powerful tool to linearize them for ease of process.

% \subsection{Kalman Filter}
\textbf{Kalman Filter.} Kalman Filter~\citep{KalmanFilter,simon2001kalman} is a recursive algorithm used for estimating the state of a linear dynamic system. It works in two steps: first, it predicts the current state $x_k$ and uncertainty covariance matrix $\mathrm{P}_k$ based on the system's state transition equation; then, it updates the estimation by incorporating the difference between the measurement and prediction, known as Kalman gain $K_k$. The Kalman Filter effectively fuses information from multiple sensors to enhance estimation accuracy while modeling the uncertainty of the system.

% \subsection{VAE for Probabilistic Time Series Forecasting}
\textbf{VAE for Probabilistic Time Series Forecasting.} PTSF can be treated as a conditional generative task, i.e., generating forecasting horizon \(\hat{Y}=\left[\hat{x}_{T+1}, \hat{x}_{T+2}, \cdots, \hat{x}_{L}\right] \in \mathbb{R}^{N \times L} \) given context series \(X=\left[x_1,x_2,\cdots,x_T\right]\in \mathbb{R}^{N \times T}\), where $N$ denotes the number of variables, $T$ denotes the context length, and $L$ denotes the forecasting horizon. The objective is to model the conditional distribution \(\mathbb{P}(Y|X)\) and sample from it to obtain \(\hat{Y}\). When using Variational AutoEncoder~\citep{bVAE,pu2016variational}, the log-likehood objective is optimized through the Evidence Lower Bound~(\ref{elbo}) which is obtained by Jensen Inequality:
\begin{align}
&\mathcal{L}_{ELBO} = \notag \\
 &-\mathbb{E}[\log \mathbb{P}(Y|Z,X)] + D_{KL}(\mathbb{Q}(Z|X)||\mathbb{P}(Z|X)) \label{elbo}
\end{align}%
In our proposed \(K^2\)VAE, we meticulously construct the variational distribution \(\mathbb{Q}(Z|X)\), aligning it with the uncertainty of the dynamical system. This endows the latent space in \(K^2\)VAE with clear semantics, enhancing its generative capabilities in PTSF.
\section{Methodology}

\subsection{$K^2$VAE Architecture}
As demonstrated in Figure~\ref{fig: overview}, $K^2$VAE consists of four main components: Input Token Embedding, KoopmanNet, KalmanNet, and Decoder. The KoopmanNet and KalmanNet consitute the Encoder of $K^2$VAE. To facilitate comprehension, we present the Data Flow--see Figure~\ref{fig:data flow}. 

Overall, $K^2$VAE employs a meticulously designed pipeline to model the time series at the perspective of dynamic system. First, the Input Token Embedding module patchifys the time series into tokens. Then the KoopmanNet projects them into the space of measurement function, where the inherent nonlinearity and intricate joint distribution between variables are reconsidered for ease. Sequentially, the Koopman Operator is fitted and iterates over the first token to delineate a linear system. Obviously, the perfect measurement function which constructs an absolute linear system is the ideal objective, which means the series generated by the Koopman Operator is biased. We then design the KalmanNet to refine such biased linear system and model the uncertainty by outputting the covariance matrix of multi-dimensional state vector, which assigns the variational posterior distribution $\mathbb{Q}(Z|X)$ in the space of measurement function with clear semantics. 
\begin{figure*}[!htbp]
  \centering
  \includegraphics[width=0.95\linewidth]{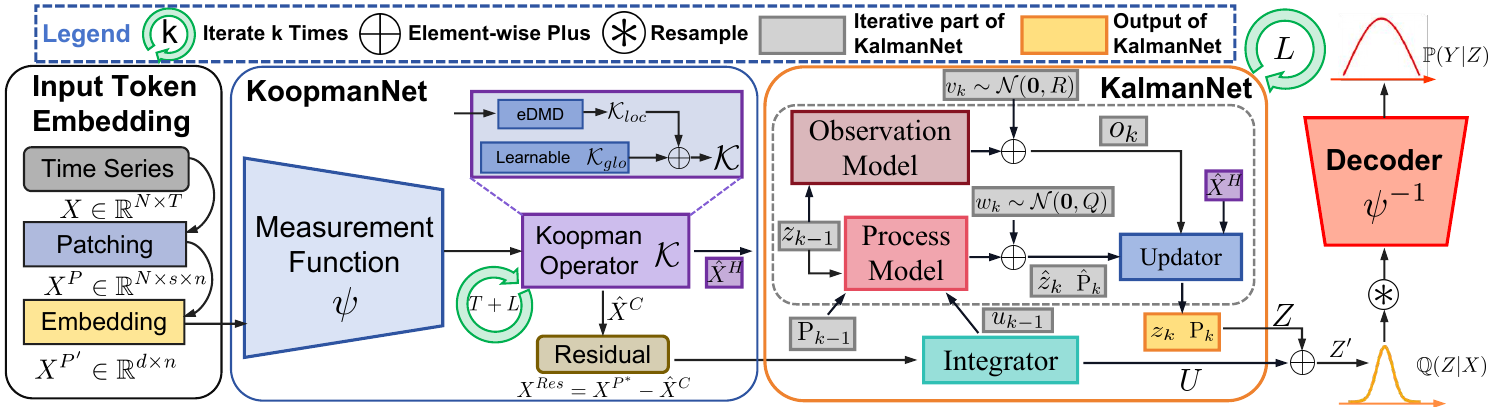}
  \caption{The architecture of $K^2$VAE. Input Token Embedding Module patchifys the time series into tokens and applies Embedding. Encoder Module consists of KoopmanNet and KalmanNet, transforming the tokens into linear system in the space of measurement function, refining it and modeling the process uncertainty as the variational distribution. After resampling from the variational distribution, Decoder module constructs the likehood distribution about the forecasting horizon.}
  \label{fig: overview}
\end{figure*}
The Decoder works as the inverse measurement function $\psi^{-1}$ to map the samples from $\mathbb{Q}(Z|X)$ to the original space, which also serves as the decoder of VAE and models the target distribution $\mathbb{P}(Y|Z,X)$ of the forecasting horizon.

\subsubsection{Input Token Embedding} 
Since Triformer~\cite{Triformer} first proposes the Patching technique, existing works~\citep{ nie2022time, wu2024catch} demonstrate that considering a patch as the ``token'' retains most semantic information and helps establish meaningful state transition procedure for autoregressive models. Our proposed $K^2$VAE also works like an autoregressive dynamic system to model the state transition procedure. Different from those Channel-Independent models which divides patches for each channel and projects them independently, we consider multivariate patches as tokens to implicitly model the cross-variable interaction during state transition. We divide the context series $X =\left[x_1,x_2,\cdots,x_T\right] \in \mathbb{R}^{N \times T}$ into non-overlapping patches:
\begin{gather}
    X^P = \left[x_1^P,x_2^P,\cdots,x_n^P\right] \in \mathbb{R}^{N \times s \times n},
\end{gather}%
where $s=T/n$ denotes the patch size, $n$ denotes the patch number, and $x_i^P \in \mathbb{R}^{N \times s}$ denotes a patch. Then $X^P$ are embeded into high-dimensional hidden space:
\begin{gather}
    X^{P^\prime} = \text{Projection}(\text{Flatten}(X^P)),
 \end{gather}%
where patches are first flattened into $\mathbb{R}^{(N \times s) \times n}$ and then mapped into embeddings $X^{P^\prime} \in \mathbb{R}^{d \times n}$ through a linear projection to fuse the variable information.

\subsubsection{$K^2$VAE Encoder}
\textbf{Linearizing with the KoopmanNet.} Since there exists variable-wise periodic misalignment or temporal non-stationarity in realistic multivariate time series, yielding non-linearity, $K^2$VAE applies Koopman Theory~\citep{KoopmanTheory} to construct the measurement function to project the system states into measurements which can be modeled as a linear system. Practically, we use a learnable MLP-based network to serve as the measurement function $\psi$:
\begin{gather}
    X^{P^\ast} = \psi(X^{P^\prime}) = \left[x^{P^\ast}_1,x^{P^\ast}_2,\cdots,x^{P^\ast}_n\right],
 \end{gather}%
where $X^{P^\ast} \in \mathbb{R}^{d \times n}$ denotes the projected tokens in the measurement space. To capture the transition rule, we utilize the one-step eDMD~\citep{eDMD,koopa} over $X^{P^\ast}$ to efficiently find the best fitted $\mathcal{K}_{loc}$:
\begin{gather}
    X^{P^\ast}_{back} = \left[x^{P^\ast}_1,x^{P^\ast}_2,\cdots,x^{P^\ast}_{n-1}\right],\\
    X^{P^\ast}_{fore} = \left[x^{P^\ast}_2,x^{P^\ast}_3,\cdots,x^{P^\ast}_{n}\right],\\
    \mathcal{K}_{loc} = X^{P^\ast}_{fore} (X^{P^\ast}_{back})^{\dagger},
\end{gather}%
where $(X^{P^\ast}_{back})^{\dagger}$ denotes the Moore-Penrose inverse of $X^{P^\ast}_{back}$. $\mathcal{K}_{loc}$ effectively captures the local transition rule in the space of current measurement function. However, when $\psi$ is underfitted, the low quality of the space may cause numerical instability or guide the model to converge in a wrong direction. To mitigate this issue and capture the global-shared dynamics, we introduce a learnable part $\mathcal{K}_{glo}$. Then we delineate the system through the Koopman Operator $\mathcal{K} = \mathcal{K}_{loc} + \mathcal{K}_{glo} $:
\begin{gather}
    \hat{X}^C = \left[\hat{x}^C_1,\hat{x}^C_2,\cdots,\hat{x}^C_n\right],\\
    \hat{X}^H = \left[\hat{x}^H_1,\hat{x}^H_2,\cdots,\hat{x}^H_m\right],\\
    \hat{x}^C_i = (\mathcal{K})^{i-1} x^{P^\ast}_1 ,\hat{x}^H_i = (\mathcal{K})^{i+n-1} x^{P^\ast}_1, 
 \end{gather}%
where $\hat{X}^C \in \mathbb{R}^{d \times n}$ denotes the reconstruction context generated by Koopman Operator $\mathcal{K}\in \mathbb{R}^{d\times d}$ and $\hat{X}^H \in \mathbb{R}^{d \times m}$ is the predicted horizon, $m = L / s$ means that predicting $L$ steps in the original space is equivalent to predicting $m$ steps in the space of measurement function.

\textbf{Modeling the Uncertainty with the KalmanNet.} Since we adopt a data-driven paradigm to model the measurement function $\psi$ and Koopman Operator $\mathcal{K}$, it exists bias between the generated $\hat{X}^C$ and $X^{P^\ast}$ during optimization, known as a biased linear system. Inspired by Kalman Filter~\citep{KalmanFilter,simon2001kalman} which is born to refine such biased linear sytem, we devise a KalmanNet to model and refine the uncertainty adaptively, aligning it with the variational distribution $\mathbb{Q}(Z|X)$ in the latent measurement space. Specifically, we first fully reuse the nonlinear residual through the Integrator based on an Encoder-Only Vanilla Transformer~\citep{vaswani2017attention}:
\begin{gather}
    X^{Res} = X^{P^\ast} - \hat{X}^C,\\
    U = \text{Integrator}(X^{Res}) = \left[u_1, u_2,\cdots, u_m\right],
 \end{gather}%
where $U \in \mathbb{R}^{d\times m}$ denotes the output integrated by the Integrator. We then construct the Process Model of KalmanNet, which describes the state transition process:
\begin{gather}
    z_k = Az_{k-1} + Bu_k + w_k,\\
    z_0 = x_n^{P^\ast},
 \end{gather}%
where $A\in \mathbb{R}^{d\times d}$ is the state transition matrix, $B \in \mathbb{R}^{d\times d}$ is the control input matrix, and $w_k \sim \mathcal{N}(\mathbf{0},Q)$ is the process noise and $Q$ is its covariance matrix. Sequentially, we construct the Observation Model:
\begin{gather}
    o_k = Hz_k + v_k,
 \end{gather}%
where $H \in \mathbb{R}^{d\times d}$ is the observation matrix and we treat the prediction $\hat{X}^H$ as the prior observation in Update Step~(\ref{obs}). $v_k\sim \mathcal{N}(\mathbf{0},R)$ is the observation noise and $R$ is its covariance matrix. Our goal is to reuse the information from the nonlinear residual, and integrate it into the linear system constructed by KoopmanNet, thus obtaining a more accurate linear system and modeling the uncertainty. In the KalmanNet, all the matrices are learnable. Additionally, we initialize the covariance matrices $Q$ and $R$ as identity matrices and use lower triangular matrices $L_Q$ and $L_R$ to keep the positive definiteness: $Q = L_Q L_Q^T$ and $R = L_R L_R^T$.

Then we conduct the Prediction Step and Update Step iteratively, the Prediction Step can be formulated as:
\begin{gather}
    \hat{z}_{k} = Az_{k-1} + Bu_k,\\
    \hat{\mathrm{P}}_k = A\mathrm{P}_{k-1}A^T + Q,
 \end{gather}%
where $\hat{z}_{k}$ is the predicted state and $\hat{\mathrm{P}}_k$ is the predicted covariance matrix of the process uncertainty. Then the Update Step measures the weight between observation and prediction through Kalman gain $K_k$ to refine the system:
\begin{gather}
    K_k = \hat{\mathrm{P}}_k H^T(H\hat{\mathrm{P}}_k H^T + R)^{-1},\\
    z_k =  \hat{z}_{k} + K_k(\hat{x}_k^H - H \hat{z}_{k}),\label{obs}\\
    \mathrm{P}_k = (I - K_k H) \hat{\mathrm{P}}_k,\label{cov update}
 \end{gather}%
where $z_k$ and $\mathrm{P}_k$ is the refined state vector and covariance matrix. We then obtain the refined predictions $Z=\left[z_1,z_2,\cdots, z_m \right]$ and covariance matrices of each token $\mathrm{P} = \left[\mathrm{P}_1,\mathrm{P}_2,\cdots,\mathrm{P}_m \right]$, which describes the temporal process uncertainty in the dynamical system. We show that the process also obeys the basic assumptions of Koopman Theory in Section~\ref{sec: theory}. To fully utilize the ability of the Integrator, we make a skip connection:
\begin{gather}
    Z^\prime = Z + U
 \end{gather}%
During the training process, the model leverages the Integrator to integrate nonlinear information and gradually adjust the topological structure of the measurement space. Optimized by $\mathcal{L}_{Rec}$~(\ref{recloss}), the deviation of the linear system constructed by the KoopmanNet gradually decreases, causing $ U \to \mathbf{0}$. This facilitates a linear dynamical system in the measurement space and gradually reduces dependence on the Integrator.

\subsubsection{$K^2$VAE Decoder}
After obtaining the prediction $Z^\prime$, and the covariance matrix $\mathrm{P}$ of process uncertainty, the variational distribution is formulated as $\mathbb{Q}(Z|X) = \mathcal{N}(Z^\prime, \mathrm{P})$. During training, we conduct reparameterization sampling from it to keep the ensure the propagation of the gradient. Finally, we utilize the Decoder to map the samples back to the original space and model the $\mathbb{P}(Y|Z)$ with an isotropic Gaussian distribution. Specifically, the Decoder consists of two same MLP structures as the inverse of the Koopman Encoder $\psi$, we formalize them as $\psi^{-1}_\mu$ and $\psi^{-1}_\sigma$:
\begin{gather}
    Z^{sample} = \text{Resample}(\mathbb{Q}(Z|X)),\\
    \mu = \psi^{-1}_\mu(Z^{sample}), \sigma = \psi^{-1}_\sigma(Z^{sample}), \\
    X^{Rec} = \psi^{-1}_\mu(\hat{X}^C),
 \end{gather}%
so that the $\mathbb{P}(Y|Z)=\mathcal{N}(\mu,\sigma)$ is modeled. We also map back the Koopman reconstruction  $\hat{X}^C$ from the measurement space to optimize the $\mathcal{L}_{Rec}$~(\ref{recloss}), which helps measurement function $\psi$ to build a linear system.

\subsubsection{Overall Learning Objective}
The overall learning objective is weightsumed by $\mathcal{L}_{ELBO}$ and $\mathcal{L}_{Rec}$:
\begin{align}
        \mathcal{L}_{ELBO} &= -\mathbb{E}[\log \mathbb{P}(Y|Z,X)] +\notag \\& \ \ \ \ \ \ \ \ \ \ \ \ \ \ \ \ \ \  \ \ D_{KL}(\mathbb{Q}(Z|X)||\mathbb{P}(Z|X)), \\
    \mathcal{L}_{Rec} &= ||X - X^{Rec} ||_2^2,\label{recloss}
 \end{align}%
where the $\mathcal{L}_{ELBO}$ ensures the fundmental mechanism of $K^2$VAE. The prior distribution is $\mathbb{P}(Z|X)=\mathcal{N}(\textbf{0},I)$, where we hope the linear system in measurement space converge to a stable state. $\mathcal{L}_{Rec}$ facilitates the linearization of the measurement space.

\subsection{Theoretical Analysis}
\label{sec: theory}
\subsubsection{The Stability of KalmanNet}
Since the proposed KalmanNet works in a data-driven manner, the floating-point operation error may cause the covariance matrix $\mathrm{P}$ losing positive definiteness, which often occurs in the step~(\ref{cov update}). To mitigate this, we utilize a numerically stable form for this step.
\begin{theorem}
\label{the: stability}
    The positive-definiteness of covariance matrix $P_k$ during the update step $\mathrm{P}_k = (I - K_kH_k)\hat{\mathrm{P}}_k$ can be retained through a numerically stable form:
 \begin{gather}
    \mathrm{P}_k = \frac{1}{2}(\mathrm{P}_k + \mathrm{P}_k^T),\label{1symmetry}\\
     \mathrm{P}_k^{dual} = (I - K_k H_k)\hat{\mathrm{P}}_k(I -K_k H_k)^T + K_k R_k K_k^T,\label{1postive-definite}
  \end{gather}%

\end{theorem}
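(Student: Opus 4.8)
The plan is to prove two things about the dual update~(\ref{1postive-definite}): that it is algebraically identical to the ordinary update $\mathrm{P}_k = (I - K_k H_k)\hat{\mathrm{P}}_k$ under the Kalman gain, so that replacing one by the other changes nothing in exact arithmetic, and that—unlike the ordinary form—it is \emph{structurally} positive definite, so that finite-precision roundoff cannot push it out of the positive-definite cone. The symmetrization~(\ref{1symmetry}) is then a cheap cleanup that restores symmetry.

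First I would verify the equivalence by direct expansion. Let $S_k = H_k\hat{\mathrm{P}}_k H_k^T + R_k$ denote the innovation covariance, so that the gain $K_k = \hat{\mathrm{P}}_k H_k^T S_k^{-1}$ obeys the defining identity $K_k S_k = \hat{\mathrm{P}}_k H_k^T$. Expanding~(\ref{1postive-definite}) gives
\begin{align}
\mathrm{P}_k^{dual} &= \hat{\mathrm{P}}_k - K_k H_k\hat{\mathrm{P}}_k - \hat{\mathrm{P}}_k H_k^T K_k^T \notag \\
&\quad\, + K_k\bigl(H_k\hat{\mathrm{P}}_k H_k^T + R_k\bigr)K_k^T ,
\end{align}
and the last summand equals $K_k S_k K_k^T = \hat{\mathrm{P}}_k H_k^T K_k^T$ by the gain identity. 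This cancels the third term, leaving $\mathrm{P}_k^{dual} = \hat{\mathrm{P}}_k - K_k H_k\hat{\mathrm{P}}_k = (I - K_k H_k)\hat{\mathrm{P}}_k$, which is exactly the update the theorem seeks to stabilize.

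Second I would read off positive definiteness from the structure of~(\ref{1postive-definite}). Writing $M_k = I - K_k H_k$, the dual form is $M_k\hat{\mathrm{P}}_k M_k^T + K_k R_k K_k^T$, a sum of two congruence transformations. The Cholesky parameterizations $Q = L_Q L_Q^T$ and $R = L_R L_R^T$ keep $R_k \succeq 0$ and make the prediction step $\hat{\mathrm{P}}_k = A\mathrm{P}_{k-1}A^T + Q$ positive definite whenever $Q \succ 0$ and $\mathrm{P}_{k-1}\succeq 0$. Hence for every $x \neq 0$ we have $x^T\mathrm{P}_k^{dual}x = x^T M_k\hat{\mathrm{P}}_k M_k^T x + x^T K_k R_k K_k^T x \geq 0$, and strict positivity follows once $M_k$ is invertible—which holds because the information-form identity $\mathrm{P}_k = (\hat{\mathrm{P}}_k^{-1} + H_k^T R_k^{-1}H_k)^{-1}$ forces $\mathrm{P}_k = M_k\hat{\mathrm{P}}_k$ to be nonsingular. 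The decisive point is that this inequality holds for the \emph{computed} matrix regardless of roundoff, since it is evaluated as an explicit sum of quadratic forms rather than as the single difference $(I - K_k H_k)$ applied once to $\hat{\mathrm{P}}_k$, where cancellation can destroy definiteness. Finally, the symmetrization~(\ref{1symmetry}) orthogonally projects the computed matrix onto the subspace of symmetric matrices, removing asymmetry from finite-precision arithmetic while leaving every quadratic form $x^T\mathrm{P}_k x$ unchanged, since $x^T\mathrm{P}_k^T x = x^T\mathrm{P}_k x$ for scalars.

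The step I expect to be the main obstacle is the algebraic cancellation in the first part: the cross term $-\hat{\mathrm{P}}_k H_k^T K_k^T$ must be matched exactly against the expansion of $K_k S_k K_k^T$ through the gain identity, and the transposes are easy to mishandle because $\hat{\mathrm{P}}_k$ is symmetric while the products $K_k H_k\hat{\mathrm{P}}_k$ are not. Once this identity is secured, the positive-definiteness claim is essentially immediate from the sum-of-congruences structure, and the symmetrization requires no further argument.
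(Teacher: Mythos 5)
Your proof of the central identity $\mathrm{P}_k^{dual}=(I-K_kH_k)\hat{\mathrm{P}}_k$ is the same direct-expansion argument the paper gives: expand the Joseph form, absorb the two trailing terms into $K_k(H_k\hat{\mathrm{P}}_kH_k^T+R_k)K_k^T$, and cancel against the cross term via the gain identity $K_kS_k=\hat{\mathrm{P}}_kH_k^T$ — the transpose bookkeeping you flag as the main obstacle is handled in exactly this way in the paper's Appendix proof. Where you go beyond the paper is the definiteness half: the paper only remarks informally, after the proof, that the dual form is a sum of two positive-definite terms, whereas you actually establish $\hat{\mathrm{P}}_k\succ 0$ from the Cholesky parameterization and the prediction step, and obtain strict positivity by proving $I-K_kH_k$ invertible via the information-form identity $\mathrm{P}_k=(\hat{\mathrm{P}}_k^{-1}+H_k^TR_k^{-1}H_k)^{-1}$; this is a correct and worthwhile sharpening of what the paper asserts. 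One small inconsistency to fix: you write that the parameterization keeps $R_k\succeq 0$, but your invertibility step uses $R_k^{-1}$, so you need $R_k\succ 0$ — which $R=L_RL_R^T$ guarantees only when $L_R$ has nonzero diagonal entries (satisfied at the paper's identity initialization, but worth stating as an explicit assumption).
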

where (\ref{1symmetry}) ensures the symmetry, (\ref{1postive-definite}) stabilizes the positive-definiteness by decomposing the formula into the sum of two positive definite terms, which better ensures positive definiteness during floating operation. 
\subsubsection{The Convergence of $K^2$VAE}
Since $K^2$VAE models a linear dynamical system in the measurement space where the Koopman Operator serves as the state transition equation, we hope that the convergence state of the KalmanNet does not violate the assumptions of Koopman Theory. In $K^2$VAE, we meticulously design the KalmanNet by making it gradually converge to the Koopman Operator in the forecasting horizon.
\begin{theorem}
\label{the: convergence}
 When $U \to \mathbf{0}$, the state transition equation of the KalmanNet in $K^2$VAE gradually converges to the Koopman Operator.
\end{theorem}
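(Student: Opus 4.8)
The plan is to show that once $U \to \mathbf{0}$ every source of ``non-Koopman'' dynamics in the KalmanNet recursion disappears, leaving a state transition that is exactly multiplication by $\mathcal{K}$. First I would substitute $u_k = 0$ into the Prediction and Update steps: the control term $Bu_k$ vanishes, so the prediction reduces to $\hat{z}_k = A z_{k-1}$, and composing it with the update $z_k = (I - K_k H)\hat{z}_k + K_k \hat{x}_k^H$ yields the closed recursion
\[
z_k = (I - K_k H) A z_{k-1} + K_k \hat{x}_k^H .
\]
This isolates the two operators that govern the refined state, the homogeneous part $(I - K_k H)A$ and the observation-forcing part $K_k \hat{x}_k^H$, both of which must ultimately be tied to $\mathcal{K}$.

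The second step is to exploit the equivalence $U \to \mathbf{0} \iff X^{Res} = X^{P^\ast} - \hat{X}^C \to \mathbf{0}$, which is precisely what $\mathcal{L}_{Rec}$ enforces. When the residual vanishes the Koopman reconstruction is exact, so the measurement trajectory obeys the one-step rule on the context and, under the same operator on the horizon, the observations satisfy $\hat{x}_k^H = \mathcal{K}\hat{x}_{k-1}^H$ with $\hat{x}_k^H = \mathcal{K}^{k+n-1} x_1^{P^\ast}$. In particular the initialization aligns, since $z_0 = x_n^{P^\ast} = \hat{x}_n^C = \mathcal{K}^{n-1} x_1^{P^\ast} = \hat{x}_0^H$ once $\hat{X}^C \to X^{P^\ast}$, which seeds an induction.

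The third step is an induction on $k$ with hypothesis $z_{k-1} = \hat{x}_{k-1}^H$. Assuming it, I would argue that the innovation $\hat{x}_k^H - H\hat{z}_k$ collapses and the update returns the Koopman-propagated observation, giving $z_k = \hat{x}_k^H = \mathcal{K}\hat{x}_{k-1}^H = \mathcal{K} z_{k-1}$; this identifies the effective state transition of the KalmanNet with $\mathcal{K}$ and closes the induction, establishing the claimed convergence.

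The \emph{hard part} is exactly this inductive step: showing the Kalman correction does not perturb the Koopman trajectory. Feeding $z_{k-1} = \hat{x}_{k-1}^H$ and $\hat{x}_k^H = \mathcal{K} z_{k-1}$ into the recursion above gives $z_k = \big[(I - K_k H)A + K_k \mathcal{K}\big] z_{k-1}$, so the identity $z_k = \mathcal{K} z_{k-1}$ holds only when $(I - K_k H)A + K_k \mathcal{K} = \mathcal{K}$, i.e.\ when the learned $A$, $H$, and the induced gain $K_k$ become consistent with the exact linear system (for instance $A \to \mathcal{K}$ and $H \to I$, so that the innovation vanishes and the correction term is redundant). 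I would justify this by arguing that in the exact-linear regime the process model can only be loss-optimal when its deterministic transition reproduces the Koopman dynamics, and then bound the deviation of the composite operator $(I - K_k H)A + K_k \mathcal{K}$ from $\mathcal{K}$ in terms of $\|U\|$ and $\|X^{Res}\|$; controlling this residual operator is the quantitative core to which the whole statement reduces.
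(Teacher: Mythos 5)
Your proposal is correct in substance but takes a genuinely different, and in fact sharper, route than the paper. The paper's proof is a two-regime heuristic: after dropping $Bu_k$ it (i) asserts, without further argument, that in the general case $A$ ``can be treated as'' a fine-tuned Koopman Operator enhanced by the Kalman gain, and (ii) verifies exact equivalence only in the degenerate limit $H \to I$, $A \to \mathbf{0}$, $R \to \mathbf{0}$, where $K_k \to I$, the Predict step collapses to $\hat{z}_k = \mathbf{0}$, the Update step returns $z_k = \hat{x}_k^H$, and since $\hat{x}_k^H = \mathcal{K}^{k+n-1}x_1^{P^\ast}$ by construction the system is literally $z_k = \mathcal{K}z_{k-1}$ without needing any initialization argument. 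You instead close the recursion to $z_k = (I - K_kH)Az_{k-1} + K_k\hat{x}_k^H$, align the base case $z_0 = x_n^{P^\ast} = \mathcal{K}^{n-1}x_1^{P^\ast}$ once the reconstruction is exact, and induct along the Koopman trajectory, thereby reducing the theorem to the explicit consistency identity $(I - K_kH)A + K_k\mathcal{K} = \mathcal{K}$. This subsumes the paper's argument: its degenerate regime is exactly one solution of your identity (with $A = \mathbf{0}$, $H = I$, $K_k = I$), and your regime $A \to \mathcal{K}$, $H \to I$ is another; moreover your formulation makes transparent what the paper glosses over with the ``fine-tuned operator'' remark, namely that $U \to \mathbf{0}$ alone does not force the learned matrices into a consistent regime, so some additional optimality or limiting assumption is genuinely required. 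What your route would buy, if completed, is a quantitative handle --- bounding the deviation of the composite operator $(I - K_kH)A + K_k\mathcal{K}$ from $\mathcal{K}$ in terms of $\|U\|$ and $\|X^{Res}\|$ --- though you leave that bound and the loss-optimality argument for $A \to \mathcal{K}$ as sketches, which still places your write-up at or slightly above the paper's own level of rigor. Two small caveats: your claimed equivalence $U \to \mathbf{0} \iff X^{Res} \to \mathbf{0}$ overstates matters, since the Integrator is a Transformer and a vanishing output does not formally imply a vanishing input (the paper makes the same informal identification, reading $u_k \to \mathbf{0}$ as ``little bias,'' but your induction genuinely uses the implication $U \to \mathbf{0} \Rightarrow X^{Res} \to \mathbf{0}$ for the base case, so you should state it as an assumption); and note that the paper's degenerate case sidesteps the base-case issue entirely because $z_k = \hat{x}_k^H$ holds there independently of $z_{k-1}$.
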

We provide the proof of Theorem~\ref{the: stability}--\ref{the: convergence} in Appendix~\ref{app: theory}.
% \begin{proof}
% Under the assumptions of Koopman Theory, $u_k\to \mathbf{0}$ means the linear system constructed by Koopman Operator has little bias in the current measurement space, which leads to high performance in prediction. Meanwhile, the Predict and Update Steps of $z_t$ are converted to:
% \begin{align}
%     \textit{Predict:} \ \ \ \hat{z}_k &= Az_{k-1}\\
%     \textit{Update:} \ \ \ K_k &= \hat{\mathrm{P}}_k H^T(H\hat{\mathrm{P}}_k H^T + R)^{-1},\\
%     z_k &=  \hat{z}_{k} + K_k(\hat{x}_k^H - H \hat{z}_{k})
%  \end{align}%
% In this basic case, the state transition equation obeys the basic assumptions of Koopman Theory and $A$ can be treated as a ``fine-tuned'' Koopman Operator $\mathcal{K}$ which is enhanced by the Kalman gain and has stronger generalization ability.

% We then consider the special case that KalmanNet fully relys on the observation $\hat{x}_k^H$ from the linear system constructed by Koopman Operator $\mathcal{K}$, thus $H \to I, A \to \mathbf{0}, R \to \mathbf{0}$, the Predict and Update Steps are converted to:
% \begin{align}
%      \textit{Predict:} \ \ \ \hat{z}_k &= \mathbf{0}\\
%     \textit{Update:} \ \ \ z_k &= \hat{x}_k^H
%  \end{align}%
% The system constructed by KalmanNet can be treated as $z_t = \mathcal{K}z_{t-1}$ equivalent to the original Koopman Operator.
% \end{proof}

\begin{table*}[t]
\centering
\caption{Statistical information of the datasets.}
\resizebox{0.9\textwidth}{!}{
\begin{tabular}{l|c|cccrl}
\toprule
 \textbf{Horizon} & \textbf{Dataset} & \textbf{\#var.} & \textbf{range} & \textbf{freq.} & \textbf{timesteps} & \textbf{Description}  \\
 \midrule
 \multirow{7}{*}{\textbf{Long-term}} &  ETTh1/h2-L  & 7 & $\mathbb{R}^+$ & H & 17,420 & Electricity transformer temperature per hour \\
 &  ETTm1/m2-L    & 7 & $\mathbb{R}^+$ & 15min & 69,680 & Electricity transformer temperature every 15 min  \\
 &  Electricity-L & 321 & $\mathbb{R}^+$ & H & 26,304 & Electricity consumption (Kwh) \\
 &  Traffic-L     & 862 & (0,1) & H & 17,544 & Road occupancy rates \\
 &  Exchange-L    & 8 & $\mathbb{R}^+$ & Busi. Day & 7,588 & Daily exchange rates of 8 countries \\
 &  ILI-L         & 7 & (0,1) & W & 966 & Ratio of patients seen with influenza-like illness \\
 &  Weather-L     & 21 & $\mathbb{R}^+$ & 10min & 52,696 & Local climatological data \\
 \midrule
 \multirow{6}{*}{\textbf{Short-term}}   & ETTh1/h2-S & 7 & $\mathbb{R}^+$ & H & 17,420 & Electricity transformer temperature per hour \\
 &  ETTm1/m2-S    & 7 & $\mathbb{R}^+$ & 15min & 69,680 & Electricity transformer temperature every 15 min  \\
 &  Exchange-S & 8   & $\mathbb{R}^+$ & Busi. Day & 6,071 & Daily exchange rates of 8 countries \\
  &  Solar-S    & 137 & $\mathbb{R}^+$ & H & 7,009 & Solar power production records  \\
  &  Electricity-S    & 370   & $\mathbb{R}^+$ & H & 5,833 & Electricity consumption \\
  &  Traffic-S    & 963   & (0,1) & H & 4,001 & Road occupancy rates  \\

\bottomrule
\end{tabular}}
\label{1 Dataset Summary}
\end{table*}

\begin{table*}[!htbp]
\caption{Comparison on short-term probabilistic forecasting scenarios across eight real-world datasets. Lower CRPS or
NMAE values indicate better predictions. The means and standard errors are based on 5 independent runs of retraining and evaluation. \textcolor{red}{\textbf{Red}}: the best, \textcolor{blue}{\underline{Blue}}: the 2nd best.}
    \centering
    \resizebox{0.97\textwidth}{!}{
    \begin{tabular}{c|c|ccccccccc}
    \toprule
        Model & Metric & Exchange-S & Solar-S & Electricity-S & Traffic-S & ETTh1-S & ETTh2-S & ETTm1-S & ETTm2-S  \\ \midrule
        \multirow{2}{*}{FITS} & CRPS & $0.012\scriptstyle{\pm 0.002}$ & $0.516\scriptstyle{\pm 0.011} $& $0.068\scriptstyle{\pm 0.003}$ & $0.298\scriptstyle{\pm 0.022}$ & $0.320\scriptstyle{\pm 0.017}$ & $0.212\scriptstyle{\pm 0.012}$ & $0.193\scriptstyle{\pm 0.005}$ & $0.199\scriptstyle{\pm 0.003}$  \\
        ~ & NMAE & $0.017\scriptstyle{\pm 0.003}$ & $0.701\scriptstyle{\pm 0.014}$ & $0.092\scriptstyle{\pm 0.004}$ & $0.392\scriptstyle{\pm 0.028}$ & $0.423\scriptstyle{\pm 0.033}$ & $0.278\scriptstyle{\pm 0.009}$ & $0.249\scriptstyle{\pm 0.007}$ & $0.260\scriptstyle{\pm 0.011}$ \\ \hline
        \multirow{2}{*}{PatchTST} & CRPS & $0.052\scriptstyle{\pm 0.016}$ & $0.491\scriptstyle{\pm 0.008}$ & $0.063\scriptstyle{\pm 0.003}$ & $0.278\scriptstyle{\pm 0.018}$ & $0.314\scriptstyle{\pm 0.022}$ & $0.207\scriptstyle{\pm 0.006}$ & $0.234\scriptstyle{\pm 0.011}$ & $0.212\scriptstyle{\pm 0.018}$  \\
        ~ & NMAE & $0.069\scriptstyle{\pm 0.013}$ & $0.663\scriptstyle{\pm 0.010}$ & $0.085\scriptstyle{\pm 0.006}$ & $0.363\scriptstyle{\pm 0.023}$ & $0.407\scriptstyle{\pm 0.030}$ & $0.260\scriptstyle{\pm 0.009}$ & $0.271\scriptstyle{\pm 0.009}$ & $0.257\scriptstyle{\pm 0.011}$  \\ \hline
        \multirow{2}{*}{iTransformer} & CRPS & $0.059\scriptstyle{\pm 0.018}$ & $0.504\scriptstyle{\pm 0.012}$ & $0.066\scriptstyle{\pm 0.004}$ & $0.244\scriptstyle{\pm 0.011}$ & $0.317\scriptstyle{\pm 0.020}$ & $0.219\scriptstyle{\pm 0.008}$ & $0.254\scriptstyle{\pm 0.012}$ & $0.201\scriptstyle{\pm 0.018}$  \\
        ~ & NMAE & $0.081\scriptstyle{\pm 0.022}$ & $0.695\scriptstyle{\pm 0.017}$ & $0.087\scriptstyle{\pm 0.006}$ & $0.319\scriptstyle{\pm 0.019}$ & $0.408\scriptstyle{\pm 0.028}$ & $0.276\scriptstyle{\pm 0.017}$ & $0.291\scriptstyle{\pm 0.017}$ & $0.242\scriptstyle{\pm 0.009}$ 
        \\ \hline
        \multirow{2}{*}{Koopa} & CRPS & $0.012\scriptstyle{\pm 0.001}$ & $0.545\scriptstyle{\pm 0.016}$ & $0.085\scriptstyle{\pm 0.014}$ & $0.253\scriptstyle{\pm 0.018}$ & $0.326\scriptstyle{\pm 0.013}$ & $0.211\scriptstyle{\pm 0.019}$ & $0.288\scriptstyle{\pm 0.022}$ & $0.220\scriptstyle{\pm 0.015}$  \\
        ~ & NMAE & $0.015\scriptstyle{\pm 0.002}$ & $0.742\scriptstyle{\pm 0.022}$ & $0.112\scriptstyle{\pm 0.019}$ & $0.330\scriptstyle{\pm 0.019}$ & $0.423\scriptstyle{\pm 0.017}$ & $0.266\scriptstyle{\pm 0.022}$ & $0.329\scriptstyle{\pm 0.026}$ & $0.278\scriptstyle{\pm 0.022}$  \\ \hline
        \multirow{2}{*}{TSDiff} & CRPS & $0.077\scriptstyle{\pm 0.019}$ & $0.568\scriptstyle{\pm 0.015}$ & $0.111\scriptstyle{\pm 0.013}$ & $0.189\scriptstyle{\pm 0.009}$ & $0.304\scriptstyle{\pm 0.016}$ & $0.204\scriptstyle{\pm 0.006}$ &$0.209\scriptstyle{\pm 0.013}$ & $\textcolor{blue}{\underline{0.124}\scriptstyle{\pm 0.008}}$  \\ 
        ~ & NMAE & $0.096\scriptstyle{\pm 0.024}$ & $0.635\scriptstyle{\pm 0.012}$ & $0.115\scriptstyle{\pm 0.018}$ & $0.206\scriptstyle{\pm 0.011}$ & $0.400\scriptstyle{\pm 0.025}$ & $0.272\scriptstyle{\pm 0.015}$ & $0.276\scriptstyle{\pm 0.008}$ & $\textcolor{blue}{\underline{0.162}\scriptstyle{\pm 0.008}}$  \\ \hline
        \multirow{2}{*}{$D^3$VAE} & CRPS & $0.011\scriptstyle{\pm 0.002}$ & $0.769\scriptstyle{\pm 0.029}$ & $0.071\scriptstyle{\pm 0.009}$ & $0.143\scriptstyle{\pm 0.008}$ & $0.324\scriptstyle{\pm 0.019}$ & $0.216\scriptstyle{\pm 0.015}$ &$0.198\scriptstyle{\pm 0.015}$ & $0.303\scriptstyle{\pm 0.024}$  \\ 
        ~ & NMAE & $0.012\scriptstyle{\pm 0.002}$ & $0.998\scriptstyle{\pm 0.049}$ & $0.092\scriptstyle{\pm 0.013}$ & $0.178\scriptstyle{\pm 0.013}$ & $0.410\scriptstyle{\pm 0.016}$ & $0.267\scriptstyle{\pm 0.018}$ & $0.250\scriptstyle{\pm 0.018}$ & $0.378\scriptstyle{\pm 0.031}$  \\ \hline
        \multirow{2}{*}{GRU NVP} & CRPS & $0.019\scriptstyle{\pm 0.006}$ & $0.530\scriptstyle{\pm 0.008}$ & $0.062\scriptstyle{\pm 0.003}$ & $0.168\scriptstyle{\pm 0.008}$ & $0.398\scriptstyle{\pm 0.034}$ & $0.309\scriptstyle{\pm 0.023}$ & $0.455\scriptstyle{\pm 0.029}$ & $0.276\scriptstyle{\pm 0.014}$  \\ 
        ~ & NMAE & $0.024\scriptstyle{\pm 0.007}$ & $0.670\scriptstyle{\pm 0.011}$ & $0.081\scriptstyle{\pm 0.006}$ & $0.209\scriptstyle{\pm 0.013}$ & $0.477\scriptstyle{\pm 0.040}$ & $0.375\scriptstyle{\pm 0.024}$ & $0.584\scriptstyle{\pm 0.047}$ & $0.349\scriptstyle{\pm 0.028}$  \\ \hline
        \multirow{2}{*}{GRU MAF} & CRPS & $0.012\scriptstyle{\pm 0.003}$ & $0.486\scriptstyle{\pm 0.007}$ & $0.056\scriptstyle{\pm 0.002}$ & $0.144\scriptstyle{\pm 0.022}$ & $\textcolor{blue}{\underline{0.258}\scriptstyle{\pm 0.013}}$ & $0.160\scriptstyle{\pm 0.008}$ & $0.151\scriptstyle{\pm 0.009}$ & $0.146\scriptstyle{\pm 0.011}$  \\ 
        ~ & NMAE & $0.016\scriptstyle{\pm 0.002}$ & $0.603\scriptstyle{\pm 0.009}$ & $0.073\scriptstyle{\pm 0.004}$ & $0.182\scriptstyle{\pm 0.029}$ & $\textcolor{blue}{\underline{0.326}\scriptstyle{\pm 0.016}}$ & $0.208\scriptstyle{\pm 0.003}$ & $0.198\scriptstyle{\pm 0.004}$ & $0.193\scriptstyle{\pm 0.008}$  \\ \hline
        \multirow{2}{*}{Trans MAF} & CRPS & $0.012\scriptstyle{\pm 0.001} $& $0.442\scriptstyle{\pm 0.011}$ & $0.054\scriptstyle{\pm 0.002}$ & $0.133\scriptstyle{\pm 0.004}$ & $0.309\scriptstyle{\pm 0.009}$ & $0.200\scriptstyle{\pm 0.012}$ & $\textcolor{blue}{\underline{0.139}\scriptstyle{\pm 0.005}}$ & $0.180\scriptstyle{\pm 0.010}$  \\ 
        ~ & NMAE & $0.016\scriptstyle{\pm 0.001}$ & $0.577\scriptstyle{\pm 0.014}$ & $0.071\scriptstyle{\pm 0.003}$ & $0.160\scriptstyle{\pm 0.006}$ & $0.400\scriptstyle{\pm 0.011}$ & $0.256\scriptstyle{\pm 0.009}$ & $\textcolor{blue}{\underline{0.162}\scriptstyle{\pm 0.006}}$ & $0.224\scriptstyle{\pm 0.009}$  \\ \hline
        \multirow{2}{*}{TimeGrad} & CRPS & $\textcolor{blue}{\underline{0.009}\scriptstyle{\pm 0.001}}$& $0.465\scriptstyle{\pm 0.016}$ & $0.057\scriptstyle{\pm 0.002}$ & $\textcolor{blue}{\underline{0.130}\scriptstyle{\pm 0.005}}$ & $0.273\scriptstyle{\pm 0.007}$ &$0.184\scriptstyle{\pm 0.006}$ & $0.186\scriptstyle{\pm 0.003}$ & $0.148\scriptstyle{\pm 0.004}$  \\
        ~ & NMAE & $\textcolor{blue}{\underline{0.012}\scriptstyle{\pm 0.002}}$& $0.609\scriptstyle{\pm 0.015}$ & $0.073\scriptstyle{\pm 0.004}$ & $\textcolor{red}{\textbf{0.155}\scriptstyle{\pm 0.007}}$ & $0.356\scriptstyle{\pm 0.013}$ & $0.224\scriptstyle{\pm 0.014}$ &$ 0.246\scriptstyle{\pm 0.007}$ & $0.189\scriptstyle{\pm 0.006}$  \\ \hline
        \multirow{2}{*}{CSDI} & CRPS & $0.009\scriptstyle{\pm 0.001}$ & $\textcolor{blue}{\underline{0.392}\scriptstyle{\pm 0.006}}$ & $\textcolor{red}{\textbf{0.051}\scriptstyle{\pm 0.001}}$ & $0.147\scriptstyle{\pm 0.014}$ & $0.262\scriptstyle{\pm 0.012}$ & $\textcolor{blue}{\underline{0.133}\scriptstyle{\pm 0.006}}$ & $0.140\scriptstyle{\pm 0.012}$ & $0.144\scriptstyle{\pm 0.018}$  \\ 
        ~ & NMAE & $0.013\scriptstyle{\pm 0.001}$& $\textcolor{blue}{\underline{0.533}\scriptstyle{\pm 0.007}}$& $\textcolor{red}{\textbf{0.066}\scriptstyle{\pm 0.001}}$ & $0.175\scriptstyle{\pm 0.013}$ & $0.339\scriptstyle{\pm 0.009}$ & $\textcolor{blue}{\underline{0.161}\scriptstyle{\pm 0.013}}$ & $0.169\scriptstyle{\pm 0.021}$ & $0.181\scriptstyle{\pm 0.024}$  \\ \hline
        \multirow{2}{*}{$K^2$VAE} & CRPS & $\textcolor{red}{\textbf{0.009}\scriptstyle{\pm 0.001}}$& $\textcolor{red}{\textbf{0.367}\scriptstyle{\pm 0.005}}$ & $\textcolor{blue}{\underline{0.053}\scriptstyle{\pm 0.002}}$ & $\textcolor{red}{\textbf{0.129}\scriptstyle{\pm 0.004}}$ & $\textcolor{red}{\textbf{0.256}\scriptstyle{\pm 0.008}}$ & $\textcolor{red}{\textbf{0.128}\scriptstyle{\pm 0.006}}$ & $\textcolor{red}{\textbf{0.135}\scriptstyle{\pm 0.008}}$ & $\textcolor{red}{\textbf{0.122}\scriptstyle{\pm 0.008}}$  \\ 
        ~ & NMAE & $\textcolor{red}{\textbf{0.009}\scriptstyle{\pm 0.001}}$ & $\textcolor{red}{\textbf{0.480}\scriptstyle{\pm 0.008}}$ & $\textcolor{blue}{\underline{0.068}\scriptstyle{\pm 0.002}}$& $\textcolor{blue}{\underline{0.157}\scriptstyle{\pm 0.007}}$ & $\textcolor{red}{\textbf{0.312}\scriptstyle{\pm 0.008}}$ & $\textcolor{red}{\textbf{0.140}\scriptstyle{\pm 0.007}}$ & $\textcolor{red}{\textbf{0.152}\scriptstyle{\pm 0.007}}$ & $\textcolor{red}{\textbf{0.146}\scriptstyle{\pm 0.009}}$ \\\bottomrule
    \end{tabular}}
    \label{tab: short-term}
\end{table*}

\section{Experiments}
In this section, we provide empirical results to show the strong performance of $K^2$VAE against state-of-art baselines on both short- and long-term probabilistic forecasting tasks. We also analyze the model efficiency and the key parameters of $K^2$VAE as the proof of architectural superiority.

\begin{table*}[!htbp]
\caption{Comparison on long-term probabilistic forecasting (forecasting horizon L=720) scenarios across nine real-world datasets. Lower CRPS or NMAE values indicate better predictions. The means and standard errors are based on 5 independent runs of retraining and evaluation. \textcolor{red}{\textbf{Red}}: the best, \textcolor{blue}{\underline{Blue}}: the 2nd best. The full results of all four horizons {96, 192, 336, 720} are listed in Table~\ref{tab:long_term_fore_CRPS}, \ref{tab:long_term_fore_NMAE} in Appendix~\ref{app: full results}.}
    \centering
    \resizebox{0.97\textwidth}{!}{
    \begin{tabular}{c|c|cccccccccc}
    \toprule
        Model & Metric & ETTm1-L & ETTm2-L & ETTh1-L & ETTh2-L & Electricity-L & Traffic-L & Weather-L & Exchange-L & ILI-L \\ \midrule
        \multirow{2}{*}{FITS} & CRPS &$0.305\scriptstyle\pm0.024$
        &$0.449\scriptstyle\pm0.034$	&$0.348\scriptstyle\pm0.025$	&$0.314\scriptstyle\pm0.022$	&$0.115\scriptstyle\pm0.024$	&$0.374\scriptstyle\pm0.004$	&$0.267\scriptstyle\pm0.003$	& $\textcolor{blue}{\underline{0.074}\scriptstyle\pm0.011}$
        &$0.211 \scriptstyle{\pm 0.011}$ \\
        ~ & NMAE &$0.406\scriptstyle\pm0.072$	&$0.540\scriptstyle\pm0.052$	&$0.468\scriptstyle\pm0.012$	&$0.401\scriptstyle\pm0.022$	&$0.149\scriptstyle\pm0.012$	&$0.453\scriptstyle\pm0.022$	&$0.317\scriptstyle\pm0.021$	
        &$\textcolor{blue}{\underline{0.097}\scriptstyle\pm0.011}$&$0.245\scriptstyle\pm0.017$  \\ \hline
        \multirow{2}{*}{PatchTST} & CRPS &$0.304\scriptstyle\pm0.029$	&$\textcolor{blue}{\underline{0.229}\scriptstyle\pm0.036}$	&$0.323\scriptstyle\pm0.020$	&$0.304\scriptstyle\pm0.018$	&$0.127\scriptstyle\pm0.015$	&$0.214\scriptstyle\pm0.001$	&$0.142\scriptstyle\pm0.005$	&$0.097\scriptstyle\pm0.007$	
        &$0.233 \scriptstyle{\pm 0.019}$ \\
        ~ & NMAE & $0.382\scriptstyle\pm0.066$	&$\textcolor{blue}{\underline{0.288}\scriptstyle\pm0.034}$	&$0.428\scriptstyle\pm0.024$	&$0.371\scriptstyle\pm0.021$	&$0.164\scriptstyle\pm0.024$	&$\textcolor{blue}{\underline{0.253}\scriptstyle\pm0.012}$	&$0.152\scriptstyle\pm0.029$	&$0.126\scriptstyle\pm0.001$	&$0.287\scriptstyle\pm0.023$ \\ \hline
        \multirow{2}{*}{iTransformer} & CRPS & $0.455\scriptstyle\pm0.021$	&$0.311\scriptstyle\pm0.024$	&$0.350\scriptstyle\pm0.019$	&$0.542\scriptstyle\pm0.015$	&$0.109\scriptstyle\pm0.044$	&$0.284\scriptstyle\pm0.004$	&$0.133\scriptstyle\pm0.004$	&$0.087\scriptstyle\pm0.023$	
        &$0.222 \scriptstyle{\pm 0.020}$ \\
        ~ & NMAE &$0.490\scriptstyle\pm0.038$	&$0.385\scriptstyle\pm0.042$	&$0.449\scriptstyle\pm0.022$	&$0.667\scriptstyle\pm0.012$	&$0.140\scriptstyle\pm0.009$	&$0.361\scriptstyle\pm0.030$	&$0.147\scriptstyle\pm0.019$	&$0.113\scriptstyle\pm0.015$	&$0.278\scriptstyle\pm0.017$ \\ \hline
        \multirow{2}{*}{Koopa} & CRPS & $\textcolor{blue}{\underline{0.295}\scriptstyle\pm0.027}$	&$0.233\scriptstyle\pm0.025$	&$\textcolor{blue}{\underline{0.318}\scriptstyle\pm0.009}$	&$\textcolor{blue}{\underline{0.293}\scriptstyle\pm0.026}$	&$0.113\scriptstyle\pm0.018$	&$0.358\scriptstyle\pm0.022$	&$0.140\scriptstyle\pm0.007$	&$0.091\scriptstyle\pm0.012$	
        &$0.228 \scriptstyle{\pm 0.022}$ \\
        ~ & NMAE &$\textcolor{blue}{\underline{0.377}\scriptstyle\pm0.037}$	&$0.290\scriptstyle\pm0.033$	&$\textcolor{blue}{\underline{0.412}\scriptstyle\pm0.008}$	&$\textcolor{blue}{\underline{0.286}\scriptstyle\pm0.042}$	&$0.149\scriptstyle\pm0.025$	&$0.432\scriptstyle\pm0.032$	&$0.162\scriptstyle\pm0.009$	&$0.116\scriptstyle\pm0.022$	&$0.288\scriptstyle\pm0.031$ \\ \hline
        \multirow{2}{*}{TSDiff} & CRPS &$0.478\scriptstyle\pm0.027$	&$0.344\scriptstyle\pm0.046$	&$0.516\scriptstyle\pm0.027$	&$0.406\scriptstyle\pm0.056$	&$0.478\scriptstyle\pm0.005$	&$0.391\scriptstyle\pm0.002$	&$0.152\scriptstyle\pm0.003$	&$0.082\scriptstyle\pm0.010$	
        &$0.263 \scriptstyle{\pm 0.022}$  \\ 
        ~ & NMAE & $0.622\scriptstyle\pm0.045$	&$0.416\scriptstyle\pm0.065$	&$0.657\scriptstyle\pm0.017$	&$0.482\scriptstyle\pm0.022$	&$0.622\scriptstyle\pm0.142$	&$0.478\scriptstyle\pm0.006$	&$0.141\scriptstyle\pm0.026$	&$0.142\scriptstyle\pm0.009$	&$0.272\scriptstyle\pm0.020$ \\ \hline
        \multirow{2}{*}{GRU NVP} & CRPS &$0.546\scriptstyle\pm0.036$	&$0.561\scriptstyle\pm0.273$	&$0.502\scriptstyle\pm0.039$	&$0.539\scriptstyle\pm0.090$	&$0.114\scriptstyle\pm0.013$	&$\textcolor{blue}{\underline{0.211}\scriptstyle\pm0.004}$	&$0.110\scriptstyle\pm0.004$	&$0.079\scriptstyle\pm0.009$	&$0.307\scriptstyle\pm0.005$  \\ 
        ~ & NMAE & $0.707\scriptstyle\pm0.050$	&$0.749\scriptstyle\pm0.385$	&$0.643\scriptstyle\pm0.046$	&$0.688\scriptstyle\pm0.161$	&$0.144\scriptstyle\pm0.017$	&$0.264\scriptstyle\pm0.006$	&$0.135\scriptstyle\pm0.008$	&$0.103\scriptstyle\pm0.009$	&$0.333\scriptstyle\pm0.005$ \\ \hline
        \multirow{2}{*}{GRU MAF} & CRPS &$0.536\scriptstyle\pm0.033$	&$0.272\scriptstyle\pm0.029$	&$0.393\scriptstyle\pm0.043$	&$0.990\scriptstyle\pm0.023$	&$\textcolor{blue}{\underline{0.106}\scriptstyle\pm0.007}$	& -	&$0.122\scriptstyle\pm0.006$	&$0.160\scriptstyle\pm0.019$	
        &$0.172 \scriptstyle{\pm 0.034}$ \\ 
        ~ & NMAE & $0.711\scriptstyle\pm0.081$	&$0.355\scriptstyle\pm0.048$	&$0.496\scriptstyle\pm0.019$	&$1.092\scriptstyle\pm0.019$	&$0.136\scriptstyle\pm0.098$	
        &-	&$0.149\scriptstyle\pm0.034$	&$0.182\scriptstyle\pm0.010$	&$0.216\scriptstyle\pm0.014$  \\ \hline
        \multirow{2}{*}{Trans MAF} & CRPS & $0.688\scriptstyle\pm0.043$	&$0.355\scriptstyle\pm0.043$	&$0.363\scriptstyle\pm0.053$	&$0.327\scriptstyle\pm0.033$	&-	&-	&$0.113\scriptstyle\pm0.004$	&$0.148\scriptstyle\pm0.017$	&$\textcolor{blue}{\underline{0.155} \scriptstyle{\pm 0.018}}$ \\ 
        ~ & NMAE &$0.822\scriptstyle\pm0.034$	&$0.475\scriptstyle\pm0.029$	&$0.455\scriptstyle\pm0.025$	&$0.412\scriptstyle\pm0.020$	&-	&-	&$0.148\scriptstyle\pm0.040$	&$0.191\scriptstyle\pm0.006$	&$\textcolor{blue}{\underline{0.183}\scriptstyle\pm0.019}$ \\ \hline
        \multirow{2}{*}{TimeGrad} & CRPS & $0.621\scriptstyle\pm0.037$	&$0.470\scriptstyle\pm0.054$	&$0.523\scriptstyle\pm0.027$	&$0.445\scriptstyle\pm0.016$	&$0.108\scriptstyle\pm0.003$	&$0.220\scriptstyle\pm0.002$	&$0.113\scriptstyle\pm0.011$	&$0.099\scriptstyle\pm0.015$	&$0.295\scriptstyle\pm0.083$ \\
        ~ & NMAE & $0.793\scriptstyle\pm0.034$	&$0.561\scriptstyle\pm0.044$	&$0.672\scriptstyle\pm0.015$	&$0.550\scriptstyle\pm0.018$	&$\textcolor{blue}{\underline{0.134}\scriptstyle\pm0.004}$	&$0.263\scriptstyle\pm0.001$	&$0.136\scriptstyle\pm0.020$	&$0.113\scriptstyle\pm0.016$	&$0.325\scriptstyle\pm0.068$  \\ \hline
        \multirow{2}{*}{CSDI} & CRPS & $0.448\scriptstyle\pm0.038$	&$0.239\scriptstyle\pm0.035$	&$0.528\scriptstyle\pm0.012$	&$0.302\scriptstyle\pm0.040$	&-	&-	&$\textcolor{blue}{\underline{0.087}\scriptstyle\pm0.003}$	&$0.143\scriptstyle\pm0.020$	&$0.283\scriptstyle\pm0.012$  \\ 
        ~ & NMAE & $0.578\scriptstyle\pm0.051$	&$0.306\scriptstyle\pm0.040$	&$0.657\scriptstyle\pm0.014$	&$0.382\scriptstyle\pm0.030$	&-	&-	&$ \textcolor{blue}{\underline{0.102}\scriptstyle\pm0.005}$	&$0.173\scriptstyle\pm0.020$	&$0.299\scriptstyle\pm0.013$ \\ \hline
        \multirow{2}{*}{$K^2$VAE} & CRPS & $\textcolor{red}{\textbf{0.294}\scriptstyle\pm0.026}$	&$\textcolor{red}{\textbf{0.221}\scriptstyle\pm0.023}$	&$\textcolor{red}{\textbf{0.314}\scriptstyle\pm0.011}$	&$\textcolor{red}{\textbf{0.280}\scriptstyle\pm0.014}$	&$\textcolor{red}{\textbf{0.057}\scriptstyle\pm0.005}$	&$\textcolor{red}{\textbf{0.200}\scriptstyle\pm0.001}$	&$\textcolor{red}{\textbf{0.084}\scriptstyle\pm0.003}$	&$\textcolor{red}{\textbf{0.069}\scriptstyle\pm0.005}$	&$\textcolor{red}{\textbf{0.142} \scriptstyle{\pm 0.008}}$ \\ 
        ~ & NMAE & $\textcolor{red}{\textbf{0.373}\scriptstyle\pm0.032}$	&$\textcolor{red}{\textbf{0.275}\scriptstyle\pm0.035}$	&$\textcolor{red}{\textbf{0.396}\scriptstyle\pm0.012}$	&$\textcolor{red}{\textbf{0.278}\scriptstyle\pm0.020}$	&$\textcolor{red}{\textbf{0.117}\scriptstyle\pm0.019}$	&$\textcolor{red}{\textbf{0.248}\scriptstyle\pm0.010}$	&$\textcolor{red}{\textbf{0.099}\scriptstyle\pm0.009}$	&$\textcolor{red}{\textbf{0.084}\scriptstyle\pm0.017}$	&$\textcolor{red}{\textbf{0.167}\scriptstyle\pm0.007}$\\\bottomrule
        \multicolumn{8}{l}{Due to the excessive time and memory consumption, some results are unavailable and denoted as -.}
    \end{tabular}}
    \label{tab: long-term}
    
\end{table*}

\subsection{Experimental Setup}
\textbf{Datasets.} We conduct experiments on 8 datasets of short-term forecasting and 9 datasets of long-term forecasting based on ProbTS~\citep{ProbTS}, a comprehensive benchmark used to evaluate probabilistic forecasting tasks. Specifically, we use the datasets ETTh1-S, ETTh2-S, ETTm1-S, ETTm2-S, Electricity-S, Solar-S, Traffic-S, and Exchange-S for short-term forecasting, of which the context length $T$ is equivalent to forecasting horizon $L$ with $T=L=30$ for Exchange-S and $T=L=24$ for the others. For long-term forecasting, we use the datasets ETTh1-L, ETTh2-L, ETTm1-L, ETTm2-L, Electricity-L, Traffic-L, Exchange-L, Weather-L, and ILI-L with forecasting horizon $L \in \{24, 36, 48, 60\}$ for ILI-L and $L \in \{96, 192, 336, 720\}$ for the others. Note that we fix the context length of all the models with $T=36$ for ILI-L and $T=96$ for the others to ensure a fair comparison. Details are shown in Table~\ref{1 Dataset Summary}.

\textbf{Baselines.} We compare $K^2$VAE with 11 strong baselines, including 4 point forecasting models: FITS~\citep{xu2024fitsmodelingtimeseries}, PatchTST~\citep{nie2022time}, iTransformer~\citep{liu2023itransformer}, and Koopa~\citep{koopa}, as well as 7 generative models: TSDiff~\citep{TSDiff}, $D^3$VAE~\citep{li2022generative}, GRU NVP, GRU MAF, Trans MAF~\citep{normalizing-flow}, TimeGrad~\citep{TimeGrad}, and CSDI~\citep{CSDI}, in both short-term and long-term probabilistic forecasting scenarios. The point forecasting models are equipped with gaussian heads to predict the distributions. Detailed descriptions of these models can be found in Appendix~\ref{appendix BASELINES}.

\textbf{Evaluation Metrics.} 
We use two commonly-used metrics CPRS (Continuous Ranked Probability Score) and NMAE (Normalized Mean Absolute Error) to evaluate the probabilistic forecasts. Detailed descriptions of these metrics can be found in Appendix~\ref{appendix Metrics}.

\subsection{Main Results}

Comprehensive probabilistic forecasting results are listed in Table~\ref{tab: short-term} and Table~\ref{tab: long-term} with the best in red and the second in blue. We have the following observations: \par
1) $K^2$VAE outperforms state-of-the-art baselines, showing notable improvements in predictive performance. In short-term scenarios, it achieves a \emph{7.3\%} reduction in CRPS and \emph{14.5\%} reduction in NMAE compared to the second-best baseline, CSDI. In long-term scenarios, it surpasses PatchTST with improvements of \emph{20.9\%} and \emph{19.9\%}. \par
2) $K^2$VAE shows significant advantage on nonstationary time seris datasets such as Exchange-S and Exchange-L. There exists distribution drift phenomenon in these datasets, which causes non-linearity and hinders the prediction and uncertainty modeling. Though diffusion-based and flow-based models are theoretically capable of fitting any distributions, they struggle to construct explict probability transition paths to reach such complex destinations. While $K^2$VAE simplifys this difficulty through modeling the time series in a linear dynamical system, where the uncertainty is more explicit and easier to be modeled. \par
3) $K^2$VAE also shows stable and strong performance with respect to the varying forecasting horizons--see Table~\ref{tab:long_term_fore_CRPS} and Table~\ref{tab:long_term_fore_NMAE} in Appendix~\ref{app: full results}. The performance of most baselines drops significantly as the forecasting horizon extends, while $K^2$VAE maintains superiority. One potential reason is that $K^2$VAE utilizes the KoopmanNet to effectively handle the inherent nonlinearity in long-term forecasting. Another reason is that the KalmanNet mitigates the error accumulation by integrating diverse information.

\subsection{Ablation Studies}
% 所有分析在ETTm1-L Electricity-L Exchange-S Solar-S上做，长步放720，剩下放附录
\subsubsection{Variants of Koopman Operator}

%Koopman Encdoer/Decoder的参数分析，隐藏维度，层数
%Koopman 算子 eDMD和learnable part的使用分析
%得出的结论是如果能同时用会更好，有的数据集上无法使用静态，只能用动态。这是因为我们基于数据驱动的做法处理时间序列，可能初始的空间结构不好，此时使用eDMD可能会因为ill-conditioned的矩阵问题而导致无法计算
We design the local Koopman Operator $\mathcal{K}_{loc}$ obtained by one-step eDMD and a learnable part $\mathcal{K}_{glo}$. As one-step eDMD estimates the $\mathcal{K}_{loc}$ through matrix calculation, which relys on the local quality of the space of measurement function. Once the initilization leads to an ill-conditioned topological structure, the one-step eDMD suffers from the numerical calculation error or guides the model to converge in a wrong direction, which often occurs in long-term probabilistic forecasting scenarios and hinders the performance. To enhance the robustness, we adopt a global learnable part $\mathcal{K}_{glo}$ to mitigate this phenomenon while capturing the global-shared dynamics. As shown in Table~\ref{tab: koopman operator}, mixed Koopman Operator $\mathcal{K} = \mathcal{K}_{loc} + \mathcal{K}_{glo}$ demonstrates better performance on both short- and long-term tasks while providing roubustness to avoid calculation error if $\mathcal{K}_{loc}$ fails.  Complete experimental results are provided in Table~\ref{tab: complete koopman operator} in Appendix~\ref{app: model analysis}.

\begin{table}[!htbp]
    \centering
    \caption{Comparison on different Koopman Operators. Lower values indicate better performance. \textcolor{red}{\textbf{Red}}: the best. L: the forecasting horizon. The full results can be found in Table~\ref{tab: complete koopman operator} in Appendix~\ref{app: model analysis}.}
    \resizebox{\columnwidth}{!}{
        \begin{tabular}{c|c|cccc}
    \toprule
       Koopman & \multirow{2}{*}{Metrics} & Electricity-L & ETTm1-L & Exchange-S & Solar-S \\ 
         Operator & &(L = 720) &(L = 720) &(L = 30) &(L = 24)\\\hline
        \multirow{2}{*}{$\mathcal{K}_{loc}$} & CRPS & - & - & $0.012\scriptstyle\pm0.002$ & $0.450\scriptstyle\pm0.012$ \\ 
        ~ & NMAE & - & - & $0.014\scriptstyle\pm0.002$ & $0.566\scriptstyle\pm0.015$ \\ \midrule
        \multirow{2}{*}{$\mathcal{K}_{glo}$} & CRPS & $0.065\scriptstyle\pm0.007$ & $0.311\scriptstyle\pm0.024$ & $0.011\scriptstyle\pm0.001$ & $0.374\scriptstyle\pm0.004$  \\ 
        ~ & NMAE & $0.130\scriptstyle\pm0.024$ & $0.395\scriptstyle\pm0.027$ & $0.013\scriptstyle\pm0.001$ & $0.488\scriptstyle\pm0.008$ \\ \midrule
        \multirow{2}{*}{$\mathcal{K}_{loc}+\mathcal{K}_{glo}$} & CRPS & $\textcolor{red}{\textbf{0.057}\scriptstyle\pm 0.005}$ & $\textcolor{red}{\textbf{0.294}\scriptstyle\pm0.026}$ & $\textcolor{red}{\textbf{0.009}\scriptstyle\pm0.001}$ & $\textcolor{red}{\textbf{0.367}\scriptstyle\pm0.005}$  \\ 
        ~ & NMAE & $\textcolor{red}{\textbf{0.117}\scriptstyle\pm 0.019}$ & $\textcolor{red}{\textbf{0.373}\scriptstyle\pm0.032}$ & $\textcolor{red}{\textbf{0.009}\scriptstyle\pm0.001}$ & $\textcolor{red}{\textbf{0.480}\scriptstyle\pm0.008}$\\
        \bottomrule
        \multicolumn{6}{l}{Due to the numerical instability, some results are unavailable and denoted as -.}
    \end{tabular}
    }
    
    \label{tab: koopman operator}
\end{table}

\subsubsection{Connections in KalmanNet}
We adopt an Integrator to assist the KalmanNet for faster convergence, which potentially helps tune the topological structure of the space of measurement function into the linear dynamical system. Specifically, the Integrator integrates the non-linear residual into the control input of KalmanNet, which is proved not to affect the prior of Koopman Theory in Section~\ref{sec: theory}. We also make a skip connection between Integrator and the KalmanNet for the final prediction, this constraints Integrator predicting residuals from residuals. Since the space of the measurement space is optimized to converge to a linear system, the Integrator serves as an assistant and gradually stop helping the model. We showcase the different variants in Table~\ref{tab: kalman filter}, to which only some of the features mentioned above are applied. 
\begin{table}[!htbp]
    \centering
    \caption{Comparison on different connections in KalmanNet. Lower values indicate better performance. \textcolor{red}{\textbf{Red}}: the best. L: the forecasting horizon. The full results can be found in Table~\ref{tab: complete kalman filter} in Appendix~\ref{app: model analysis}.}
    \resizebox{0.95\columnwidth}{!}{
    \begin{tabular}{c|c|cccc}
    \toprule
        Connections  & \multirow{2}{*}{Metrics} & Electricity-L & ETTm1-L & Exchange-S & Solar-S \\ 
        in KalmanNet & 
        & (L = 720) & (L = 720) &  (L = 30) & (L = 24)\\ \hline
        \multirow{2}{*}{w/o Integrator} & CRPS & $0.082\scriptstyle\pm0.011$& $0.359\scriptstyle\pm0.024$ & $0.015\scriptstyle\pm0.002$ & $0.398\scriptstyle\pm0.005$\\ 
        ~ & NMAE & $0.188\scriptstyle\pm0.028$ & $0.442\scriptstyle\pm0.029$ & $0.022\scriptstyle\pm0.001$ & $0.531\scriptstyle\pm0.010$ \\ \midrule
        \multirow{2}{*}{w/o skip connection} & CRPS & $0.063\scriptstyle\pm0.007$ &$ 0.315\scriptstyle\pm0.016$ & $0.011\scriptstyle\pm0.001$ & $0.388\scriptstyle\pm0.006$ \\ 
        ~ & NMAE & $0.131\scriptstyle\pm0.015$ & $0.402\scriptstyle\pm0.030$ & $0.013\scriptstyle\pm0.004$ & $0.511\scriptstyle\pm0.008$\\ \midrule
        \multirow{2}{*}{w/o control input} & CRPS & $0.069\scriptstyle\pm0.005$ & $0.322\scriptstyle\pm0.017$ & $0.013\scriptstyle\pm0.006$ & $0.423\scriptstyle\pm0.005$ \\ 
        ~ & NMAE & $0.142\scriptstyle\pm0.018$ & $0.418\scriptstyle\pm0.022$ & $0.017\scriptstyle\pm0.003$ & $0.560\scriptstyle\pm0.009$ \\ \midrule
        \multirow{2}{*}{Mixed}& CRPS & $\textcolor{red}{\textbf{0.057}\scriptstyle\pm 0.005}$ & $\textcolor{red}{\textbf{0.294}\scriptstyle\pm0.026}$ & $\textcolor{red}{\textbf{0.009}\scriptstyle\pm0.001}$ & $\textcolor{red}{\textbf{0.367}\scriptstyle\pm0.005}$  \\ 
        ~ & NMAE & $\textcolor{red}{\textbf{0.117}\scriptstyle\pm 0.019}$ & $\textcolor{red}{\textbf{0.373}\scriptstyle\pm0.032}$ & $\textcolor{red}{\textbf{0.009}\scriptstyle\pm0.001}$ & $\textcolor{red}{\textbf{0.480}\scriptstyle\pm0.008}$ \\ \bottomrule
    \end{tabular}
    }
    \label{tab: kalman filter}
\end{table}

We observe that our adopted Mixed variant outperforms others, because it provides gains for the KalmanNet, which integrates non-linear information for adaption, and provides constraints for the Integrator, which makes full use of it without destroying the assumptions of Koopman Theory. The variant ``w/o skip connection'' completely depends on the linear fitting ability of the KalmanNet, which is hard to disentangle the nonlinear components in the early stages of training, thus potentially hindering the modeling of process uncertainty. On the other hand, the variant ``w/o control input'' gives too little support for KalmanNet to adaptively refine the prediction and process uncertainty. Complete experimental results are provided in Table~\ref{tab: complete kalman filter} in Appendix~\ref{app: full results}.

\subsubsection{Ablations of KoopmanNet \& KalmanNet}
As the most important modules, the KoopmanNet and KalmanNet jointly contribute to state-of-the-art performance of $K^2$VAE. To evaluate their impactment, we conduct ablation studies and the results are shown in Table~\ref{tab: koopman kalman}.

It is observed that both the KoopmanNet and KalmanNet show indispensability in probabilistic forecasting. Since the KoopmanNet ensures the linearization of the modeling, it shows greater impact in forecasting performance. Another reason is that KalmanNet does not excell at non-linear modeling because it is based on the linear kalman filter.
\begin{table}[htbp!]
    \centering
    \caption{Ablations on KoopmanNet and KalmanNet. Lower values indicate better performance. \textcolor{red}{\textbf{Red}}: the best. L: the forecasting horizon. The full results can be found in Table~\ref{tab: complete koopman kalman} in Appendix~\ref{app: model analysis}.}
    \resizebox{0.95\columnwidth}{!}{
    \begin{tabular}{c|c|cccc}
    \toprule
        \multirow{2}{*}{Variants}  & \multirow{2}{*}{Metrics} & Electricity-L & ETTm1-L & Exchange-S & Solar-S \\ 
         & 
        & (L = 720) & (L = 720) &  (L = 30) & (L = 24)\\ \hline
        \multirow{2}{*}{w/o KoopmanNet} & CRPS & $0.074\scriptstyle\pm0.009$& $0.443\scriptstyle\pm0.034$ & $0.014\scriptstyle\pm0.002$ & $0.385\scriptstyle\pm0.008$\\ 
        ~ & NMAE & $0.162\scriptstyle\pm0.015$ & $0.601\scriptstyle\pm0.058$ & $0.016\scriptstyle\pm0.001$ & $0.528\scriptstyle\pm0.014$ \\ \midrule
        \multirow{2}{*}{w/o KalmanNet} & CRPS & $0.089\scriptstyle\pm0.011$ &$ 0.398\scriptstyle\pm0.038$ & $0.011\scriptstyle\pm0.001$ & $0.375\scriptstyle\pm0.005$ \\  
        ~ & NMAE & $0.192\scriptstyle\pm0.023$ & $0.539\scriptstyle\pm0.044$ & $0.012\scriptstyle\pm0.001$ & $0.499\scriptstyle\pm0.009$\\ \midrule
        \multirow{2}{*}{$K^2$VAE}& CRPS & $\textcolor{red}{\textbf{0.057}\scriptstyle\pm 0.005}$ & $\textcolor{red}{\textbf{0.294}\scriptstyle\pm0.026}$ & $\textcolor{red}{\textbf{0.009}\scriptstyle\pm0.001}$ & $\textcolor{red}{\textbf{0.367}\scriptstyle\pm0.005}$  \\ 
        ~ & NMAE & $\textcolor{red}{\textbf{0.117}\scriptstyle\pm 0.019}$ & $\textcolor{red}{\textbf{0.373}\scriptstyle\pm0.032}$ & $\textcolor{red}{\textbf{0.009}\scriptstyle\pm0.001}$ & $\textcolor{red}{\textbf{0.480}\scriptstyle\pm0.008}$ \\ \bottomrule
    \end{tabular}
    }

    \label{tab: koopman kalman}
\end{table}
\subsection{Model Efficiency}
\begin{figure}[!htbp]
  \centering
  \includegraphics[width=0.97\linewidth]{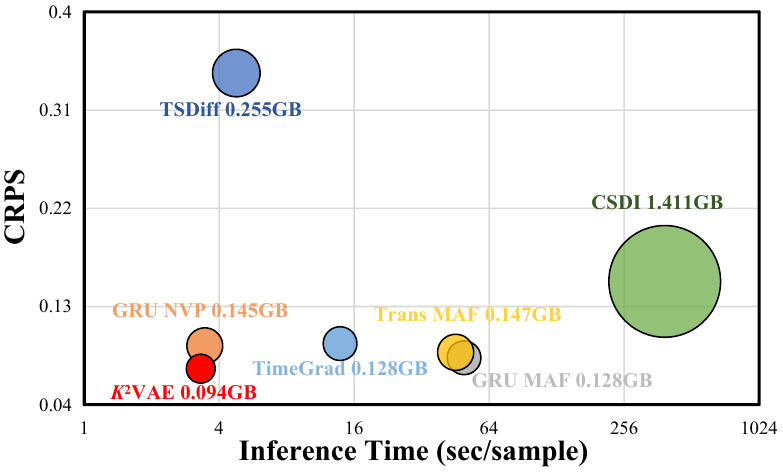}
  \caption{Model efficiency comparison. All the statistical data is obtained on the Electricity-L ($T=L=96$). Sample-wise inference time and max gpu memory is obtained with batch size equals 1. Lower values of CRPS indicate better performance.}
  \label{fig: footprint}
\end{figure}
%训练时间，运行时间，参考koopa，probts
We evaluate the model efficiency from three aspects: probabilistic forecasting performance (CRPS), inference time (sec/sample), and max gpu memory (GB). Figure~\ref{fig: footprint} showcases a common scenario on Electricity-L (96-96), which reflects the overall relative relationships on above-mentioned three aspects. $K^2$VAE achieves best forecasting performance while occupying the minimum gpu memory and having the fastest inference speed. One reason is that $K^2$VAE applies KoopmanNet and KalmanNet, composed of several lightweight MLPs or linear layers, to efficiently build the variational distribution as process uncertainty, thus enhancing generation capability of $K^2$VAE. Another reason is that $K^2$VAE utilizes the VAE architecture and obeys the one-step-generation paradigm, while diffusion-based or flow-based models have longer probabilistic transition paths, which produces more intermediate results and consumes longer duration. More evidence of model efficiency is provided in Table~\ref{tab: complete model efficiency} in Appendix~\ref{app: full results}.

% \subsubsection{Visualization of forecasting}
% % study cases
% \begin{figure}[!htbp]
%   \centering
%     \includegraphics[width=0.9\linewidth]{Figures/image.png}
%   \caption{Take some space}
%   \label{fig: study cases}
% \end{figure}
\section{Related Works}

\subsection{Probabilistic Time Series Forecasting}
% 概率预测旨在提供目标变量的预测分布，而不是像确定性预测那样仅产生单一的点估计。随着深度学习的快速发展，概率预测模型逐渐受到越来越多的关注，并不断涌现出新的方法。DeepAR (Salinas et al., 2020) 采用递归神经网络（RNN）来建模隐藏状态的转换，并通过生成高斯分布来进行预测。作为DeepAR的扩展，GPVar (Salinas et al., 2019) 利用高斯copula将数据进行变换，并假设这些变换后的数据符合多元高斯分布。Rangapuram et al. (2018b)、Salinas et al. (2020) 和 Li et al. (2021) 将状态空间模型与深度学习相结合，以提升预测精度。Feng et al. (2023) 和 Tang & Matteson (2021) 提出了基于注意力机制的方法，增强了模型对长距离依赖关系的捕捉能力，从而进一步提高了预测准确性。扩散模型（如Rasul et al., 2021；Li et al., 2022；Fan et al., 2024）将预测任务视为去噪过程，在高维数据的处理上表现优异。另一种方法则是采用更为复杂的分布形式，如正则化流（normalizing flows）(Rasul et al., 2020)，以进一步改进预测的表现。

Probabilistic forecasting aims to provide the predictive distribution of the target variable. With the rapid development of deep learning, new methods are continually emerging. DeepAR~\cite{DeepAR} uses recurrent neural networks (RNNs) to model the transitions of hidden states and generates a Gaussian distribution for predictions. Following the autoregressive paradigm, DeepState~\cite{DeepState} and DSSMF~\cite{DSSMF} combine state space models with deep learning to improve forecasting accuracy. MANF~\cite{MANF} and ProTran~\cite{ProTran} introduced attention-based methods that enhance the model's ability to capture long-range dependencies, further improving forecasting accuracy. Diffusion models, such as those proposed by TimeGrad~\cite{TimeGrad}, TSDiff~\cite{TSDiff}, and CSDI~\cite{CSDI}, approach the forecasting task as a denoising process, excelling in handling high-dimensional data. Another approach involves using more complex distribution forms, such as normalizing flows~\cite{normalizing-flow}, to further enhance forecasting performance. Compared with RNN-based or State Space models, $K^2$VAE also autoregressively models the time series in a linear dynamical system, but mitigates the error accumulation through KalmanNet. Compared with generative diffusion-based or flow-based models, $K^2$VAE adopts the VAE structure and follows single-step-generation principle, achieves faster inference speed, lower memory occupation, and better performance.

\subsection{VAE for Time Series}
Variational Autoencoders (VAEs)~\cite{kingma2013auto} have found wide applicability across various time series tasks. In time series generation, VAEs synthesize time series by encoding the data into a lower-dimensional latent space and then decoding it to recreate similar sequences, which helps preserve the statistical properties of the original data, making VAEs valuable for data augmentation~\cite{li2023causal,Timevae}. In time series imputation, VAEs recover missing values by learning the underlying latent structure of the data~\cite{boquet2019missing,li2021variational}. By capturing temporal dependencies and relationships, they help restore incomplete time series with high accuracy. In time series anomaly detection, VAEs are used to learn the expected patterns within time series data and flag deviations that indicate anomalous behavior~\cite{huang2022semi,wang2024revisiting}. In time series forecasting, TimeVAE~\citep{Timevae} and $D^3$VAE~\citep{li2022generative} are tailored for short-term probabilistic foercasting tasks. Koopa~\citep{koopa}, as a strong baseline based on Koopman Theory, is tailored for long-term deterministic forecasting by adopting multi-scale MLP structures, which also falls short in probabilistic forecasting. Compared to these methods, $K^2$VAE is tailored for LPTSF, which considers the inherent nonlinearity of time series through a KoopmanNet and tackles the error accumulation through a KalmanNet, thus enhancing the ability to predict long-term future distributions. This facilitates better decision-making in dynamic and uncertain environments.

\section{Conclusion}
In this work, we propose a VAE-based probabilistic forecasting model called $K^2$VAE to solve PTSF. By leveraging the KoopmanNet, $K^2$VAE transforms nonlinear time series into a linear dynamical system, which allows for a more effective representation of state transitions and the inherent process uncertainties. Furthermore, the KalmanNet provides a solution to model the uncertainty in the linear dynamical system, mitigating the error accumuation in long-term forecasting tasks. Through comprehensive experiments, we demonstrate that $K^2$VAE not only outperforms existing state-of-the-art methods in both short- and long-term probabilistic forecasting tasks, but also achieves fascinating model efficiency. 

In the future, we hope to continuously study the one-step generation paradigm in time series probabilistic modeling to further improve the model performance and efficiency. Another primary direction is to pionner the exploration of foundation probabilistic time series forecasting models, which can work effectively in zero-shot scenarios.

\clearpage
\section*{Impact Statement}
This paper presents work whose goal is to advance the field of Machine Learning. There are many potential societal consequences of our work, none which we feel must be specifically highlighted here. 

\section*{Acknowledgements}
This work was partially supported by National Natural Science Foundation of China (62472174, 62372179). Bin Yang is the corresponding author of the work.

\bibliography{main}

\begin{thebibliography}{120}
\providecommand{\natexlab}[1]{#1}
\providecommand{\url}[1]{\texttt{#1}}
\expandafter\ifx\csname urlstyle\endcsname\relax
  \providecommand{\doi}[1]{doi: #1}\else
  \providecommand{\doi}{doi: \begingroup \urlstyle{rm}\Url}\fi

\bibitem[Boquet et~al.(2019)Boquet, Vicario, Morell, and Serrano]{boquet2019missing}
Boquet, G., Vicario, J.~L., Morell, A., and Serrano, J.
\newblock Missing data in traffic estimation: A variational autoencoder imputation method.
\newblock In \emph{ICASSP}, pp.\  2882--2886, 2019.

\bibitem[Box \& Pierce(1970)Box and Pierce]{box1970distribution}
Box, G.~E. and Pierce, D.~A.
\newblock Distribution of residual autocorrelations in autoregressive-integrated moving average time series models.
\newblock \emph{Journal of the American statistical Association}, 65\penalty0 (332):\penalty0 1509--1526, 1970.

\bibitem[Breiman(2001)]{breiman2001random}
Breiman, L.
\newblock Random forests.
\newblock \emph{Machine learning}, 45:\penalty0 5--32, 2001.

\bibitem[Campos et~al.(2022)Campos, Kieu, Guo, Huang, Zheng, Yang, and Jensen]{davidpvldb}
Campos, D., Kieu, T., Guo, C., Huang, F., Zheng, K., Yang, B., and Jensen, C.~S.
\newblock Unsupervised time series outlier detection with diversity-driven convolutional ensembles.
\newblock \emph{Proc. {VLDB} Endow.}, 15\penalty0 (3):\penalty0 611--623, 2022.

\bibitem[Campos et~al.(2023)Campos, Zhang, Yang, Kieu, Guo, and Jensen]{DBLP:journals/pacmmod/0002Z0KGJ23}
Campos, D., Zhang, M., Yang, B., Kieu, T., Guo, C., and Jensen, C.~S.
\newblock Lightts: Lightweight time series classification with adaptive ensemble distillation.
\newblock \emph{Proc. {ACM} Manag. Data}, 1\penalty0 (2):\penalty0 171:1--171:27, 2023.

\bibitem[Chen \& Guestrin(2016)Chen and Guestrin]{chen2016xgboost}
Chen, T. and Guestrin, C.
\newblock Xgboost: A scalable tree boosting system.
\newblock In \emph{{SIGKDD}}, pp.\  785--794, 2016.

\bibitem[Chen et~al.(2024)Chen, Huang, Zhang, Li, Zhu, Yan, Li, Chen, Hu, Yang, et~al.]{chen2024gim}
Chen, Y., Huang, X., Zhang, Q., Li, W., Zhu, M., Yan, Q., Li, S., Chen, H., Hu, H., Yang, J., et~al.
\newblock Gim: A million-scale benchmark for generative image manipulation detection and localization.
\newblock \emph{arXiv preprint arXiv:2406.16531}, 2024.

\bibitem[Chen et~al.(2025)Chen, Huang, Cheng, Chen, Rao, Shu, Yang, Pan, and Guo]{AimTS}
Chen, Y., Huang, S., Cheng, Y., Chen, P., Rao, Z., Shu, Y., Yang, B., Pan, L., and Guo, C.
\newblock {AimTS}: Augmented series and image contrastive learning for time series classification.
\newblock In \emph{ICDE}, 2025.

\bibitem[Chen et~al.(2023)Chen, Ding, Chu, Qi, Huang, and Wang]{Monotonic}
Chen, Z., Ding, L., Chu, Z., Qi, Y., Huang, J., and Wang, H.
\newblock Monotonic neural ordinary differential equation: Time-series forecasting for cumulative data.
\newblock In \emph{CIKM}, pp.\  4523--4529, 2023.

\bibitem[Cirstea et~al.(2022{\natexlab{a}})Cirstea, Guo, Yang, Kieu, Dong, and Pan]{Triformer}
Cirstea, R., Guo, C., Yang, B., Kieu, T., Dong, X., and Pan, S.
\newblock Triformer: Triangular, variable-specific attentions for long sequence multivariate time series forecasting.
\newblock In \emph{IJCAI}, pp.\  1994--2001, 2022{\natexlab{a}}.

\bibitem[Cirstea et~al.(2022{\natexlab{b}})Cirstea, Yang, Guo, Kieu, and Pan]{cirstea2022towards}
Cirstea, R.-G., Yang, B., Guo, C., Kieu, T., and Pan, S.
\newblock Towards spatio-temporal aware traffic time series forecasting.
\newblock In \emph{ICDE}, pp.\  2900--2913, 2022{\natexlab{b}}.

\bibitem[Cui et~al.(2024{\natexlab{a}})Cui, Liu, Feng, Deng, Gao, Cheng, Lu, and Yang]{cui2024correlation}
Cui, K., Liu, S., Feng, W., Deng, X., Gao, L., Cheng, M., Lu, H., and Yang, L.~T.
\newblock Correlation-aware cross-modal attention network for fashion compatibility modeling in ugc systems.
\newblock \emph{ACM Transactions on Multimedia Computing, Communications and Applications}, 2024{\natexlab{a}}.

\bibitem[Cui et~al.(2024{\natexlab{b}})Cui, Tang, Zhu, Wang, Larsen, Pauca, Alqahtani, Yang, Segurado, Fine, et~al.]{cui2024real}
Cui, K., Tang, W., Zhu, R., Wang, M., Larsen, G.~D., Pauca, V.~P., Alqahtani, S., Yang, F., Segurado, D., Fine, P., et~al.
\newblock Real-time localization and bimodal point pattern analysis of palms using uav imagery.
\newblock \emph{arXiv preprint arXiv:2410.11124}, 2024{\natexlab{b}}.

\bibitem[Cui et~al.(2025)Cui, Zhu, Wang, Tang, Larsen, Pauca, Alqahtani, Yang, Segurado, Lutz, et~al.]{cui2025detection}
Cui, K., Zhu, R., Wang, M., Tang, W., Larsen, G.~D., Pauca, V.~P., Alqahtani, S., Yang, F., Segurado, D., Lutz, D., et~al.
\newblock Detection and geographic localization of natural objects in the wild: A case study on palms.
\newblock \emph{arXiv preprint arXiv:2502.13023}, 2025.

\bibitem[Dai et~al.(2024)Dai, Wu, Liu, Li, Bao, Jiang, and Xia]{dai2024period}
Dai, T., Wu, B., Liu, P., Li, N., Bao, J., Jiang, Y., and Xia, S.-T.
\newblock Periodicity decoupling framework for long-term series forecasting.
\newblock In \emph{ICLR}, 2024.

\bibitem[Desai et~al.(2021)Desai, Freeman, Wang, and Beaver]{Timevae}
Desai, A., Freeman, C., Wang, Z., and Beaver, I.
\newblock Timevae: A variational auto-encoder for multivariate time series generation.
\newblock \emph{arXiv preprint arXiv:2111.08095}, 2021.

\bibitem[Fang et~al.(2021)Fang, Pan, Chen, Du, and Gao]{DBLP:journals/pvldb/FangPCDG21}
Fang, Z., Pan, L., Chen, L., Du, Y., and Gao, Y.
\newblock {MDTP:} {A} multi-source deep traffic prediction framework over spatio-temporal trajectory data.
\newblock \emph{Proc. {VLDB} Endow.}, 14\penalty0 (8):\penalty0 1289--1297, 2021.

\bibitem[Feng et~al.(2024)Feng, Miao, Xu, Wu, Wu, Zhang, and Zhao]{MANF}
Feng, S., Miao, C., Xu, K., Wu, J., Wu, P., Zhang, Y., and Zhao, P.
\newblock Multi-scale attention flow for probabilistic time series forecasting.
\newblock \emph{{IEEE} Trans. Knowl. Data Eng.}, 36\penalty0 (5):\penalty0 2056--2068, 2024.

\bibitem[Gao et~al.(2025)Gao, Shen, Qiu, Xu, Yang, and Hu]{gao2025ssdts}
Gao, H., Shen, W., Qiu, X., Xu, R., Yang, B., and Hu, J.
\newblock Ssd-ts: Exploring the potential of linear state space models for diffusion models in time series imputation.
\newblock In \emph{SIGKDD}, 2025.

\bibitem[Godahewa et~al.(2021)Godahewa, Bergmeir, Webb, Hyndman, and Montero-Manso]{godahewa2021monash}
Godahewa, R., Bergmeir, C., Webb, G.~I., Hyndman, R.~J., and Montero-Manso, P.
\newblock Monash time series forecasting archive.
\newblock \emph{arXiv preprint arXiv:2105.06643}, 2021.

\bibitem[Guo et~al.(2015)Guo, Yang, Andersen, Jensen, and Torp]{guo2015ecomark}
Guo, C., Yang, B., Andersen, O., Jensen, C.~S., and Torp, K.
\newblock Ecomark 2.0: empowering eco-routing with vehicular environmental models and actual vehicle fuel consumption data.
\newblock \emph{GeoInformatica}, 19:\penalty0 567--599, 2015.

\bibitem[Higgins et~al.(2017)Higgins, Matthey, Pal, Burgess, Glorot, Botvinick, Mohamed, and Lerchner]{bVAE}
Higgins, I., Matthey, L., Pal, A., Burgess, C.~P., Glorot, X., Botvinick, M.~M., Mohamed, S., and Lerchner, A.
\newblock beta-vae: Learning basic visual concepts with a constrained variational framework.
\newblock \emph{ICLR (Poster)}, 3, 2017.

\bibitem[Hu et~al.(2024)Hu, Zhao, Qiu, Shu, Hu, Yang, and Guo]{hu2024multirc}
Hu, S., Zhao, K., Qiu, X., Shu, Y., Hu, J., Yang, B., and Guo, C.
\newblock Multirc: Joint learning for time series anomaly prediction and detection with multi-scale reconstructive contrast.
\newblock \emph{arXiv preprint arXiv:2410.15997}, 2024.

\bibitem[Hu et~al.(2025{\natexlab{a}})Hu, Liu, Zhu, Cheng, and Dai]{hu2025adaptive}
Hu, Y., Liu, P., Zhu, P., Cheng, D., and Dai, T.
\newblock Adaptive multi-scale decomposition framework for time series forecasting.
\newblock In \emph{AAAI}, 2025{\natexlab{a}}.

\bibitem[Hu et~al.(2025{\natexlab{b}})Hu, Zhang, Liu, Lan, Li, Cheng, Dai, Xia, and Pan]{hu2025timefilter}
Hu, Y., Zhang, G., Liu, P., Lan, D., Li, N., Cheng, D., Dai, T., Xia, S.-T., and Pan, S.
\newblock Timefilter: Patch-specific spatial-temporal graph filtration for time series forecasting.
\newblock \emph{ICML}, 2025{\natexlab{b}}.

\bibitem[Huang et~al.(2023)Huang, Shen, Zhang, Ding, Wang, Zhou, and Wang]{DBLP:conf/nips/HuangSZDWZW23}
Huang, Q., Shen, L., Zhang, R., Ding, S., Wang, B., Zhou, Z., and Wang, Y.
\newblock Crossgnn: Confronting noisy multivariate time series via cross interaction refinement.
\newblock In \emph{NeurIPS}, pp.\  46885--46902, 2023.

\bibitem[Huang et~al.(2022{\natexlab{a}})Huang, Chen, and Li]{huang2022semi}
Huang, T., Chen, P., and Li, R.
\newblock A semi-supervised vae based active anomaly detection framework in multivariate time series for online systems.
\newblock In \emph{WWW}, pp.\  1797--1806, 2022{\natexlab{a}}.

\bibitem[Huang et~al.(2022{\natexlab{b}})Huang, Yang, Wang, Wang, Zhang, Xu, Chen, and Vazirgiannis]{huang2022dgraph}
Huang, X., Yang, Y., Wang, Y., Wang, C., Zhang, Z., Xu, J., Chen, L., and Vazirgiannis, M.
\newblock Dgraph: {A} large-scale financial dataset for graph anomaly detection.
\newblock In \emph{NeurIPS}, pp.\  22765--22777, 2022{\natexlab{b}}.

\bibitem[Hyndman et~al.(2008)Hyndman, Koehler, Ord, and Snyder]{hyndman2008forecasting}
Hyndman, R., Koehler, A.~B., Ord, J.~K., and Snyder, R.~D.
\newblock \emph{Forecasting with exponential smoothing: the state space approach}.
\newblock 2008.

\bibitem[Jing et~al.(2023)Jing, Cui, Guan, Nie, and Su]{jing2023category}
Jing, P., Cui, K., Guan, W., Nie, L., and Su, Y.
\newblock Category-aware multimodal attention network for fashion compatibility modeling.
\newblock \emph{IEEE Transactions on Multimedia}, 25:\penalty0 9120--9131, 2023.

\bibitem[Jing et~al.(2024)Jing, Cui, Zhang, Li, and Su]{jing2023multimodal}
Jing, P., Cui, K., Zhang, J., Li, Y., and Su, Y.
\newblock Multimodal high-order relationship inference network for fashion compatibility modeling in internet of multimedia things.
\newblock \emph{IEEE Internet of Things Journal}, 11\penalty0 (1):\penalty0 353--365, 2024.

\bibitem[Ke et~al.(2017)Ke, Meng, Finley, Wang, Chen, Ma, Ye, and Liu]{ke2017lightgbm}
Ke, G., Meng, Q., Finley, T., Wang, T., Chen, W., Ma, W., Ye, Q., and Liu, T.-Y.
\newblock Lightgbm: A highly efficient gradient boosting decision tree.
\newblock \emph{Advances in Neural Information Processing Systems}, 30, 2017.

\bibitem[Kingma \& Welling(2014)Kingma and Welling]{kingma2013auto}
Kingma, D.~P. and Welling, M.
\newblock Auto-encoding variational bayes.
\newblock In \emph{ICLR}, 2014.

\bibitem[Kollovieh et~al.(2023)Kollovieh, Ansari, Bohlke-Schneider, Zschiegner, Wang, and Wang]{TSDiff}
Kollovieh, M., Ansari, A.~F., Bohlke-Schneider, M., Zschiegner, J., Wang, H., and Wang, Y.
\newblock Predict, refine, synthesize: Self-guiding diffusion models for probabilistic time series forecasting.
\newblock In \emph{NeurIPS}, 2023.

\bibitem[Koopman(1931)]{KoopmanTheory}
Koopman, B.~O.
\newblock Hamiltonian systems and transformation in hilbert space.
\newblock \emph{Proceedings of the National Academy of Sciences}, 17\penalty0 (5):\penalty0 315--318, 1931.

\bibitem[Lan \& Mezi{\'c}(2013)Lan and Mezi{\'c}]{lan2013linearization}
Lan, Y. and Mezi{\'c}, I.
\newblock Linearization in the large of nonlinear systems and koopman operator spectrum.
\newblock \emph{Physica D: Nonlinear Phenomena}, 242\penalty0 (1):\penalty0 42--53, 2013.

\bibitem[Li et~al.(2023{\natexlab{a}})Li, Yu, and Principe]{li2023causal}
Li, H., Yu, S., and Principe, J.
\newblock Causal recurrent variational autoencoder for medical time series generation.
\newblock In \emph{AAAI}, volume~37, pp.\  8562--8570, 2023{\natexlab{a}}.

\bibitem[Li et~al.(2021)Li, Ren, and Han]{li2021variational}
Li, J., Ren, W., and Han, M.
\newblock Variational auto-encoders based on the shift correction for imputation of specific missing in multivariate time series.
\newblock \emph{Measurement}, 186:\penalty0 110055, 2021.

\bibitem[Li et~al.(2019)Li, Yan, Yang, and Jin]{DSSMF}
Li, L., Yan, J., Yang, X., and Jin, Y.
\newblock Learning interpretable deep state space model for probabilistic time series forecasting.
\newblock In \emph{IJCAI}, pp.\  2901--2908, 2019.

\bibitem[Li et~al.(2022)Li, Lu, Wang, and Dou]{li2022generative}
Li, Y., Lu, X., Wang, Y., and Dou, D.
\newblock Generative time series forecasting with diffusion, denoise, and disentanglement.
\newblock \emph{Advances in Neural Information Processing Systems}, 35:\penalty0 23009--23022, 2022.

\bibitem[Li et~al.(2023{\natexlab{b}})Li, Wang, Li, Wang, Dev, and Zuo]{li2023daanet}
Li, Y., Wang, H., Li, Z., Wang, S., Dev, S., and Zuo, G.
\newblock Daanet: Dual attention aggregating network for salient object detection.
\newblock In \emph{2023 IEEE International Conference on Robotics and Biomimetics (ROBIO)}, pp.\  1--7, 2023{\natexlab{b}}.

\bibitem[Li et~al.(2025{\natexlab{a}})Li, Wang, Xu, Ma, Wu, Wang, and Dev]{li2025cp2m}
Li, Y., Wang, H., Xu, J., Ma, Z., Wu, P., Wang, S., and Dev, S.
\newblock {CP2M: Clustered-Patch-Mixed Mosaic Augmentation for Aerial Image Segmentation}.
\newblock \emph{arXiv preprint arXiv:2501.15389}, 2025{\natexlab{a}}.

\bibitem[Li et~al.(2025{\natexlab{b}})Li, Wang, Xu, Wu, Xiao, Wang, and Dev]{li2025ddunet}
Li, Y., Wang, H., Xu, J., Wu, P., Xiao, Y., Wang, S., and Dev, S.
\newblock {DDUNet: Dual Dynamic U-Net for Highly-Efficient Cloud Segmentation}.
\newblock \emph{arXiv preprint arXiv:2501.15385}, 2025{\natexlab{b}}.

\bibitem[Li et~al.(2025{\natexlab{c}})Li, Qiu, Chen, Wang, Cheng, Shu, Hu, Guo, Zhou, Jensen, and Yang]{li2025TSMF-Bench}
Li, Z., Qiu, X., Chen, P., Wang, Y., Cheng, H., Shu, Y., Hu, J., Guo, C., Zhou, A., Jensen, C.~S., and Yang, B.
\newblock {TSMF-Bench}: Comprehensive and unified benchmarking of foundation models for time series forecasting.
\newblock In \emph{SIGKDD}, 2025{\natexlab{c}}.

\bibitem[Lin et~al.(2023)Lin, Lin, Wu, Zhao, Mo, and Zhang]{lin2023segrnn}
Lin, S., Lin, W., Wu, W., Zhao, F., Mo, R., and Zhang, H.
\newblock Segrnn: Segment recurrent neural network for long-term time series forecasting.
\newblock \emph{arXiv preprint arXiv:2308.11200}, 2023.

\bibitem[Lin et~al.(2024{\natexlab{a}})Lin, Lin, Wu, Wang, Xu, and Wang]{lin2024cocv}
Lin, S., Lin, W., Wu, K., Wang, S., Xu, M., and Wang, J.~Z.
\newblock Cocv: A compression algorithm for time-series data with continuous constant values in iot-based monitoring systems.
\newblock \emph{Internet of Things}, 25:\penalty0 101049, 2024{\natexlab{a}}.

\bibitem[Lin et~al.(2024{\natexlab{b}})Lin, Lin, Wu, Chen, and Yang]{lin2024sparsetsf}
Lin, S., Lin, W., Wu, W., Chen, H., and Yang, J.
\newblock {SparseTSF}: Modeling long-term time series forecasting with 1k parameters.
\newblock In \emph{ICML}, pp.\  30211--30226, 2024{\natexlab{b}}.

\bibitem[Lin et~al.(2024{\natexlab{c}})Lin, Lin, Xinyi, Wu, Mo, and Zhong]{lincyclenet}
Lin, S., Lin, W., Xinyi, H., Wu, W., Mo, R., and Zhong, H.
\newblock Cyclenet: Enhancing time series forecasting through modeling periodic patterns.
\newblock In \emph{NeurIPS}, 2024{\natexlab{c}}.

\bibitem[Lin et~al.(2025)Lin, Lin, Zhao, and Chen]{lin2025benchmarking}
Lin, S., Lin, W., Zhao, F., and Chen, H.
\newblock Benchmarking and revisiting time series forecasting methods in cloud workload prediction.
\newblock \emph{Cluster Computing}, 28\penalty0 (1):\penalty0 71, 2025.

\bibitem[Liu et~al.(2025{\natexlab{a}})Liu, Xu, Miao, Yang, Zhang, Long, Li, and Zhao]{liu2025timecma}
Liu, C., Xu, Q., Miao, H., Yang, S., Zhang, L., Long, C., Li, Z., and Zhao, R.
\newblock Timecma: Towards llm-empowered multivariate time series forecasting via cross-modality alignment.
\newblock In \emph{AAAI}, volume~39, pp.\  18780--18788, 2025{\natexlab{a}}.

\bibitem[Liu et~al.(2025{\natexlab{b}})Liu, Guo, Dai, Li, Bao, Ren, Jiang, and Xia]{liu2025calf}
Liu, P., Guo, H., Dai, T., Li, N., Bao, J., Ren, X., Jiang, Y., and Xia, S.-T.
\newblock Calf: Aligning llms for time series forecasting via cross-modal fine-tuning.
\newblock \emph{AAAI}, 39\penalty0 (18):\penalty0 18915--18923, 2025{\natexlab{b}}.

\bibitem[Liu et~al.(2025{\natexlab{c}})Liu, Wu, Hu, Li, Dai, Bao, and Xia]{liu2025timebridge}
Liu, P., Wu, B., Hu, Y., Li, N., Dai, T., Bao, J., and Xia, S.-T.
\newblock Timebridge: Non-stationarity matters for long-term time series forecasting.
\newblock In \emph{ICML}, 2025{\natexlab{c}}.

\bibitem[Liu \& Paparrizos(2024)Liu and Paparrizos]{liu2024elephant}
Liu, Q. and Paparrizos, J.
\newblock The elephant in the room: Towards a reliable time-series anomaly detection benchmark.
\newblock In \emph{NeurIPS}, 2024.

\bibitem[Liu et~al.(2023)Liu, Li, Wang, and Long]{koopa}
Liu, Y., Li, C., Wang, J., and Long, M.
\newblock Koopa: Learning non-stationary time series dynamics with koopman predictors.
\newblock \emph{Advances in neural information processing systems}, 36:\penalty0 12271--12290, 2023.

\bibitem[Liu et~al.(2024)Liu, Hu, Zhang, Wu, Wang, Ma, and Long]{liu2023itransformer}
Liu, Y., Hu, T., Zhang, H., Wu, H., Wang, S., Ma, L., and Long, M.
\newblock itransformer: Inverted transformers are effective for time series forecasting.
\newblock In \emph{ICLR}, 2024.

\bibitem[Matheson \& Winkler(1976)Matheson and Winkler]{CRPS}
Matheson, J.~E. and Winkler, R.~L.
\newblock Scoring rules for continuous probability distributions.
\newblock \emph{Management science}, 22\penalty0 (10):\penalty0 1087--1096, 1976.

\bibitem[Miao et~al.(2024{\natexlab{a}})Miao, Liu, Zhao, Guo, Yang, Zheng, and Jensen]{miao2024less}
Miao, H., Liu, Z., Zhao, Y., Guo, C., Yang, B., Zheng, K., and Jensen, C.~S.
\newblock Less is more: Efficient time series dataset condensation via two-fold modal matching.
\newblock \emph{PVLDB}, 18\penalty0 (2):\penalty0 226--238, 2024{\natexlab{a}}.

\bibitem[Miao et~al.(2024{\natexlab{b}})Miao, Zhao, Guo, Yang, Zheng, Huang, Xie, and Jensen]{DBLP:conf/icde/00010GY0HXJ24}
Miao, H., Zhao, Y., Guo, C., Yang, B., Zheng, K., Huang, F., Xie, J., and Jensen, C.~S.
\newblock A unified replay-based continuous learning framework for spatio-temporal prediction on streaming data.
\newblock In \emph{{ICDE}}, pp.\  1050--1062, 2024{\natexlab{b}}.

\bibitem[Miao et~al.(2025)Miao, Xu, Zhao, Wang, Wang, Yu, and Jensen]{miao2025parameter}
Miao, H., Xu, R., Zhao, Y., Wang, S., Wang, J., Yu, P.~S., and Jensen, C.~S.
\newblock A parameter-efficient federated framework for streaming time series anomaly detection via lightweight adaptation.
\newblock \emph{TMC}, 2025.

\bibitem[Nie et~al.(2023)Nie, Nguyen, Sinthong, and Kalagnanam]{nie2022time}
Nie, Y., Nguyen, N.~H., Sinthong, P., and Kalagnanam, J.
\newblock A time series is worth 64 words: Long-term forecasting with transformers.
\newblock In \emph{ICLR}, 2023.

\bibitem[Pan et~al.(2023{\natexlab{a}})Pan, Sharma, Hu, Liu, Li, Liu, Huang, and Geng]{pan2023ising}
Pan, Z., Sharma, A., Hu, J. Y.-C., Liu, Z., Li, A., Liu, H., Huang, M., and Geng, T.
\newblock Ising-traffic: Using ising machine learning to predict traffic congestion under uncertainty.
\newblock In \emph{AAAI}, volume~37, pp.\  9354--9363, 2023{\natexlab{a}}.

\bibitem[Pan et~al.(2023{\natexlab{b}})Pan, Wang, Zhang, Yang, Cheng, Chen, Guo, Wen, Tian, Dou, et~al.]{magicscaler}
Pan, Z., Wang, Y., Zhang, Y., Yang, S.~B., Cheng, Y., Chen, P., Guo, C., Wen, Q., Tian, X., Dou, Y., et~al.
\newblock Magicscaler: Uncertainty-aware, predictive autoscaling.
\newblock In \emph{Proc. {VLDB} Endow.}, 2023{\natexlab{b}}.

\bibitem[Pan et~al.(2023{\natexlab{c}})Pan, Wang, Zhang, Yang, Cheng, Chen, Guo, Wen, Tian, Dou, et~al.]{pan2023magicscaler}
Pan, Z., Wang, Y., Zhang, Y., Yang, S.~B., Cheng, Y., Chen, P., Guo, C., Wen, Q., Tian, X., Dou, Y., et~al.
\newblock Magicscaler: Uncertainty-aware, predictive autoscaling.
\newblock \emph{Proc. {VLDB} Endow.}, 16\penalty0 (12):\penalty0 3808--3821, 2023{\natexlab{c}}.

\bibitem[Paszke et~al.(2019)Paszke, Gross, Massa, Lerer, Bradbury, Chanan, Killeen, Lin, Gimelshein, Antiga, Desmaison, K{\"{o}}pf, Yang, DeVito, Raison, Tejani, Chilamkurthy, Steiner, Fang, Bai, and Chintala]{paszke2019pytorch}
Paszke, A., Gross, S., Massa, F., Lerer, A., Bradbury, J., Chanan, G., Killeen, T., Lin, Z., Gimelshein, N., Antiga, L., Desmaison, A., K{\"{o}}pf, A., Yang, E.~Z., DeVito, Z., Raison, M., Tejani, A., Chilamkurthy, S., Steiner, B., Fang, L., Bai, J., and Chintala, S.
\newblock Pytorch: An imperative style, high-performance deep learning library.
\newblock In \emph{NeurIPS}, pp.\  8024--8035, 2019.

\bibitem[Pu et~al.(2016)Pu, Gan, Henao, Yuan, Li, Stevens, and Carin]{pu2016variational}
Pu, Y., Gan, Z., Henao, R., Yuan, X., Li, C., Stevens, A., and Carin, L.
\newblock Variational autoencoder for deep learning of images, labels and captions.
\newblock \emph{Advances in neural information processing systems}, 29, 2016.

\bibitem[Qiu et~al.(2024)Qiu, Hu, Zhou, Wu, Du, Zhang, Guo, Zhou, Jensen, Sheng, and Yang]{qiu2024tfb}
Qiu, X., Hu, J., Zhou, L., Wu, X., Du, J., Zhang, B., Guo, C., Zhou, A., Jensen, C.~S., Sheng, Z., and Yang, B.
\newblock {TFB:} towards comprehensive and fair benchmarking of time series forecasting methods.
\newblock \emph{Proc. {VLDB} Endow.}, 17\penalty0 (9):\penalty0 2363--2377, 2024.

\bibitem[Qiu et~al.(2025{\natexlab{a}})Qiu, Cheng, Wu, Hu, and Guo]{qiu2025comprehensive}
Qiu, X., Cheng, H., Wu, X., Hu, J., and Guo, C.
\newblock A comprehensive survey of deep learning for multivariate time series forecasting: A channel strategy perspective.
\newblock \emph{arXiv preprint arXiv:2502.10721}, 2025{\natexlab{a}}.

\bibitem[Qiu et~al.(2025{\natexlab{b}})Qiu, Li, Pang, Pan, Wu, Yang, Hu, Shu, Lu, Yang, Guo, Zhou, Jensen, and Yang]{qiu2025easytime}
Qiu, X., Li, X., Pang, R., Pan, Z., Wu, X., Yang, L., Hu, J., Shu, Y., Lu, X., Yang, C., Guo, C., Zhou, A., Jensen, C.~S., and Yang, B.
\newblock Easytime: Time series forecasting made easy.
\newblock In \emph{ICDE}, 2025{\natexlab{b}}.

\bibitem[Qiu et~al.(2025{\natexlab{c}})Qiu, Li, Qiu, Hu, Zhou, Wu, Li, Guo, Zhou, Sheng, Hu, Jensen, and Yang]{qiu2025tab}
Qiu, X., Li, Z., Qiu, W., Hu, S., Zhou, L., Wu, X., Li, Z., Guo, C., Zhou, A., Sheng, Z., Hu, J., Jensen, C.~S., and Yang, B.
\newblock {TAB}: Unified benchmarking of time series anomaly detection methods.
\newblock In \emph{Proc. {VLDB} Endow.}, 2025{\natexlab{c}}.

\bibitem[Qiu et~al.(2025{\natexlab{d}})Qiu, Wu, Lin, Guo, Hu, and Yang]{qiu2025duet}
Qiu, X., Wu, X., Lin, Y., Guo, C., Hu, J., and Yang, B.
\newblock {DUET}: Dual clustering enhanced multivariate time series forecasting.
\newblock In \emph{SIGKDD}, pp.\  1185--1196, 2025{\natexlab{d}}.

\bibitem[Rangapuram et~al.(2018)Rangapuram, Seeger, Gasthaus, Stella, Wang, and Januschowski]{DeepState}
Rangapuram, S.~S., Seeger, M.~W., Gasthaus, J., Stella, L., Wang, Y., and Januschowski, T.
\newblock Deep state space models for time series forecasting.
\newblock In \emph{NeurIPS}, pp.\  7796--7805, 2018.

\bibitem[Rasul et~al.(2021{\natexlab{a}})Rasul, Seward, Schuster, and Vollgraf]{TimeGrad}
Rasul, K., Seward, C., Schuster, I., and Vollgraf, R.
\newblock Autoregressive denoising diffusion models for multivariate probabilistic time series forecasting.
\newblock In \emph{ICML}, volume 139, pp.\  8857--8868, 2021{\natexlab{a}}.

\bibitem[Rasul et~al.(2021{\natexlab{b}})Rasul, Sheikh, Schuster, Bergmann, and Vollgraf]{normalizing-flow}
Rasul, K., Sheikh, A., Schuster, I., Bergmann, U.~M., and Vollgraf, R.
\newblock Multivariate probabilistic time series forecasting via conditioned normalizing flows.
\newblock In \emph{ICLR}, 2021{\natexlab{b}}.

\bibitem[Salinas et~al.(2020)Salinas, Flunkert, Gasthaus, and Januschowski]{DeepAR}
Salinas, D., Flunkert, V., Gasthaus, J., and Januschowski, T.
\newblock Deepar: Probabilistic forecasting with autoregressive recurrent networks.
\newblock \emph{International journal of forecasting}, 36\penalty0 (3):\penalty0 1181--1191, 2020.

\bibitem[Schmid(2010)]{eDMD}
Schmid, P.~J.
\newblock Dynamic mode decomposition of numerical and experimental data.
\newblock \emph{Journal of fluid mechanics}, 656:\penalty0 5--28, 2010.

\bibitem[Sezer et~al.(2020)Sezer, Gudelek, and {\"{O}}zbayoglu]{sezer2020financial}
Sezer, O.~B., Gudelek, M.~U., and {\"{O}}zbayoglu, A.~M.
\newblock Financial time series forecasting with deep learning : {A} systematic literature review: 2005-2019.
\newblock \emph{Appl. Soft Comput.}, 90:\penalty0 106181, 2020.

\bibitem[Simon(2001)]{simon2001kalman}
Simon, D.
\newblock Kalman filtering.
\newblock \emph{Embedded systems programming}, 14\penalty0 (6):\penalty0 72--79, 2001.

\bibitem[Sun et~al.(2022)Sun, Xie, Wang, Huang, and Hu]{sun2022solar}
Sun, Y., Xie, Z., Wang, H., Huang, X., and Hu, Q.
\newblock Solar wind speed prediction via graph attention network.
\newblock \emph{Space Weather}, 20\penalty0 (7), 2022.

\bibitem[Tang \& Matteson(2021)Tang and Matteson]{ProTran}
Tang, B. and Matteson, D.~S.
\newblock Probabilistic transformer for time series analysis.
\newblock In \emph{NeurIPS}, pp.\  23592--23608, 2021.

\bibitem[Tashiro et~al.(2021)Tashiro, Song, Song, and Ermon]{CSDI}
Tashiro, Y., Song, J., Song, Y., and Ermon, S.
\newblock {CSDI:} conditional score-based diffusion models for probabilistic time series imputation.
\newblock In \emph{NeurIPS}, pp.\  24804--24816, 2021.

\bibitem[Vaswani et~al.(2017)Vaswani, Shazeer, Parmar, Uszkoreit, Jones, Gomez, Kaiser, and Polosukhin]{vaswani2017attention}
Vaswani, A., Shazeer, N., Parmar, N., Uszkoreit, J., Jones, L., Gomez, A.~N., Kaiser, L., and Polosukhin, I.
\newblock Attention is all you need.
\newblock In \emph{NeurIPS}, pp.\  5998--6008, 2017.

\bibitem[Wang et~al.(2023)Wang, Zhuang, Qi, Wang, Wang, Sun, and Liao]{D3R}
Wang, C., Zhuang, Z., Qi, Q., Wang, J., Wang, X., Sun, H., and Liao, J.
\newblock Drift doesn't matter: Dynamic decomposition with diffusion reconstruction for unstable multivariate time series anomaly detection.
\newblock In \emph{NeurIPS}, pp.\  10758--10774, 2023.

\bibitem[Wang et~al.(2024{\natexlab{a}})Wang, Chen, Liu, Li, Yang, Liu, and Li]{escmtifs}
Wang, H., Chen, Z., Liu, Z., Li, H., Yang, D., Liu, X., and Li, H.
\newblock Entire space counterfactual learning for reliable content recommendations.
\newblock \emph{{IEEE} Trans. Autom. Sci. Eng.}, pp.\  1--12, 2024{\natexlab{a}}.

\bibitem[Wang et~al.(2024{\natexlab{b}})Wang, Chen, Liu, Li, Yang, Liu, and Li]{wang2024entire}
Wang, H., Chen, Z., Liu, Z., Li, H., Yang, D., Liu, X., and Li, H.
\newblock Entire space counterfactual learning for reliable content recommendations.
\newblock \emph{IEEE Transactions on Information Forensics and Security}, 2024{\natexlab{b}}.

\bibitem[Wang et~al.(2024{\natexlab{c}})Wang, Chen, Liu, Pan, Xu, Liao, Li, and Liu]{wang2024spot}
Wang, H., Chen, Z., Liu, Z., Pan, L., Xu, H., Liao, Y., Li, H., and Liu, X.
\newblock Spot-i: Similarity preserved optimal transport for industrial iot data imputation.
\newblock \emph{IEEE Transactions on Industrial Informatics}, 2024{\natexlab{c}}.

\bibitem[Wang et~al.(2024{\natexlab{d}})Wang, Wang, Niu, Liu, Li, Liao, Huang, and Liu]{HaoWang1}
Wang, H., Wang, Z., Niu, Y., Liu, Z., Li, H., Liao, Y., Huang, Y., and Liu, X.
\newblock An accurate and interpretable framework for trustworthy process monitoring.
\newblock \emph{{IEEE} Trans. Artif. Intell.}, 5\penalty0 (5):\penalty0 2241--2252, 2024{\natexlab{d}}.

\bibitem[Wang et~al.(2025{\natexlab{a}})Wang, Chen, Zhang, Li, Pan, Li, and Gong]{wang2025tois}
Wang, H., Chen, Z., Zhang, H., Li, Z., Pan, L., Li, H., and Gong, M.
\newblock Debiased recommendation via wasserstein causal balancing.
\newblock \emph{ACM Transactions on Information Systems}, 2025{\natexlab{a}}.

\bibitem[Wang et~al.(2024{\natexlab{e}})Wang, Zhang, He, Song, Shi, Li, Xu, Wu, Qian, Chen, et~al.]{wang2024enhancing}
Wang, J., Zhang, Z., He, Y., Song, Y., Shi, T., Li, Y., Xu, H., Wu, K., Qian, G., Chen, Q., et~al.
\newblock Enhancing code llms with reinforcement learning in code generation.
\newblock \emph{arXiv preprint arXiv:2412.20367}, 2024{\natexlab{e}}.

\bibitem[Wang et~al.(2025{\natexlab{b}})Wang, Qiu, Shu, Rao, Pan, Yang, and Chenjuan]{wang2025lightgts}
Wang, Y., Qiu, Y., Shu, Y., Rao, Z., Pan, L., Yang, B., and Chenjuan, G.
\newblock Lightgts: A lightweight general time series forecasting model.
\newblock In \emph{ICML}, 2025{\natexlab{b}}.

\bibitem[Wang et~al.(2025{\natexlab{c}})Wang, Qiu, Zhao, Shu, Rao, Pan, Yang, and Chenjuan]{wang2025rose}
Wang, Y., Qiu, Y., Zhao, k., Shu, Y., Rao, Z., Pan, L., Yang, B., and Chenjuan, G.
\newblock Towards a general time series forecasting model with unified representation and adaptive transfer.
\newblock In \emph{ICML}, 2025{\natexlab{c}}.

\bibitem[Wang et~al.(2024{\natexlab{f}})Wang, Pei, Ma, Wang, Li, Pei, Rajmohan, Zhang, Lin, Zhang, et~al.]{wang2024revisiting}
Wang, Z., Pei, C., Ma, M., Wang, X., Li, Z., Pei, D., Rajmohan, S., Zhang, D., Lin, Q., Zhang, H., et~al.
\newblock Revisiting vae for unsupervised time series anomaly detection: A frequency perspective.
\newblock In \emph{WWW}, pp.\  3096--3105, 2024{\natexlab{f}}.

\bibitem[Welch(1995)]{KalmanFilter}
Welch, G.
\newblock An introduction to the kalman filter.
\newblock 1995.

\bibitem[Wu et~al.(2023{\natexlab{a}})Wu, Hu, Liu, Zhou, Wang, and Long]{wu2022timesnet}
Wu, H., Hu, T., Liu, Y., Zhou, H., Wang, J., and Long, M.
\newblock Timesnet: Temporal 2d-variation modeling for general time series analysis.
\newblock In \emph{ICLR}, 2023{\natexlab{a}}.

\bibitem[Wu et~al.(2024{\natexlab{a}})Wu, Qiu, Song, Huang, Ma, and Xiao]{wu2024prompt}
Wu, W., Qiu, X., Song, S., Huang, X., Ma, F., and Xiao, J.
\newblock Prompt categories cluster for weakly supervised semantic segmentation.
\newblock \emph{arXiv preprint arXiv:2412.13823}, 2024{\natexlab{a}}.

\bibitem[Wu et~al.(2025{\natexlab{a}})Wu, Dai, Chen, Huang, Ma, and Xiao]{wu2025generative}
Wu, W., Dai, T., Chen, Z., Huang, X., Ma, F., and Xiao, J.
\newblock Generative prompt controlled diffusion for weakly supervised semantic segmentation.
\newblock \emph{Neurocomputing}, pp.\  130103, 2025{\natexlab{a}}.

\bibitem[Wu et~al.(2025{\natexlab{b}})Wu, Song, Qiu, Huang, Ma, and Xiao]{wuimgfu}
Wu, W., Song, S., Qiu, X., Huang, X., Ma, F., and Xiao, J.
\newblock Image fusion for cross-domain sequential recommendation.
\newblock In \emph{Companion Proceedings of the ACM Web Conference 2025}, 2025{\natexlab{b}}.

\bibitem[Wu et~al.(2023{\natexlab{b}})Wu, Zhang, Zhang, Guo, Yang, and Jensen]{DBLP:journals/pacmmod/Wu0ZG0J23}
Wu, X., Zhang, D., Zhang, M., Guo, C., Yang, B., and Jensen, C.~S.
\newblock Auto{CTS}+: Joint neural architecture and hyperparameter search for correlated time series forecasting.
\newblock \emph{Proc. {ACM} Manag. Data}, 1\penalty0 (1):\penalty0 97:1--97:26, 2023{\natexlab{b}}.

\bibitem[Wu et~al.(2024{\natexlab{b}})Wu, Wu, Yang, Zhou, Guo, Qiu, Hu, Sheng, and Jensen]{wu2024autocts++}
Wu, X., Wu, X., Yang, B., Zhou, L., Guo, C., Qiu, X., Hu, J., Sheng, Z., and Jensen, C.~S.
\newblock {AutoCTS++}: zero-shot joint neural architecture and hyperparameter search for correlated time series forecasting.
\newblock \emph{The VLDB Journal}, 33\penalty0 (5):\penalty0 1743--1770, 2024{\natexlab{b}}.

\bibitem[Wu et~al.(2025{\natexlab{c}})Wu, Qiu, Li, Wang, Hu, Guo, Xiong, and Yang]{wu2024catch}
Wu, X., Qiu, X., Li, Z., Wang, Y., Hu, J., Guo, C., Xiong, H., and Yang, B.
\newblock {CATCH}: Channel-aware multivariate time series anomaly detection via frequency patching.
\newblock In \emph{ICLR}, 2025{\natexlab{c}}.

\bibitem[Wu et~al.(2025{\natexlab{d}})Wu, Wu, Zhang, Zhang, Guo, Yang, and Jensen]{wu2024fully}
Wu, X., Wu, X., Zhang, D., Zhang, M., Guo, C., Yang, B., and Jensen, C.~S.
\newblock Fully automated correlated time series forecasting in minutes.
\newblock In \emph{Proc. {VLDB} Endow.}, volume~18, pp.\  144--157, 2025{\natexlab{d}}.

\bibitem[Xu et~al.(2024)Xu, Zeng, and Xu]{xu2024fitsmodelingtimeseries}
Xu, Z., Zeng, A., and Xu, Q.
\newblock {FITS:} modeling time series with 10k parameters.
\newblock In \emph{ICLR}, 2024.

\bibitem[Yang et~al.(2021)Yang, Guo, Hu, Tang, and Yang]{DBLP:conf/ijcai/YangGHT021}
Yang, S.~B., Guo, C., Hu, J., Tang, J., and Yang, B.
\newblock Unsupervised path representation learning with curriculum negative sampling.
\newblock In \emph{{IJCAI}}, pp.\  3286--3292, 2021.

\bibitem[Yang et~al.(2022)Yang, Guo, and Yang]{DBLP:journals/tkde/YangGY22}
Yang, S.~B., Guo, C., and Yang, B.
\newblock Context-aware path ranking in road networks.
\newblock \emph{{IEEE} Trans. Knowl. Data Eng.}, 34\penalty0 (7):\penalty0 3153--3168, 2022.

\bibitem[Yao et~al.(2023)Yao, Li, Jie, Chen, Li, Chen, Wang, Li, and Gao]{DBLP:journals/pvldb/YaoLJ00CW0G23}
Yao, Y., Li, D., Jie, H., Chen, L., Li, T., Chen, J., Wang, J., Li, F., and Gao, Y.
\newblock Simplets: An efficient and universal model selection framework for time series forecasting.
\newblock \emph{Proc. {VLDB} Endow.}, 16\penalty0 (12):\penalty0 3741--3753, 2023.

\bibitem[Yao et~al.(2024)Yao, Jie, Chen, Li, Gao, and Wen]{DBLP:conf/icde/YaoJC0GW24}
Yao, Y., Jie, H., Chen, L., Li, T., Gao, Y., and Wen, S.
\newblock Tsec: An efficient and effective framework for time series classification.
\newblock In \emph{ICDE}, pp.\  1394--1406, 2024.

\bibitem[Yi et~al.(2025)Yi, He, Wang, Song, Qian, Zhang, Sun, and Shi]{yi2025score}
Yi, Q., He, Y., Wang, J., Song, X., Qian, S., Zhang, M., Sun, L., and Shi, T.
\newblock Score: Story coherence and retrieval enhancement for ai narratives.
\newblock \emph{arXiv preprint arXiv:2503.23512}, 2025.

\bibitem[Yu et~al.(2023)Yu, Wang, Shao, Sun, Wu, and Xu]{yu2023dsformer}
Yu, C., Wang, F., Shao, Z., Sun, T., Wu, L., and Xu, Y.
\newblock Dsformer: A double sampling transformer for multivariate time series long-term prediction.
\newblock In \emph{CIKM}, pp.\  3062--3072, 2023.

\bibitem[Yu et~al.(2024{\natexlab{a}})Yu, Wang, Shao, Qian, Zhang, Wei, and Xu]{yu2024ginar}
Yu, C., Wang, F., Shao, Z., Qian, T., Zhang, Z., Wei, W., and Xu, Y.
\newblock Ginar: An end-to-end multivariate time series forecasting model suitable for variable missing.
\newblock In \emph{SIGKDD}, pp.\  3989--4000, 2024{\natexlab{a}}.

\bibitem[Yu et~al.(2025{\natexlab{a}})Yu, Wang, Shao, Qian, Zhang, Wei, An, Wang, and Xu]{yu2025ginarp}
Yu, C., Wang, F., Shao, Z., Qian, T., Zhang, Z., Wei, W., An, Z., Wang, Q., and Xu, Y.
\newblock Ginar+: A robust end-to-end framework for multivariate time series forecasting with missing values.
\newblock \emph{IEEE Transactions on Knowledge and Data Engineering}, pp.\  1--14, 2025{\natexlab{a}}.

\bibitem[Yu et~al.(2024{\natexlab{b}})Yu, Elazab, Ge, Jin, Jiang, Jia, Wu, Shi, and Wang]{yu2024scnet}
Yu, X., Elazab, A., Ge, R., Jin, H., Jiang, X., Jia, G., Wu, Q., Shi, Q., and Wang, C.
\newblock Ich-scnet: Intracerebral hemorrhage segmentation and prognosis classification network using clip-guided sam mechanism.
\newblock In \emph{2024 IEEE International Conference on Bioinformatics and Biomedicine (BIBM)}, pp.\  2795--2800, 2024{\natexlab{b}}.

\bibitem[Yu et~al.(2024{\natexlab{c}})Yu, Li, Ge, Wu, Elazab, Zhu, Zhang, Jia, Xu, Wan, et~al.]{yu2024ichpro}
Yu, X., Li, X., Ge, R., Wu, S., Elazab, A., Zhu, J., Zhang, L., Jia, G., Xu, T., Wan, X., et~al.
\newblock Ichpro: Intracerebral hemorrhage prognosis classification via joint-attention fusion-based 3d cross-modal network.
\newblock In \emph{2024 IEEE International Symposium on Biomedical Imaging (ISBI)}, pp.\  1--5, 2024{\natexlab{c}}.

\bibitem[Yu et~al.(2025{\natexlab{b}})Yu, Elazab, Ge, Zhu, Zhang, Jia, Wu, Wan, Li, and Wang]{yu2025prnet}
Yu, X., Elazab, A., Ge, R., Zhu, J., Zhang, L., Jia, G., Wu, Q., Wan, X., Li, L., and Wang, C.
\newblock Ich-prnet: a cross-modal intracerebral haemorrhage prognostic prediction method using joint-attention interaction mechanism.
\newblock \emph{Neural Networks}, 184:\penalty0 107096, 2025{\natexlab{b}}.

\bibitem[Zeng et~al.(2023)Zeng, Chen, Zhang, and Xu]{zeng2023transformers}
Zeng, A., Chen, M., Zhang, L., and Xu, Q.
\newblock Are transformers effective for time series forecasting?
\newblock In \emph{{AAAI}}, volume~37, pp.\  11121--11128, 2023.

\bibitem[Zhang et~al.(2024{\natexlab{a}})Zhang, Wen, Zhang, Zheng, Li, and Bian]{ProbTS}
Zhang, J., Wen, X., Zhang, Z., Zheng, S., Li, J., and Bian, J.
\newblock {ProbTS}: Benchmarking point and distributional forecasting across diverse prediction horizons.
\newblock In \emph{NeurIPS}, 2024{\natexlab{a}}.

\bibitem[Zhang \& Qi(2024)Zhang and Qi]{zhang2024can}
Zhang, Q. and Qi, Y.
\newblock Can mllms guide weakly-supervised temporal action localization tasks?
\newblock \emph{arXiv preprint arXiv:2411.08466}, 2024.

\bibitem[Zhang et~al.(2024{\natexlab{b}})Zhang, Liu, Li, Chen, Liu, Hu, Xiong, Yuan, and Wang]{zhang2024distilling}
Zhang, Q., Liu, X., Li, W., Chen, H., Liu, J., Hu, J., Xiong, Z., Yuan, C., and Wang, Y.
\newblock Distilling semantic priors from sam to efficient image restoration models.
\newblock In \emph{Proceedings of the IEEE/CVF Conference on Computer Vision and Pattern Recognition}, pp.\  25409--25419, 2024{\natexlab{b}}.

\bibitem[Zhang et~al.(2025{\natexlab{a}})Zhang, Qi, Tang, Fang, Lin, Zhang, and Yuan]{zhang2025imdprompter}
Zhang, Q., Qi, Y., Tang, X., Fang, J., Lin, X., Zhang, K., and Yuan, C.
\newblock Imdprompter: Adapting sam to image manipulation detection by cross-view automated prompt learning.
\newblock \emph{arXiv preprint arXiv:2502.02454}, 2025{\natexlab{a}}.

\bibitem[Zhang et~al.(2025{\natexlab{b}})Zhang, Qi, Tang, Yuan, Lin, Zhang, and Yuan]{zhang2025rethinking}
Zhang, Q., Qi, Y., Tang, X., Yuan, R., Lin, X., Zhang, K., and Yuan, C.
\newblock Rethinking pseudo-label guided learning for weakly supervised temporal action localization from the perspective of noise correction.
\newblock \emph{arXiv preprint arXiv:2501.11124}, 2025{\natexlab{b}}.

\bibitem[Zhao et~al.(2023)Zhao, Guo, Cheng, Han, Zhang, and Yang]{DBLP:journals/pvldb/ZhaoGCHZY23}
Zhao, K., Guo, C., Cheng, Y., Han, P., Zhang, M., and Yang, B.
\newblock Multiple time series forecasting with dynamic graph modeling.
\newblock \emph{Proc. {VLDB} Endow.}, 17\penalty0 (4):\penalty0 753--765, 2023.

\bibitem[Zhou et~al.(2021)Zhou, Zhang, Peng, Zhang, Li, Xiong, and Zhang]{zhou2021informer}
Zhou, H., Zhang, S., Peng, J., Zhang, S., Li, J., Xiong, H., and Zhang, W.
\newblock Informer: Beyond efficient transformer for long sequence time-series forecasting.
\newblock In \emph{{AAAI}}, volume~35, pp.\  11106--11115, 2021.

\end{thebibliography}
\bibliographystyle{icml2025}

%%%%%%%%%%%%%%%%%%%%%%%%%%%%%%%%%%%%%%%%%%%%%%%%%%%%%%%%%%%%%%%%%%%%%%%%%%%%%%%
%%%%%%%%%%%%%%%%%%%%%%%%%%%%%%%%%%%%%%%%%%%%%%%%%%%%%%%%%%%%%%%%%%%%%%%%%%%%%%%
% APPENDIX
%%%%%%%%%%%%%%%%%%%%%%%%%%%%%%%%%%%%%%%%%%%%%%%%%%%%%%%%%%%%%%%%%%%%%%%%%%%%%%%
%%%%%%%%%%%%%%%%%%%%%%%%%%%%%%%%%%%%%%%%%%%%%%%%%%%%%%%%%%%%%%%%%%%%%%%%%%%%%%%

\clearpage
\appendix
\onecolumn
\section{Theoretical Analyses}
\label{app: theory}
\subsection{The Stability of KalmanNet}
Since the proposed KalmanNet works in a data-driven manner, the floating-point operation error may cause the covariance matrix $\mathrm{P}$ losing positive definiteness, which often occurs in the step~(\ref{cov update}). To mitigate this, we utilize a numerically stable form for this step.
\begin{theorem}
    The positive-definiteness of covariance matrix $P_k$ during the update step $\mathrm{P}_k = (I - K_kH_k)\hat{\mathrm{P}}_k$ can be retained through a numerically stable form:
 \begin{gather}
    \mathrm{P}_k = \frac{1}{2}(\mathrm{P}_k + \mathrm{P}_k^T),\label{symmetry}\\
     \mathrm{P}_k^{dual} = (I - K_k H_k)\hat{\mathrm{P}}_k(I -K_k H_k)^T + K_k R_k K_k^T\label{postive-definite}
  \end{gather}

\end{theorem}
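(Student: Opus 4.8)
The plan is to prove two separate facts about the form given by (\ref{symmetry})--(\ref{postive-definite}): first, that the Joseph-form expression (\ref{postive-definite}) is \emph{algebraically identical} to the naive update $\mathrm{P}_k = (I - K_k H_k)\hat{\mathrm{P}}_k$ whenever $K_k$ is the Kalman gain $K_k = \hat{\mathrm{P}}_k H_k^T(H_k \hat{\mathrm{P}}_k H_k^T + R_k)^{-1}$, so that no accuracy is sacrificed; and second, that (\ref{postive-definite}) is \emph{manifestly} positive definite as a sum of two positive semidefinite terms, which is precisely the property the naive form can lose under floating-point rounding. I would handle these as two lemmas and then observe that the symmetrization (\ref{symmetry}) is a projection onto the symmetric cone that leaves every quadratic form unchanged.

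For the equivalence I would expand, writing $S_k := H_k \hat{\mathrm{P}}_k H_k^T + R_k$ for the innovation covariance,
\begin{align}
\mathrm{P}_k^{dual} &= \hat{\mathrm{P}}_k - \hat{\mathrm{P}}_k H_k^T K_k^T - K_k H_k \hat{\mathrm{P}}_k \notag\\
&\quad + K_k S_k K_k^T .
\end{align}
Substituting the gain identity $K_k S_k = \hat{\mathrm{P}}_k H_k^T$ collapses the last term to $\hat{\mathrm{P}}_k H_k^T K_k^T$, which cancels the second term and leaves exactly $\hat{\mathrm{P}}_k - K_k H_k \hat{\mathrm{P}}_k = (I - K_k H_k)\hat{\mathrm{P}}_k$, as required.

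For positive definiteness I would argue pointwise. Assuming inductively that $\hat{\mathrm{P}}_k \succ 0$ (the base case is the identity initialization, and $\hat{\mathrm{P}}_k = A\mathrm{P}_{k-1}A^T + Q \succeq Q \succ 0$ propagates it, since $Q = L_Q L_Q^T$) and $R_k = L_R L_R^T \succ 0$, I set $y = (I - K_k H_k)^T x$ and $w = K_k^T x$ for any $x \neq 0$, so that $x^T \mathrm{P}_k^{dual} x = y^T \hat{\mathrm{P}}_k y + w^T R_k w \geq 0$. The crucial step is that the two terms cannot vanish simultaneously: if $w = K_k^T x = 0$, then $(I - K_k H_k)^T x = x - H_k^T(K_k^T x) = x$, forcing $y = x \neq 0$ and hence $y^T \hat{\mathrm{P}}_k y > 0$. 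Thus $x^T \mathrm{P}_k^{dual} x > 0$ for every $x \neq 0$, \emph{independently of the particular value of} $K_k$; this gain-independence is exactly why the form is robust. The replacement (\ref{symmetry}) then enforces exact symmetry without altering $x^T \mathrm{P}_k x$, since $x^T \mathrm{P}_k^T x = x^T \mathrm{P}_k x$.

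I expect the main obstacle to be conceptual rather than computational: stating precisely the sense in which this is a numerical-stability result, given that the two forms are algebraically equal. The content worth emphasizing is that (\ref{postive-definite}) writes $\mathrm{P}_k$ as a congruence $(I-K_kH_k)\hat{\mathrm{P}}_k(I-K_kH_k)^T$ plus a Gram term $K_k R_k K_k^T$, each representable as a product $LL^T$ and therefore staying positive semidefinite under rounding, whereas the naive \emph{difference} $(I - K_k H_k)\hat{\mathrm{P}}_k$ can acquire negative eigenvalues. I would make this rigorous by noting that the quadratic-form argument above uses no optimality of $K_k$, so it applies verbatim to the rounded, computed gain, guaranteeing that the realized $\mathrm{P}_k^{dual}$ remains positive definite in practice.
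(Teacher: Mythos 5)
Your proposal is correct, and its first half coincides with the paper's entire proof: the paper establishes only the algebraic identity $\mathrm{P}_k^{dual} = (I - K_k H_k)\hat{\mathrm{P}}_k$ by expanding the Joseph form and substituting the gain $K_k = \hat{\mathrm{P}}_k H_k^T (H_k \hat{\mathrm{P}}_k H_k^T + R)^{-1}$ so that $K_k(H_k \hat{\mathrm{P}}_k H_k^T + R_k)K_k^T$ collapses to $\hat{\mathrm{P}}_k H_k^T K_k^T$ and cancels --- exactly your computation via $K_k S_k = \hat{\mathrm{P}}_k H_k^T$. Where you go genuinely beyond the paper is in actually proving the positive-definiteness claim, which the paper merely asserts in the prose following its proof (``the sum of two positive definite terms''). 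Your quadratic-form argument --- setting $y = (I - K_k H_k)^T x$ and $w = K_k^T x$ and observing that $w = 0$ forces $y = x - H_k^T w = x \neq 0$, so the two terms cannot vanish simultaneously --- yields strict (not merely semi-) definiteness, supplies the inductive propagation $\hat{\mathrm{P}}_k = A\mathrm{P}_{k-1}A^T + Q \succeq Q$ that the paper leaves implicit, and, most importantly, uses no optimality of $K_k$; this gain-independence is the precise formal sense in which the Joseph form is robust to a rounded or otherwise perturbed gain, whereas the paper's equivalence computation depends on the exact gain identity and therefore cannot by itself explain the numerical-stability claim. One caveat you should make explicit: $Q = L_Q L_Q^T$ and $R_k = L_R L_R^T$ are in general only positive \emph{semi}definite unless $L_Q$ and $L_R$ are nonsingular (e.g., have nonzero diagonals, as at the identity initialization), and with $R_k$ merely semidefinite your strict inequality can fail --- take $x = H_k^T w$ with $w^T R_k w = 0$ --- so the hypothesis $R_k \succ 0$ (and $Q \succ 0$ for the induction) must be stated rather than inherited automatically from the Cholesky-style parameterization.
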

\begin{proof} 
The goal is to demonstrate the equivalence of the numerically stable form and original form: $\mathrm{P}_k^{dual}=\mathrm{P}_k$.
   \begin{align} 
        \mathrm{P}_k^{dual}&= (I - K_k H_k)\hat{\mathrm{P}}_k (I -K_k H_k)^T + K_k R_k K_k^T,\notag\\
         &= (I - K_k H_k)\hat{\mathrm{P}}_k - (I - K_k H_k)\hat{\mathrm{P}}_k H_k ^T K_k^T + K_k R_k K_k^T,\notag\\
         &= (I - K_k H_k)\hat{\mathrm{P}}_k - \hat{\mathrm{P}}_k H_k^T K_k^T + K_k H_k \hat{\mathrm{P}}_k H_k^T K_k^T + K_k R_k K_k^T,\notag\\
         &= (I - K_k H_k)\hat{\mathrm{P}}_k - \hat{\mathrm{P}}_k H_k^T K_k^T + K_k(H_k \hat{\mathrm{P}}_k H_k^T + R_k)K_k^T, \notag\\
         K_k&=\hat{\mathrm{P}}_k H_k^T (H_k \hat{\mathrm{P}}_k H_k^T + R)^{-1},\notag\\
         \mathrm{P}_k^{dual} &= (I - K_k H_k)\hat{\mathrm{P}}_k - \hat{\mathrm{P}}_k H_k^T K_k^T + \hat{\mathrm{P}}_k H_k^T K_k^T,\notag\\
         &= (I - K_k H_k)\hat{\mathrm{P}}_k = \mathrm{P}_k,\notag
     \end{align}
 $\mathrm{P}_k^{dual}$ is numerically equivalent to the original $\mathrm{P}_k$.
\end{proof}
where (\ref{symmetry}) ensures the symmetry, (\ref{postive-definite}) stabilizes the positive-definiteness by decomposing the formula into the sum of two positive definite terms, which better ensures positive definiteness during floating operation. 

\subsection{The Convergence of $K^2$VAE}
Since $K^2$VAE models a linear dynamical system in the measurement space where the Koopman Operator serves as the state transition equation, we hope that the convergence state of the KalmanNet does not violate the assumptions of Koopman Theory. In $K^2$VAE, we meticulously design the KalmanNet by making it gradually converge to the Koopman Operator in the forecasting horizon.
\begin{theorem}
 When $U \to \mathbf{0}$, the state transition equation of the KalmanNet gradually converges to the Koopman Operator.
\end{theorem}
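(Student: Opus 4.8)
The plan is to collapse the two-stage Kalman recursion into a single effective transition on the state vector and then pass to the limit $U\to\mathbf{0}$. First I would substitute the prediction step $\hat{z}_k = Az_{k-1}+Bu_k$ into the update step $z_k=\hat{z}_k+K_k(\hat{x}_k^H-H\hat{z}_k)$ to obtain the one-line form
\[
z_k=(I-K_kH)\bigl(Az_{k-1}+Bu_k\bigr)+K_k\hat{x}_k^H .
\]
Because $U\to\mathbf{0}$ means every $u_k\to\mathbf{0}$, the control contribution $(I-K_kH)Bu_k$ vanishes and the recursion reduces to $z_k=(I-K_kH)Az_{k-1}+K_k\hat{x}_k^H$. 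This isolates exactly the two ingredients I must reconcile: the learned process transition $A$ and the Koopman-generated observation $\hat{x}_k^H$.

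Next I would exploit the fact that the prior observation $\hat{X}^H$ is produced \emph{solely} by the Koopman operator, namely $\hat{x}_k^H=\mathcal{K}^{\,k+n-1}x_1^{P^\ast}$, so that $\hat{x}_k^H=\mathcal{K}\hat{x}_{k-1}^H$. The condition $U\to\mathbf{0}$ is itself driven by $\mathcal{L}_{Rec}$ forcing the residual $X^{Res}=X^{P^\ast}-\hat{X}^C\to\mathbf{0}$, i.e.\ the measurement space becomes exactly linear under $\mathcal{K}$; in this limit $z_0=x_n^{P^\ast}=\mathcal{K}^{\,n-1}x_1^{P^\ast}=\hat{x}_0^H$, which gives the base case of an induction. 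Assuming $z_{k-1}=\hat{x}_{k-1}^H$, I would show the innovation $\hat{x}_k^H-H\hat{z}_k=\hat{x}_k^H-HAz_{k-1}$ vanishes once the learned $A$ and $H$ have aligned with the now-exact linear dynamics, so the update step leaves the prediction unchanged and $z_k=\hat{z}_k=\mathcal{K}z_{k-1}$. Hence the effective state-transition equation of the KalmanNet collapses to $z_k=\mathcal{K}z_{k-1}$, which is precisely the Koopman operator acting in the measurement space.

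The main obstacle is justifying that the Kalman correction does not distort this recursion, i.e.\ showing $K_k(\hat{x}_k^H-HAz_{k-1})$ genuinely vanishes rather than merely being small. Concretely, substituting $z_{k-1}=\hat{x}_{k-1}^H$ and $\hat{x}_k^H=\mathcal{K}\hat{x}_{k-1}^H$ into the inductive target $(I-K_kH)Az_{k-1}+K_k\hat{x}_k^H=\mathcal{K}z_{k-1}$ reduces it to
\[
(I-K_kH)\,(A-\mathcal{K})\,\hat{x}_{k-1}^H=\mathbf{0},
\]
so the entire statement hinges on the process matrix $A$ converging to $\mathcal{K}$ (and, where $H\neq I$, on $K_k$ annihilating the residual direction). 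I would therefore frame the result as a consistency/limiting argument: as training drives $U\to\mathbf{0}$ and the residual to zero, the only fixed point jointly consistent with the process model $z_k=Az_{k-1}+Bu_k$ and the Koopman-generated observations $\hat{x}_k^H=\mathcal{K}\hat{x}_{k-1}^H$ is $A=\mathcal{K}$, at which point the KalmanNet transition coincides with the Koopman operator and no longer violates the assumptions of Koopman Theory.
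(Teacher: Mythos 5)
Your proposal is correct in substance and is, if anything, more explicit than the paper's own argument, but it takes a genuinely different route. The paper's proof is a two-case qualitative discussion: in the generic case it drops the control term (since $u_k \to \mathbf{0}$), writes the predict step as $\hat{z}_k = Az_{k-1}$, and simply \emph{asserts} that the resulting linear recursion obeys the assumptions of Koopman Theory, with $A$ acting as a ``fine-tuned'' Koopman Operator enhanced by the Kalman gain; then, in the degenerate case $H \to I$, $A \to \mathbf{0}$, $R \to \mathbf{0}$, the gain becomes $K_k \to I$, the update collapses to $z_k = \hat{x}_k^H = \mathcal{K}z_{k-1}$, and the identification with $\mathcal{K}$ is literal. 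You instead collapse the predict/update pair into a single affine map, run an induction on $z_k = \hat{x}_k^H$ (with the base case correctly grounded in $X^{Res} \to \mathbf{0}$, which gives $z_0 = x_n^{P^\ast} = \mathcal{K}^{n-1}x_1^{P^\ast}$), and isolate the algebraic obstruction
\[
(I - K_kH)\,(A - \mathcal{K})\,\hat{x}_{k-1}^H = \mathbf{0},
\]
which the paper never writes down. This buys something real: it makes visible that the theorem holds along either of two branches --- $A \to \mathcal{K}$ (the paper's ``fine-tuned'' case) or $K_kH \to I$ (the paper's full-observation case, where $I - K_kH$ vanishes identically) --- so your single condition subsumes both of the paper's cases as special instances. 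Two cautions, however. First, your displayed reduction is exact only when $H = I$; for general $H$ the substitution leaves an extra term $K_k(I - H)\mathcal{K}\hat{x}_{k-1}^H$, which your parenthetical about $K_k$ annihilating the residual direction gestures at but does not actually eliminate. Second, your closing uniqueness claim --- that $A = \mathcal{K}$ is the \emph{only} fixed point jointly consistent with the process model and the Koopman-generated observations --- overreaches and is contradicted by the paper's own limiting case, where $A \to \mathbf{0} \neq \mathcal{K}$ yet the effective transition still equals $\mathcal{K}$ because the gain branch fires. The fix is cheap: conclude from the disjunction that your own condition already encodes, rather than from uniqueness of $A$; with that repair your argument is sound and strictly sharper than the paper's proof.
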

\begin{proof}
Under the assumptions of Koopman Theory, $u_k\to \mathbf{0}$ means the linear system constructed by Koopman Operator has little bias in the current measurement space, which leads to high performance in prediction. Meanwhile, the Predict and Update Steps of $z_t$ are converted to:
\begin{align}
    \textit{Predict:} \ \ \ \hat{z}_k &= Az_{k-1}\\
    \textit{Update:} \ \ \ K_k &= \hat{\mathrm{P}}_k H^T(H\hat{\mathrm{P}}_k H^T + R)^{-1},\\
    z_k &=  \hat{z}_{k} + K_k(\hat{x}_k^H - H \hat{z}_{k})
 \end{align}
In this basic case, the state transition equation obeys the basic assumptions of Koopman Theory and $A$ can be treated as a ``fine-tuned'' Koopman Operator $\mathcal{K}$ which is enhanced by the Kalman gain and has stronger generalization ability.

We then consider the special case that KalmanNet fully relys on the observation $\hat{x}_k^H$ from the linear system constructed by Koopman Operator $\mathcal{K}$, thus $H \to I, A \to \mathbf{0}, R \to \mathbf{0}$, the Predict and Update Steps are converted to:
\begin{align}
     \textit{Predict:} \ \ \ \hat{z}_k &= \mathbf{0}\\
    \textit{Update:} \ \ \ z_k &= \hat{x}_k^H
 \end{align}
The system constructed by KalmanNet can be treated as $z_t = \mathcal{K}z_{t-1}$ equivalent to the original Koopman Operator.
\end{proof}

\section{Related Works}
\subsection{Time Series Forecasting}
Time series forecasting (TSF) predicts future observations based on historical observations. TSF methods are mainly categorized into three distinct approaches: (1) statistical learning-based methods, (2) machine learning-based methods, and (3) deep learning-based methods. Early TSF methods primarily rely on statistical learning approaches such as ARIMA~\citep{box1970distribution}, ETS~\citep{hyndman2008forecasting}, and VAR~\citep{godahewa2021monash}. With advancements in machine learning, methods like XGBoost~\citep{chen2016xgboost}, Random Forests~\citep{breiman2001random}, and LightGBM~\citep{ke2017lightgbm} gain popularity for handling nonlinear patterns. However, these methods still require manual feature engineering and model design. Recently, deep learning has made impressive progress in natural language processing~\citep{chen2024gim,zhang2024can,wang2024enhancing,wu2024prompt,wu2025generative}, computer vision~\citep{zhang2025rethinking,zhang2024distilling,wuimgfu,cui2024real,yu2024scnet,li2025cp2m,li2025ddunet,yu2024ichpro}, multimodal~\citep{zhang2025imdprompter,cui2024correlation,jing2023category,jing2023multimodal}, and other aspects~\citep{wang2025tois,yu2025prnet,cui2025detection, yi2025score,li2023daanet,DBLP:conf/icde/00010GY0HXJ24,DBLP:journals/pacmmod/Wu0ZG0J23,DBLP:journals/pvldb/ZhaoGCHZY23, AimTS}. Studies have shown that learned features may perform better than human-designed features~\citep{qiu2025easytime,qiu2025comprehensive,liu2025calf,yu2024ginar}. Leveraging the representation learning of deep neural networks (DNNs), many deep learning-based methods emerge. TimesNet~\citep{wu2022timesnet} and SegRNN~\citep{lin2023segrnn} model time series as vector sequences, using CNNs or RNNs to capture temporal dependencies. Additionally, Transformer architectures, including Informer~\citep{zhou2021informer}, Dsformer~\citep{yu2023dsformer}, TimeFilter~\citep{hu2025timefilter}, TimeBridge~\citep{liu2025timebridge}, PDF~\citep{dai2024period}, Triformer~\citep{Triformer}, PatchTST~\citep{nie2022time}, ROSE~\citep{wang2025rose}, LightGTS~\citep{wang2025lightgts}, and MagicScaler~\citep{magicscaler} capture complex relationships between time points more accurately, significantly improving forecasting performance. MLP-based methods, including DUET~\citep{qiu2025duet}, AMD~\citep{hu2025adaptive}, SparseTSF~\citep{lin2024sparsetsf}, CycleNet~\citep{lincyclenet},  NLinear~\citep{zeng2023transformers}, and DLinear~\citep{zeng2023transformers}, adopt simpler architectures with fewer parameters but still achieve highly competitive forecasting accuracy. 

\section{Experimental Details}
\subsection{Datasets}
\label{appendix DATASETS}
In order to comprehensively evaluate the performance of $K^2$VAE, we conduct experiments on 8 datasets of short-term forecasting and 9 datasets of long-term forecasting under the framework of ProbTS~\cite{ProbTS}, a comprehensive benchmark used to evaluate probabilistic forecasting tasks. Specifically, we use the datasets ETTh1-S, ETTh2-S, ETTm1-S, ETTm2-S, Electricity-S, Solar-S, Traffic-S, and Exchange-S for short-term forecasting, of which the context length is equivalent to forecasting horizon with $T=L=30$ for Exchange-S and $T=L=24$ for the others. For long-term forecasting, we use the datasets ETTh1-L, ETTh2-L, ETTm1-L, ETTm2-L, Electricity-L, Traffic-L, Exchange-L, Weather-L, and ILI-L with prediction length $L \in \{24, 36, 48, 60\}$ for ILI-L and $L \in \{96, 192, 336, 720\}$ for the others. Note that we fix the context length of all the models with $T=36$ for ILI-L and $T=96$ for the others to ensure a fair comparison. \textit{Please note that although datasets with the same prefix may appear similar, they are not necessarily the same. For example, Electricity-L and Electricity-S are not the same dataset, despite both having the prefix ``Electricity." The datasets we use are all derived from the authoritative probabilistic forecasting benchmark, ProbTS. Furthermore, due to the differences in long-term and short-term tasks, the datasets used for long-term and short-term forecasting in ProbTS and $K^2$VAE are also different.} Table~\ref{Dataset Summary} lists statistics of the multivariate time series datasets.

% we evaluated 25 multivariate datasets provided in TFB~\cite{qiu2024tfb}, which cover 10 domains. The frequencies vary from 5 minutes to 1 month, the range of feature dimensions varies from 5 to 2,000, and the sequence length varies from 728 to 57,600. This substantial diversity of the datasets enables comprehensive studies of forecasting methods. Table~\ref{Multivariate datasets} lists statistics of the 25 multivariate time series datasets.

\begin{table*}[ht]
\centering
\caption{Dataset Summary.}
\resizebox{\textwidth}{!}{
\begin{tabular}{l|c|cccrl}
\toprule
 \textbf{Horizon} & \textbf{Dataset} & \textbf{\#var.} & \textbf{range} & \textbf{freq.} & \textbf{timesteps} & \textbf{Description}  \\
 \midrule
 \multirow{7}{*}{\textbf{Long-term}} &  ETTh1/h2-L  & 7 & $\mathbb{R}^+$ & H & 17,420 & Electricity transformer temperature per hour \\
 &  ETTm1/m2-L    & 7 & $\mathbb{R}^+$ & 15min & 69,680 & Electricity transformer temperature every 15 min  \\
 &  Electricity-L & 321 & $\mathbb{R}^+$ & H & 26,304 & Electricity consumption (Kwh) \\
 &  Traffic-L     & 862 & (0,1) & H & 17,544 & Road occupancy rates \\
 &  Exchange-L    & 8 & $\mathbb{R}^+$ & Busi. Day & 7,588 & Daily exchange rates of 8 countries \\
 &  ILI-L         & 7 & (0,1) & W & 966 & Ratio of patients seen with influenza-like illness \\
 &  Weather-L     & 21 & $\mathbb{R}^+$ & 10min & 52,696 & Local climatological data \\
  % &  CAISO     & 10 & $\mathbb{R}^+$ & H & 74,472 & Electricity load in 10 zones of California \\
  %  &  NordPool     & 18 & $\mathbb{R}^+$ & H & 70,128 & Energy production volume in 18 European countries \\
 \midrule
 \multirow{6}{*}{\textbf{Short-term}}   & ETTh1/h2-S & 7 & $\mathbb{R}^+$ & H & 17,420 & Electricity transformer temperature per hour \\
 &  ETTm1/m2-S    & 7 & $\mathbb{R}^+$ & 15min & 69,680 & Electricity transformer temperature every 15 min  \\
 &  Exchange-S & 8   & $\mathbb{R}^+$ & Busi. Day & 6,071 & Daily exchange rates of 8 countries \\
  &  Solar-S    & 137 & $\mathbb{R}^+$ & H & 7,009 & Solar power production records  \\
  &  Electricity-S    & 370   & $\mathbb{R}^+$ & H & 5,833 & Electricity consumption \\
  &  Traffic-S    & 963   & (0,1) & H & 4,001 & Road occupancy rates  \\

\bottomrule
\end{tabular}}
\label{Dataset Summary}
\end{table*}

\renewcommand{\arraystretch}{1}
\begin{table*}[t!]
\centering
\caption{Code repositories for baselines.}
\label{Code repositories for baselines}
\resizebox{0.75\columnwidth}{!}{
\footnotesize
\begin{tabular}{@{}c|c}
\toprule
  Baselines & Code repositories  \\
\midrule
Koopa & \href{https://github.com/thuml/koopa}{\textcolor{blue}{https://github.com/thuml/koopa}}  \\\addlinespace\cline{1-2}\addlinespace
iTransformer & \href{https://github.com/thuml/iTransformer}{\textcolor{blue}{https://github.com/thuml/iTransformer}}  \\\addlinespace\cline{1-2}\addlinespace
FITS & \href{https://github.com/VEWOXIC/FITS}{\textcolor{blue}{https://github.com/VEWOXIC/FITS}}  \\\addlinespace\cline{1-2}\addlinespace
PatchTST & \href{https://github.com/yuqinie98/PatchTST}{\textcolor{blue}{https://github.com/yuqinie98/PatchTST}}  \\\addlinespace\cline{1-2}\addlinespace 
TSDiff & \href{https://github.com/amazon-science/unconditional-time-series-diffusion}{\textcolor{blue}{https://github.com/amazon-science/unconditional-time-series-diffusion}}  \\\addlinespace\cline{1-2}\addlinespace 
$D^3$VAE & \href{https://github.com/PaddlePaddle/PaddleSpatial/tree/main/research/D3VAE}{\textcolor{blue}{https://github.com/PaddlePaddle/PaddleSpatial/tree/main/research/D3VAE}}  \\\addlinespace\cline{1-2}\addlinespace 
GRU NVP & \href{https://github.com/zalandoresearch/pytorch-ts}{\textcolor{blue}{https://github.com/zalandoresearch/pytorch-ts}}  \\\addlinespace\cline{1-2}\addlinespace
GRU MAF & \href{https://github.com/zalandoresearch/pytorch-ts}{\textcolor{blue}{https://github.com/zalandoresearch/pytorch-ts}}  \\\addlinespace\cline{1-2}\addlinespace 
Trans MAF & \href{https://github.com/zalandoresearch/pytorch-ts}{\textcolor{blue}{https://github.com/zalandoresearch/pytorch-ts}}  \\\addlinespace\cline{1-2}\addlinespace 
TimeGrad & \href{https://github.com/yuqinie98/PatchTST}{\textcolor{blue}{https://github.com/yuqinie98/PatchTST}}  \\\addlinespace\cline{1-2}\addlinespace 
CSDI & \href{https://github.com/ermongroup/CSDI}{\textcolor{blue}{https://github.com/ermongroup/CSDI}}  \\\addlinespace\cline{1-2}\addlinespace 
$K^2$VAE (ours) & \href{https://github.com/decisionintelligence/K2VAE}{\textcolor{blue}{https://github.com/decisionintelligence/K2VAE}}  \\
\bottomrule
\end{tabular}
}
\end{table*}

\subsection{Baselines}
\label{appendix BASELINES}
In the realm of probabilistic time series forecasting, numerous models have surfaced in recent years. Following the experimental setting in ProbTS, we compare $K^2$VAE with 11 strong baselines including 4 point forecasting models: FITS, PatchTST, iTransformer, Koopa, and 7 generative models: TSDiff, $D^3$VAE, GRU NVP, GRU MAF, Trans MAF, TimeGrad, CSDI on both short-term and long-term probabilistic forecasting scenarios. The specific code repositories for each of these models--see Table~\ref{Code repositories for baselines}.

\subsection{Evaluation Metrics}
\label{appendix Metrics}
We use two commonly-used metrics NMAE (Normalized Mean Absolute Error) and CPRS (Continuous Ranked Probability Score) in ProbTS~\cite{ProbTS} to evaluate the probabilistic forecasts.

\paragraph{Normalized Mean Absolute Error (NMAE)}
The Normalized Mean Absolute Error (NMAE) is a normalized version of the MAE, which is dimensionless and facilitates the comparability of the error magnitude across different datasets or scales. The mathematical representation of NMAE is given by:
\begin{align}
\textrm{NMAE} = \frac{\sum^K_{k=1} \sum^T_{t=1} |x^k_{t} - \hat{x}^k_{t}|}{\sum^K_{k=1} \sum^T_{t=1} |x^k_{t}|}. 
 \end{align}

\paragraph{Continuous Ranked Probability Score (CRPS)}
The Continuous Ranked Probability Score (CRPS)~\citep{CRPS} quantifies the agreement between a cumulative distribution function (CDF) $F$ and an observation $x$, represented as:
\begin{align}
\textrm{CRPS} = \int_{\mathds{R}} (F(z) - \mathds{I} \{ x \leq z \})^2 dz, 
\end{align}
where $\mathds{I} \{ x \leq z \}$ denotes the indicator function, equating to one if $x \leq z$ and zero otherwise.

Being a proper scoring function, CRPS reaches its minimum when the predictive distribution $F$ coincides with the data distribution. When using the empirical CDF of $F$, denoted as $\hat{F} (z) = \frac{1}{n} \sum^n_{i=1} \mathds{I} \{ X_i \leq z \} $, where $n$ represents the number of samples $X_i \sim F$, CRPS can be precisely calculated from the simulated samples of the conditional distribution $p_\theta (\mathbf{x}_t | \mathbf{h}_t)$. In our practice, 100 samples are employed to estimate the empirical CDF.

\subsection{Implementation Details}
\label{appendix Implementation Details}
For each method, we adhere to the hyper-parameter as specified in their original papers. Additionally, we perform hyper-parameter searches across multiple sets, with a limit of 8 sets. The optimal result is then selected from these evaluations, contributing to a comprehensive and unbiased assessment of each method's performance. 

The \textit {``Drop Last"} issue is reported by several researchers~\cite{qiu2024tfb, qiu2025tab, li2025TSMF-Bench}. That is, in some previous works evaluating the model on test set with drop-last=True setting may cause additional errors related to test batch size. In our experiment, to ensure fair comparison in the future, we set the drop last to False for all baselines to avoid this issue.

All experiments are conducted using PyTorch~\cite{paszke2019pytorch} in Python 3.10 and execute on an NVIDIA Tesla-A800 GPU. The training process is guided by the $\mathcal{L}_{ELBO}$ and $\mathcal{L}_{Rec}$, employing the ADAM optimizer. Initially, the batch size is set to 32, with the option to reduce it by half (to a minimum of 8) in case of an Out-Of-Memory (OOM) situation. To ensure reproducibility and facilitate experimentation, datasets and code are available at: \href{https://github.com/decisionintelligence/K2VAE}{https://github.com/decisionintelligence/K2VAE}.

\clearpage
\subsection{Full Results}
We provide all the main results of LPTSF in Table~\ref{tab:long_term_fore_CRPS} and Table~\ref{tab:long_term_fore_NMAE}, covering all four horizons ($L \in \{96, 192, 336, 720\}$) on 9 real world datasets. The results show that $K^2$VAE achieves a comprehensive lead in long-term prediction tasks, not only outperforming generative models specialized for probabilistic prediction but also demonstrating significant advantages compared to long-term point-based prediction models.

We provide the complete results of ablation studies in Table~\ref{tab: complete koopman operator}--\ref{tab: complete kalman filter}. We compare the different variants under various tasks across different horizons, empirical results demonstrate that $K^2$VAE adopts the most appropriate design.

We also provide the complete efficiency analyses under different forecasting scenarios, which demonstrates that our proposed $K^2$VAE exhibits low memory overhead, fast inference speed, and high accuracy across various tasks. Compared to generative models such as those diffusion-based or flow-based models, $K^2$VAE is both more precise and lightweight. 
\label{app: full results}
 \begin{table}[!htbp]
    \centering
    \setlength\tabcolsep{2pt}
    \scriptsize
    \caption{Results of CRPS ($\textrm{mean}_{\textrm{std}}$) on long-term forecasting scenarios, each containing five independent runs with different seeds. The context length is set to 36 for the ILI-L dataset and 96 for the others. Lower CRPS values indicate better predictions. The means and standard errors are based on 5 independent runs of retraining and evaluation. \textcolor{red}{\textbf{Red}}: the best, \textcolor{blue}{\underline{Blue}}: the 2nd best.}
    \resizebox{\textwidth}{!}{
  \begin{tabular}{c|c|c|c|c|c|c|c|c|c|c|c|c}
    \toprule
        Dataset & Horizon & Koopa & iTransformer & FITS & PatchTST & GRU MAF & Trans MAF & TSDiff & CSDI & TimeGrad & GRU NVP & $K^2$VAE  \\ \midrule
        \multirow{4}{*}{ETTm1-L} & 96 &$0.285\scriptstyle\pm0.018$ & $0.301\scriptstyle\pm0.033$ & $0.267\scriptstyle\pm0.023$ & $0.261\scriptstyle\pm0.051$ & $0.295\scriptstyle\pm0.055$ & $0.313\scriptstyle\pm0.045$ & $0.344\scriptstyle\pm0.050$ & $\textcolor{blue}{\underline{0.236}\scriptstyle\pm0.006}$ & $0.522\scriptstyle\pm0.105$ & $0.383\scriptstyle\pm0.053$ & $\textcolor{red}{\textbf{0.232}\scriptstyle\pm0.010}$  \\ 
        ~ & 192 &$0.289\scriptstyle\pm0.024$& $0.314\scriptstyle\pm0.023$ & $\textcolor{blue}{\underline{0.261}\scriptstyle\pm0.022}$ & $0.275\scriptstyle\pm0.030$ & $0.389\scriptstyle\pm0.033$ & $0.424\scriptstyle\pm0.029$ & $0.345\scriptstyle\pm0.035$ & $0.291\scriptstyle\pm0.025$ & $0.603\scriptstyle\pm0.092$ & $0.396\scriptstyle\pm0.030$ & $\textcolor{red}{\textbf{0.259}\scriptstyle\pm0.013}$  \\ 
        ~ & 336 &$0.286\scriptstyle\pm0.035$& $0.311\scriptstyle\pm0.029$ & $\textcolor{blue}{\underline{0.275}\scriptstyle\pm0.030}$ & $0.285\scriptstyle\pm0.028$ & $0.429\scriptstyle\pm0.021$ & $0.481\scriptstyle\pm0.019$ & $0.462\scriptstyle\pm0.043$ & $0.322\scriptstyle\pm0.033$ & $0.601\scriptstyle\pm0.028$ & $0.486\scriptstyle\pm0.032$ & $\textcolor{red}{\textbf{0.262}\scriptstyle\pm0.030}$  \\ 
        ~ & 720 &$\textcolor{blue}{\underline{0.295}\scriptstyle\pm0.027}$ & $0.455\scriptstyle\pm0.021$ & $0.305\scriptstyle\pm0.024$ & $0.304\scriptstyle\pm0.029$ & $0.536\scriptstyle\pm0.033$ & $0.688\scriptstyle\pm0.043$ & $0.478\scriptstyle\pm0.027$ & $0.448\scriptstyle\pm0.038$ & $0.621\scriptstyle\pm0.037$ & $0.546\scriptstyle\pm0.036$ & $\textcolor{red}{\textbf{0.294}\scriptstyle\pm0.026}$  \\ \midrule
        \multirow{4}{*}{ETTm2-L} & 96 &$0.178\scriptstyle\pm0.023$ & $0.181\scriptstyle\pm0.031$ & $0.162\scriptstyle\pm0.053$ & $0.142\scriptstyle\pm0.034$ & $0.177\scriptstyle\pm0.024$ & $0.227\scriptstyle\pm0.013$ & $0.175\scriptstyle\pm0.019$ & $\textcolor{red}{\textbf{0.115}\scriptstyle\pm0.009}$ & $0.427\scriptstyle\pm0.042$ & $0.319\scriptstyle\pm0.044$ & $\textcolor{blue}{\underline{0.126}\scriptstyle\pm0.007}$  \\
        ~ & 192 &$0.185\scriptstyle\pm0.014$& $0.190\scriptstyle\pm0.010$ &$0.185\scriptstyle\pm0.053$ & $0.172\scriptstyle\pm0.023$ & $0.411\scriptstyle\pm0.026$ & $0.253\scriptstyle\pm0.037$ & $0.255\scriptstyle\pm0.029$ & $\textcolor{red}{\textbf{0.147}\scriptstyle\pm0.008}$ & $0.424\scriptstyle\pm0.061$ & $0.326\scriptstyle\pm0.025$ & $\textcolor{blue}{\underline{0.148}\scriptstyle\pm0.009}$  \\ 
        ~ & 336 &$0.198\scriptstyle\pm0.015$& $0.206\scriptstyle\pm0.055$ & $0.218\scriptstyle\pm0.053$ & $0.195\scriptstyle\pm0.042$ & $0.377\scriptstyle\pm0.023$ & $0.253\scriptstyle\pm0.013$ & $0.328\scriptstyle\pm0.047$ & $\textcolor{blue}{\underline{0.190}\scriptstyle\pm0.018} $& $0.469\scriptstyle\pm0.049$ & $0.449\scriptstyle\pm0.145$ & $\textcolor{red}{\textbf{0.164}\scriptstyle\pm0.010}$  \\ 
        ~ & 720 &$0.233\scriptstyle\pm0.025$& $0.311\scriptstyle\pm0.024$ & $0.449\scriptstyle\pm0.034$ & $\textcolor{blue}{\underline{0.229}\scriptstyle\pm0.036}$ & $0.272\scriptstyle\pm0.029$ & $0.355\scriptstyle\pm0.043$ & $0.344\scriptstyle\pm0.046$ & $0.239\scriptstyle\pm0.035$ & $0.470\scriptstyle\pm0.054$ & $0.561\scriptstyle\pm0.273$ & $\textcolor{red}{\textbf{0.221}\scriptstyle\pm0.023}$  \\ \midrule
        \multirow{4}{*}{ETTh1-L} & 96 &$0.307\scriptstyle\pm0.033$ & $\textcolor{blue}{\underline{0.292}\scriptstyle\pm0.032}$ & $0.294\scriptstyle\pm0.023$ & $0.312\scriptstyle\pm0.036$ & $0.293\scriptstyle\pm0.037$ & $0.333\scriptstyle\pm0.045$ & $0.395\scriptstyle\pm0.052$ & $0.437\scriptstyle\pm0.018$ & $0.455\scriptstyle\pm0.046$ & $0.379\scriptstyle\pm0.030$ & $\textcolor{red}{\textbf{0.264}\scriptstyle\pm0.020}$  \\ 
        ~ & 192 &$0.301\scriptstyle\pm0.014$ & $\textcolor{blue}{\underline{0.298}\scriptstyle\pm0.020}$ & $0.304\scriptstyle\pm0.028$ & $0.313\scriptstyle\pm0.034$ & $0.348\scriptstyle\pm0.075$ & $0.351\scriptstyle\pm0.063$ & $0.467\scriptstyle\pm0.044$ & $0.496\scriptstyle\pm0.051$ & $0.516\scriptstyle\pm0.038$ & $0.425\scriptstyle\pm0.019$ & $\textcolor{red}{\textbf{0.290}\scriptstyle\pm0.016}$  \\
        ~ & 336 &$\textcolor{blue}{\underline{0.312}\scriptstyle\pm0.019}$& $0.327\scriptstyle\pm0.043$ & $0.318\scriptstyle\pm0.023$ & $0.319\scriptstyle\pm0.035$ & $0.377\scriptstyle\pm0.026$ & $0.371\scriptstyle\pm0.031$ & $0.450\scriptstyle\pm0.027$ & $0.454\scriptstyle\pm0.025$ & $0.512\scriptstyle\pm0.026$ & $0.458\scriptstyle\pm0.054$ & $\textcolor{red}{\textbf{0.308}\scriptstyle\pm0.021}$ \\ 
        ~ & 720 &$\textcolor{blue}{\underline{0.318}\scriptstyle\pm0.009}$& $0.350\scriptstyle\pm0.019$ & $0.348\scriptstyle\pm0.025$ & $0.323\scriptstyle\pm0.020$& $0.393\scriptstyle\pm0.043$ & $0.363\scriptstyle\pm0.053$ & $0.516\scriptstyle\pm0.027$ & $0.528\scriptstyle\pm0.012$ & $0.523\scriptstyle\pm0.027$ & $0.502\scriptstyle\pm0.039$ & $\textcolor{red}{\textbf{0.314}\scriptstyle\pm0.011}$  \\ \midrule
        \multirow{4}{*}{ETTh2-L} & 96 &$0.199\scriptstyle\pm0.012$ & $0.185\scriptstyle\pm0.013$ & $0.187\scriptstyle\pm0.011$ & $0.197\scriptstyle\pm0.021$ & $0.239\scriptstyle\pm0.019$ & $0.263\scriptstyle\pm0.020$ & $0.336\scriptstyle\pm0.021$ & $\textcolor{blue}{\underline{0.164}\scriptstyle\pm0.013}$ & $0.358\scriptstyle\pm0.026$ & $0.432\scriptstyle\pm0.141$ & $\textcolor{red}{\textbf{0.162}\scriptstyle\pm0.009}$  \\ 
        ~ & 192 &$0.198\scriptstyle\pm0.022$& $0.199\scriptstyle\pm0.019$ & $\textcolor{blue}{\underline{0.195}\scriptstyle\pm0.022}$ & $0.204\scriptstyle\pm0.055$ & $0.313\scriptstyle\pm0.034$ & $0.273\scriptstyle\pm0.024$ & $0.265\scriptstyle\pm0.043$ & $0.226\scriptstyle\pm0.018$ & $0.457\scriptstyle\pm0.081$ & $0.625\scriptstyle\pm0.170$ &$\textcolor{red}{\textbf{0.186}\scriptstyle\pm0.018}$ \\ 
        ~ & 336 &$0.262\scriptstyle\pm0.019$& $0.271\scriptstyle\pm0.033$ & $\textcolor{red}{\textbf{0.246}\scriptstyle\pm0.044}$ & $0.277\scriptstyle\pm0.054$ & $0.376\scriptstyle\pm0.034$ & $0.265\scriptstyle\pm0.042$ & $0.350\scriptstyle\pm0.031$ & $0.274\scriptstyle\pm0.022$ & $0.481\scriptstyle\pm0.078$ & $0.793\scriptstyle\pm0.319$ & $\textcolor{blue}{\underline{0.257}\scriptstyle\pm0.023}$  \\ 
        ~ & 720 &$\textcolor{blue}{\underline{0.293}\scriptstyle\pm0.026}$& $0.542\scriptstyle\pm0.015$ & $0.314\scriptstyle\pm0.022$ & $0.304\scriptstyle\pm0.018$ & $0.990\scriptstyle\pm0.023$ & $0.327\scriptstyle\pm0.033$ & $0.406\scriptstyle\pm0.056$ & $0.302\scriptstyle\pm0.040$ & $0.445\scriptstyle\pm0.016$ & $0.539\scriptstyle\pm0.090$ & $\textcolor{red}{\textbf{0.280}\scriptstyle\pm0.014}$  \\ \midrule
        \multirow{4}{*}{Electricity-L} & 96 & $0.110\scriptstyle\pm0.004$& $0.102\scriptstyle\pm0.004$ & $0.105\scriptstyle\pm0.006$ & $0.126\scriptstyle\pm0.005$ & $\textcolor{blue}{\underline{0.083}\scriptstyle\pm0.009}$ & $0.088\scriptstyle\pm0.014$ & $0.344\scriptstyle\pm0.006$ & $0.153\scriptstyle\pm0.137$ & $0.096\scriptstyle\pm0.002$ & $0.094\scriptstyle\pm0.003$ & $\textcolor{red}{\textbf{0.073}\scriptstyle\pm0.002}$  \\ 
        ~ & 192 &$0.109\scriptstyle\pm0.011$ & $0.104\scriptstyle\pm0.014$ & $0.112\scriptstyle\pm0.104$ & $0.123\scriptstyle\pm0.032$ & $\textcolor{blue}{\underline{0.093}\scriptstyle\pm0.024}$ & $0.097\scriptstyle\pm0.009$ & $0.345\scriptstyle\pm0.006$ & $0.200\scriptstyle\pm0.094$ & $0.100\scriptstyle\pm0.004$ & $0.097\scriptstyle\pm0.002$ & $\textcolor{red}{\textbf{0.080}\scriptstyle\pm0.004}$  \\
        ~ & 336 &$0.121\scriptstyle\pm0.011$& $0.104\scriptstyle\pm0.010$ & $0.111\scriptstyle\pm0.014$ & $0.131\scriptstyle\pm0.024$ & $\textcolor{blue}{\underline{0.095}\scriptstyle\pm0.001}$ & - & $0.462\scriptstyle\pm0.054$ & - & $0.102\scriptstyle\pm0.007$ & $0.099\scriptstyle\pm0.001$ & $\textcolor{red}{\textbf{0.054}\scriptstyle\pm0.001}$  \\ 
        ~ & 720 &$0.113\scriptstyle\pm0.018$ & $0.109\scriptstyle\pm0.044$ &  $0.115\scriptstyle\pm0.024$ & $0.127\scriptstyle\pm0.015$ & $\textcolor{blue}{\underline{0.106}\scriptstyle\pm0.007}$ & - & $0.478\scriptstyle\pm0.005$ & - & $0.108\scriptstyle\pm0.003$ & $0.114\scriptstyle\pm0.013$ & $\textcolor{red}{\textbf{0.057}\scriptstyle\pm0.005}$  \\ \midrule
        \multirow{4}{*}{Traffic-L} & 96 &$0.297\scriptstyle\pm0.019$& $0.256\scriptstyle\pm0.004$ & $0.258\scriptstyle\pm0.004$ & $0.194\scriptstyle\pm0.002$ & $0.215\scriptstyle\pm0.003$ & $0.208\scriptstyle\pm0.004$ & $0.294\scriptstyle\pm0.003$ & - & $0.202\scriptstyle\pm0.004$ & $\textcolor{blue}{\underline{0.187}\scriptstyle\pm0.002}$ & $\textcolor{red}{\textbf{0.086}\scriptstyle\pm0.001}$  \\ 
        ~ & 192 &$0.308\scriptstyle\pm0.009$& $0.250\scriptstyle\pm0.002$ & $0.275\scriptstyle\pm0.003$ & $0.198\scriptstyle\pm0.004$ & - & - & $0.306\scriptstyle\pm0.004$ & - & $0.208\scriptstyle\pm0.003 $& $\textcolor{blue}{\underline{0.192}\scriptstyle\pm0.001}$ & $\textcolor{red}{\textbf{0.088}\scriptstyle\pm0.002}$  \\ 
        ~ & 336 &$0.334\scriptstyle\pm0.017$ & $0.261\scriptstyle\pm0.001$ & $0.327\scriptstyle\pm0.001$ & $0.204\scriptstyle\pm0.002$ & - & - & $0.317\scriptstyle\pm0.006$ & - & $0.213\scriptstyle\pm0.003$ & $\textcolor{blue}{\underline{0.201}\scriptstyle\pm0.004}$ & $\textcolor{red}{\textbf{0.195}\scriptstyle\pm0.003}$  \\ 
        ~ & 720 &$0.358\scriptstyle\pm0.022$& $0.284\scriptstyle\pm0.004$ & $0.374\scriptstyle\pm0.004$ & $0.214\scriptstyle\pm0.001$ & - & - & $0.391\scriptstyle\pm0.002$ & - & $0.220\scriptstyle\pm0.002$ & $\textcolor{blue}{\underline{0.211}\scriptstyle\pm0.004}$ & $\textcolor{red}{\textbf{0.200}\scriptstyle\pm0.001}$  \\ \midrule
        \multirow{4}{*}{Weather-L} & 96 &$0.132\scriptstyle\pm0.008$& $0.131\scriptstyle\pm0.011$ & $0.210\scriptstyle\pm0.013$ & $0.131\scriptstyle\pm0.007$ & $0.139\scriptstyle\pm0.008$ & $0.105\scriptstyle\pm0.011$ & $0.104\scriptstyle\pm0.020$ & $\textcolor{red}{\textbf{0.068}\scriptstyle\pm0.008}$ & $0.130\scriptstyle\pm0.017$ & $0.116\scriptstyle\pm0.013 $& $\textcolor{blue}{\underline{0.080}\scriptstyle\pm0.007}$  \\ 
        ~ & 192 &$0.133\scriptstyle\pm0.017$& $0.132\scriptstyle\pm0.018$ & $0.205\scriptstyle\pm0.019$ & $0.131\scriptstyle\pm0.014$ & $0.143\scriptstyle\pm0.020$ & $0.142\scriptstyle\pm0.022$ & $0.134\scriptstyle\pm0.012$ & $\textcolor{red}{\textbf{0.068}\scriptstyle\pm0.006}$ & $0.127\scriptstyle\pm0.019$ & $0.122\scriptstyle\pm0.021$ & $\textcolor{blue}{\underline{0.079}\scriptstyle\pm0.009}$  \\ 
        ~ & 336 &$0.136\scriptstyle\pm0.021$& $0.132\scriptstyle\pm0.010$ & $0.221\scriptstyle\pm0.005$ & $0.137\scriptstyle\pm0.008$ & $0.129\scriptstyle\pm0.012$ & $0.133\scriptstyle\pm0.014$ & $0.137\scriptstyle\pm0.010$ & $\textcolor{blue}{\underline{0.083}\scriptstyle\pm0.002}$ & $0.130\scriptstyle\pm0.006$ & $0.128\scriptstyle\pm0.011$ &$\textcolor{red}{\textbf{0.082}\scriptstyle\pm0.010}$  \\ 
        ~ & 720 &$0.140\scriptstyle\pm0.007$ & $0.133\scriptstyle\pm0.004$ & $0.267\scriptstyle\pm0.003$ & $0.142\scriptstyle\pm0.005$ & $0.122\scriptstyle\pm0.006$ & $0.113\scriptstyle\pm0.004$ & $0.152\scriptstyle\pm0.003$ & $\textcolor{blue}{\underline{0.087}\scriptstyle\pm0.003}$ & $0.113\scriptstyle\pm0.011$ & $0.110\scriptstyle\pm0.004$ & $\textcolor{red}{\textbf{0.084}\scriptstyle\pm0.003}$  \\ \midrule
        \multirow{4}{*}{Exchange-L} & 96 &$0.063\scriptstyle\pm0.006$& $0.061\scriptstyle\pm0.003$ & $0.048\scriptstyle\pm0.004$ & $0.063\scriptstyle\pm0.006$ & $0.026\scriptstyle\pm0.010$ & $0.028\scriptstyle\pm0.002$ &$0.079\scriptstyle\pm0.007$ & $\textcolor{red}{\textbf{0.028}\scriptstyle\pm0.003}$ & $0.068\scriptstyle\pm0.003$ & $0.071\scriptstyle\pm0.006$ & $\textcolor{blue}{\underline{0.031}\scriptstyle\pm0.002}$  \\ 
        ~ & 192 &$0.065\scriptstyle\pm0.020$ & $0.062\scriptstyle\pm0.010$ & $0.049\scriptstyle\pm0.011$ & $0.067\scriptstyle\pm0.008$ & $\textcolor{blue}{\underline{0.034}\scriptstyle\pm0.009}$ & $0.046\scriptstyle\pm0.017$ & $0.093\scriptstyle\pm0.011$ & $0.045\scriptstyle\pm0.003$ & $0.087\scriptstyle\pm0.013$ & $0.068\scriptstyle\pm0.004$ & $\textcolor{red}{\textbf{0.032}\scriptstyle\pm0.010}$  \\ 
        ~ & 336 &$0.072\scriptstyle\pm0.008$& $0.067\scriptstyle\pm0.008$ & $0.052\scriptstyle\pm0.013$ & $0.071\scriptstyle\pm0.017$ & $0.058\scriptstyle\pm0.023$ & $\textcolor{red}{\textbf{0.045}\scriptstyle\pm0.010}$ & $0.081\scriptstyle\pm0.007$ & $0.060\scriptstyle\pm0.004$ & $0.074\scriptstyle\pm0.009$ & $0.072\scriptstyle\pm0.002$ & $\textcolor{blue}{\underline{0.048}\scriptstyle\pm0.004}$  \\ 
        ~ & 720 &$0.091\scriptstyle\pm0.012$& $0.087\scriptstyle\pm0.023$ & $\textcolor{blue}{\underline{0.074}\scriptstyle\pm0.011}$ & $0.097\scriptstyle\pm0.007$ & $0.160\scriptstyle\pm0.019$ & $0.148\scriptstyle\pm0.017$ & $0.082\scriptstyle\pm0.010$ & $0.143\scriptstyle\pm0.020$ & $0.099\scriptstyle\pm0.015$ & $0.079\scriptstyle\pm0.009$ & $\textcolor{red}{\textbf{0.069}\scriptstyle\pm0.005}$  \\ \midrule
        \multirow{4}{*}{ILI-L} & 24 &$0.245\scriptstyle\pm0.018$& $0.212 \scriptstyle{\pm 0.013}$ & $0.233 \scriptstyle{\pm 0.015}$ & $0.312 \scriptstyle{\pm 0.014}$ & $0.097 \scriptstyle{\pm 0.010}$ & $\textcolor{blue}{\underline{0.092} \scriptstyle{\pm 0.019}}$ & $0.228 \scriptstyle{\pm 0.024}$ & $0.250\scriptstyle\pm0.013$ & $0.275\scriptstyle\pm0.047$ & $0.257\scriptstyle\pm0.003$ & $\textcolor{red}{\textbf{0.087}\scriptstyle\pm0.003}$  \\ 
        ~ & 36 &$0.214\scriptstyle\pm0.008$& $0.182 \scriptstyle{\pm 0.016}$ & $0.217 \scriptstyle{\pm 0.023}$ & $0.241 \scriptstyle{\pm 0.021}$ & $0.117 \scriptstyle{\pm 0.017}$ & $\textcolor{blue}{\underline{0.115} \scriptstyle{\pm 0.011}}$ & $0.235 \scriptstyle{\pm 0.010}$ & $0.285\scriptstyle\pm0.010$ & $0.272\scriptstyle\pm0.057$ & $0.281\scriptstyle\pm0.004$ & $\textcolor{red}{\textbf{0.113} \scriptstyle{\pm 0.005}}$  \\ 
        ~ & 48 &$0.271\scriptstyle\pm0.021$& $0.213 \scriptstyle{\pm 0.012}$ & $0.185 \scriptstyle{\pm 0.026}$ & $0.242 \scriptstyle{\pm 0.018}$ & $0.128 \scriptstyle{\pm 0.019}$ & $\textcolor{blue}{\underline{0.133} \scriptstyle{\pm 0.022}}$ & $0.265 \scriptstyle{\pm 0.039}$ & $0.285\scriptstyle\pm0.036$ & $0.295\scriptstyle\pm0.033$ & $0.288\scriptstyle\pm0.008$ & $\textcolor{red}{\textbf{0.124} \scriptstyle{\pm 0.010}}$  \\ 
        ~ & 60 &$0.228\scriptstyle\pm0.022$& $0.222 \scriptstyle{\pm 0.020}$ &$0.211 \scriptstyle{\pm 0.011}$ & $0.233 \scriptstyle{\pm 0.019}$ & $0.172 \scriptstyle{\pm 0.034}$ & $\textcolor{blue}{\underline{0.155} \scriptstyle{\pm 0.018}}$ & $0.263 \scriptstyle{\pm 0.022}$ & $0.283\scriptstyle\pm0.012$ & $0.295\scriptstyle\pm0.083$ & $0.307\scriptstyle\pm0.005$ & $\textcolor{red}{\textbf{0.142} \scriptstyle{\pm 0.008}}$ \\ \bottomrule
        \multicolumn{13}{l}{Due to the excessive time and memory consumption, some results are unavailable in our implementation  and denoted as -.}
    \end{tabular}}
  \label{tab:long_term_fore_CRPS}
  \end{table}   

\begin{table}  
    \centering
    \setlength\tabcolsep{2pt}
    \scriptsize
    \caption{Results of NMAE ($\textrm{mean}_{\textrm{std}}$) on long-term forecasting scenarios, each containing five independent runs with different seeds. The context length is set to 36 for the ILI-L dataset and 96 for the others. Lower NMAE values indicate better predictions. The means and standard errors are based on 5 independent runs of retraining and evaluation. \textcolor{red}{\textbf{Red}}: the best, \textcolor{blue}{\underline{Blue}}: the 2nd best.}
    \resizebox{\textwidth}{!}{
  \begin{tabular}{c|c|c|c|c|c|c|c|c|c|c|c|c}
    \toprule
        Dataset & Horizon & Koopa & iTransformer & FITS & PatchTST & GRU MAF & Trans MAF & TSDiff & CSDI & TimeGrad & GRU NVP & $K^2$VAE \\  \midrule
        \multirow{4}{*}{ETTm1-L} & 96 &$0.362\scriptstyle\pm0.022$ & $0.369\scriptstyle\pm0.029$ & $0.349\scriptstyle\pm0.032$ & $0.329\scriptstyle\pm0.100$ & $0.402\scriptstyle\pm0.087$ & $0.456\scriptstyle\pm0.042$ & $0.441\scriptstyle\pm0.021$ & $\textcolor{blue}{\underline{0.308}\scriptstyle\pm0.005}$ & $0.645\scriptstyle\pm0.129$ & $0.488\scriptstyle\pm0.058$ & $\textcolor{red}{\textbf{0.284}\scriptstyle\pm0.011}$  \\ 
        ~ & 192 &$0.365\scriptstyle\pm0.032$& $0.384\scriptstyle\pm0.041$ & $0.341\scriptstyle\pm0.032$ & $\textcolor{blue}{\underline{0.338}\scriptstyle\pm0.022}$ & $0.476\scriptstyle\pm0.046$ & $0.553\scriptstyle\pm0.012$ & $0.441\scriptstyle\pm0.019$ & $0.377\scriptstyle\pm0.026$ & $0.748\scriptstyle\pm0.084$ & $0.514\scriptstyle\pm0.042$ & $\textcolor{red}{\textbf{0.323}\scriptstyle\pm0.020}$  \\ 
        ~ & 336 &$0.364\scriptstyle\pm0.026$ & $0.380\scriptstyle\pm0.020$ & $0.356\scriptstyle\pm0.022$ & $\textcolor{blue}{\underline{0.344}\scriptstyle\pm0.013}$ & $0.522\scriptstyle\pm0.019$ & $0.590\scriptstyle\pm0.047$ & $0.571\scriptstyle\pm0.033$ & $0.419\scriptstyle\pm0.042$ & $0.759\scriptstyle\pm0.015$ & $0.630\scriptstyle\pm0.029$ &  $\textcolor{red}{\textbf{0.330}\scriptstyle\pm0.014}$  \\ 
        ~ & 720 &$\textcolor{blue}{\underline{0.377}\scriptstyle\pm0.037}$& $0.490\scriptstyle\pm0.038$ & $0.406\scriptstyle\pm0.072$ & $0.382\scriptstyle\pm0.066$ & $0.711\scriptstyle\pm0.081$ & $0.822\scriptstyle\pm0.034$ & $0.622\scriptstyle\pm0.045$ & $0.578\scriptstyle\pm0.051$ & $0.793\scriptstyle\pm0.034$ & $0.707\scriptstyle\pm0.050$ & $\textcolor{red}{\textbf{0.373}\scriptstyle\pm0.032}$  \\ \midrule
        \multirow{4}{*}{ETTm2-L} & 96 &$0.225\scriptstyle\pm0.039$ & $0.221\scriptstyle\pm0.039$ & $0.210\scriptstyle\pm0.040$ & $0.216\scriptstyle\pm0.035$ & $0.212\scriptstyle\pm0.082$ & $0.279\scriptstyle\pm0.031$ & $0.224\scriptstyle\pm0.033$ & $\textcolor{blue}{\underline{0.146}\scriptstyle\pm0.012}$ & $0.525\scriptstyle\pm0.047$ & $0.413\scriptstyle\pm0.059$ & $\textcolor{red}{\textbf{0.144}\scriptstyle\pm0.011}$  \\ 
        ~ & 192 &$0.233\scriptstyle\pm0.026$& $0.229\scriptstyle\pm0.031$ & $0.234\scriptstyle\pm0.038$ & $0.215\scriptstyle\pm0.022$ & $0.535\scriptstyle\pm0.029$ & $0.292\scriptstyle\pm0.041$ & $0.316\scriptstyle\pm0.040$ & $\textcolor{blue}{\underline{0.189}\scriptstyle\pm0.012}$ & $0.530\scriptstyle\pm0.060$ & $0.427\scriptstyle\pm0.033$ & $\textcolor{red}{\textbf{0.170}\scriptstyle\pm0.009}$  \\ 
        ~ & 336 &$0.267\scriptstyle\pm0.023$& $0.245\scriptstyle\pm0.049$ & $0.276\scriptstyle\pm0.019$ & $\textcolor{blue}{\underline{0.234}\scriptstyle\pm0.024}$ & $0.407\scriptstyle\pm0.043$ & $0.309\scriptstyle\pm0.032$ & $0.397\scriptstyle\pm0.051$ & $0.248\scriptstyle\pm0.024$ & $0.566\scriptstyle\pm0.047$ & $0.580\scriptstyle\pm0.169$ & $\textcolor{red}{\textbf{0.187}\scriptstyle\pm0.021}$  \\ 
        ~ & 720 &$0.290\scriptstyle\pm0.033$ & $0.385\scriptstyle\pm0.042$ & $0.540\scriptstyle\pm0.052$ & $\textcolor{blue}{\underline{0.288}\scriptstyle\pm0.034}$ & $0.355\scriptstyle\pm0.048$ & $0.475\scriptstyle\pm0.029$ & $0.416\scriptstyle\pm0.065$ & $0.306\scriptstyle\pm0.040$ & $0.561\scriptstyle\pm0.044$ & $0.749\scriptstyle\pm0.385$ & $\textcolor{red}{\textbf{0.275}\scriptstyle\pm0.035}$  \\ \midrule
        \multirow{4}{*}{ETTh1-L} & 96 &$0.407\scriptstyle\pm0.052$ & $0.386\scriptstyle\pm0.092$ & $0.393\scriptstyle\pm0.142$ & $0.407\scriptstyle\pm0.022$ & $\textcolor{blue}{\underline{0.371}\scriptstyle\pm0.034}$ & $0.423\scriptstyle\pm0.047$ & $0.510\scriptstyle\pm0.029$ & $0.557\scriptstyle\pm0.022$ & $0.585\scriptstyle\pm0.058$ & $0.481\scriptstyle\pm0.037$ & $\textcolor{red}{\textbf{0.336}\scriptstyle\pm0.041}$  \\ 
        ~ & 192 &$0.396\scriptstyle\pm0.022$ & $\textcolor{blue}{\underline{0.388}\scriptstyle\pm0.041}$ & $0.406\scriptstyle\pm0.079$ & $0.405\scriptstyle\pm0.088$ & $0.430\scriptstyle\pm0.022$ & $0.451\scriptstyle\pm0.012$ & $0.596\scriptstyle\pm0.056$ & $0.625\scriptstyle\pm0.065$ & $0.680\scriptstyle\pm0.058$ & $0.531\scriptstyle\pm0.018$ & $\textcolor{red}{\textbf{0.372}\scriptstyle\pm0.023}$  \\ 
        ~ & 336 &$\textcolor{blue}{\underline{0.406}\scriptstyle\pm0.028}$& $0.415\scriptstyle\pm0.022$ & $0.410\scriptstyle\pm0.063$ & $0.412\scriptstyle\pm0.024$ & $0.462\scriptstyle\pm0.049$ & $0.481\scriptstyle\pm0.041$ & $0.581\scriptstyle\pm0.035$ & $0.574\scriptstyle\pm0.026$ & $0.666\scriptstyle\pm0.047$ & $0.580\scriptstyle\pm0.064$ & $\textcolor{red}{\textbf{0.394}\scriptstyle\pm0.022}$  \\ 
        ~ & 720 &$\textcolor{blue}{\underline{0.412}\scriptstyle\pm0.008}$ & $0.449\scriptstyle\pm0.022$ & $0.468\scriptstyle\pm0.012$ & $0.428\scriptstyle\pm0.024$ & $0.496\scriptstyle\pm0.019$ & $0.455\scriptstyle\pm0.025$ & $0.657\scriptstyle\pm0.017$ & $0.657\scriptstyle\pm0.014$ & $0.672\scriptstyle\pm0.015$ & $0.643\scriptstyle\pm0.046$ & $\textcolor{red}{\textbf{0.396}\scriptstyle\pm0.012}$  \\ \midrule
        \multirow{4}{*}{ETTh2-L} & 96 & $0.249\scriptstyle\pm0.015$ & $0.234\scriptstyle\pm0.011$ & $0.243\scriptstyle\pm0.009$ & $0.247\scriptstyle\pm0.028$ & $0.292\scriptstyle\pm0.012$ & $0.345\scriptstyle\pm0.042$ & $0.421\scriptstyle\pm0.033$ & $\textcolor{blue}{\underline{0.214}\scriptstyle\pm0.018}$ & $0.448\scriptstyle\pm0.031$ & $0.548\scriptstyle\pm0.158$ & $\textcolor{red}{\textbf{0.189}\scriptstyle\pm0.010}$  \\ 
        ~ & 192 &$0.249\scriptstyle\pm0.032$ & $\textcolor{blue}{\underline{0.247}\scriptstyle\pm0.040}$ & $0.252\scriptstyle\pm0.022$ & $0.265\scriptstyle\pm0.091$& $0.376\scriptstyle\pm0.112$ & $0.343\scriptstyle\pm0.044$ & $0.339\scriptstyle\pm0.033$ & $0.294\scriptstyle\pm0.027$ & $0.575\scriptstyle\pm0.089$ & $0.766\scriptstyle\pm0.223$ & $\textcolor{red}{\textbf{0.213}\scriptstyle\pm0.021}$  \\ 
        ~ & 336 &$\textcolor{blue}{\underline{0.274}\scriptstyle\pm0.027}$ & $0.297\scriptstyle\pm0.029$& $0.291\scriptstyle\pm0.032$ & $0.314\scriptstyle\pm0.045$ & $0.454\scriptstyle\pm0.057$ & $0.333\scriptstyle\pm0.078$ & $0.427\scriptstyle\pm0.041$ & $0.353\scriptstyle\pm0.028$ & $0.606\scriptstyle\pm0.095$ & $0.942\scriptstyle\pm0.408$ & $\textcolor{red}{\textbf{0.263}\scriptstyle\pm0.039}$ \\ 
        ~ & 720 & $\textcolor{blue}{\underline{0.286}\scriptstyle\pm0.042}$ & $0.667\scriptstyle\pm0.012$ & $0.401\scriptstyle\pm0.022$ & $0.371\scriptstyle\pm0.021$ & $1.092\scriptstyle\pm0.019$ &$0.412\scriptstyle\pm0.020$ & $0.482\scriptstyle\pm0.022$ & $0.382\scriptstyle\pm0.030$ & $0.550\scriptstyle\pm0.018$ & $0.688\scriptstyle\pm0.161$ & $\textcolor{red}{\textbf{0.278}\scriptstyle\pm0.020}$ \\ \midrule
        \multirow{4}{*}{Electricity-L} & 96 &$0.146\scriptstyle\pm0.015$ & $0.134\scriptstyle\pm0.002$ & $0.137\scriptstyle\pm0.002$ & $0.168\scriptstyle\pm0.012$& $\textcolor{blue}{\underline{0.108}\scriptstyle\pm0.009}$ & $0.114\scriptstyle\pm0.010$ &$0.441\scriptstyle\pm0.013$ & $0.203\scriptstyle\pm0.189$ & $0.119\scriptstyle\pm0.003$ & $0.118\scriptstyle\pm0.003$ & $\textcolor{red}{\textbf{0.093}\scriptstyle\pm0.002}$  \\
        ~ & 192 &$0.143\scriptstyle\pm0.023$& $0.137\scriptstyle\pm0.022$& $0.143\scriptstyle\pm0.112$ & $0.163\scriptstyle\pm0.032$&$\textcolor{blue}{\underline{0.120}\scriptstyle\pm0.033}$ & $0.131\scriptstyle\pm0.008$ & $0.441\scriptstyle\pm0.005$ & $0.264\scriptstyle\pm0.129$ & $0.124\scriptstyle\pm0.005$ & $0.121\scriptstyle\pm0.003$ & $\textcolor{red}{\textbf{0.102}\scriptstyle\pm0.010}$  \\ 
        ~ & 336 & $0.151\scriptstyle\pm0.017$ & $0.136\scriptstyle\pm0.002$ & $0.139\scriptstyle\pm0.002$ & $0.168\scriptstyle\pm0.010$ & $\textcolor{blue}{\underline{0.122}\scriptstyle\pm0.018}$ & - & $0.571\scriptstyle\pm0.022$ & - & $0.126\scriptstyle\pm0.008$ & $0.123\scriptstyle\pm0.001$ & $\textcolor{red}{\textbf{0.107}\scriptstyle\pm0.002}$  \\ 
        ~ & 720 & $0.149\scriptstyle\pm0.025$ & $0.140\scriptstyle\pm0.009$ & $0.149\scriptstyle\pm0.012$ & $0.164\scriptstyle\pm0.024$ & $0.136\scriptstyle\pm0.098$ & - & $0.622\scriptstyle\pm0.142$ & - & $\textcolor{blue}{\underline{0.134}\scriptstyle\pm0.004}$ & $0.144\scriptstyle\pm0.017$ & $\textcolor{red}{\textbf{0.117}\scriptstyle\pm0.019}$  \\ \midrule
        \multirow{4}{*}{Traffic-L} & 96 & $0.377\scriptstyle\pm0.024$& $0.332\scriptstyle\pm0.008$ & $0.332\scriptstyle\pm0.007$ & $\textcolor{red}{\textbf{0.228}\scriptstyle\pm0.010}$ & $0.274\scriptstyle\pm0.012$ & $0.265\scriptstyle\pm0.007$ & $0.342\scriptstyle\pm0.042$ & - & $0.234\scriptstyle\pm0.006$ & $0.231\scriptstyle\pm0.003$ & $\textcolor{blue}{\underline{0.230}\scriptstyle\pm0.010}$ \\ 
        ~ & 192 &$0.388\scriptstyle\pm0.011$& $0.326\scriptstyle\pm0.009$ & $0.350\scriptstyle\pm0.010$ & $\textcolor{red}{\textbf{0.225}\scriptstyle\pm0.012}$ & - & - & $0.354\scriptstyle\pm0.012$ & - & $0.239\scriptstyle\pm0.004$ & $0.236\scriptstyle\pm0.002$ & $\textcolor{blue}{\underline{0.234}\scriptstyle\pm0.003}$ \\ 
        ~ & 336 &$0.416\scriptstyle\pm0.028$ & $0.335\scriptstyle\pm0.010$ & $0.405\scriptstyle\pm0.011$ & $\textcolor{blue}{\underline{0.242}\scriptstyle\pm0.022}$ & - & - & $0.392\scriptstyle\pm0.006$ & - & $0.246\scriptstyle\pm0.003$ & $0.248\scriptstyle\pm0.006$ & $\textcolor{red}{\textbf{0.242}\scriptstyle\pm0.007}$  \\ 
        ~ & 720 &$0.432\scriptstyle\pm0.032$& $0.361\scriptstyle\pm0.030$ & $0.453\scriptstyle\pm0.022$ & $\textcolor{blue}{\underline{0.253}\scriptstyle\pm0.012}$ & - & - & $0.478\scriptstyle\pm0.006$ & - & $0.263\scriptstyle\pm0.001$ & $0.264\scriptstyle\pm0.006$ & $\textcolor{red}{\textbf{0.248}\scriptstyle\pm0.010}$  \\ \midrule
        \multirow{4}{*}{Weather-L} & 96 &$0.146\scriptstyle\pm0.019$& $0.144\scriptstyle\pm0.017$ & $0.279\scriptstyle\pm0.027$ & $0.145\scriptstyle\pm0.016$ & $0.176\scriptstyle\pm0.011$ & $0.139\scriptstyle\pm0.010$ & $0.113\scriptstyle\pm0.022$ & $\textcolor{blue}{\underline{0.087}\scriptstyle\pm0.012}$ & $0.164\scriptstyle\pm0.023$ & $0.145\scriptstyle\pm0.017$ & $\textcolor{red}{\textbf{0.086}\scriptstyle\pm0.011}$  \\ 
        ~ & 192 &$0.148\scriptstyle\pm0.022$& $0.145\scriptstyle\pm0.015$ & $0.264\scriptstyle\pm0.013$ & $0.144\scriptstyle\pm0.012$ & $0.166\scriptstyle\pm0.022$ & $0.160\scriptstyle\pm0.037$ & $0.144\scriptstyle\pm0.020$ & $\textcolor{blue}{\underline{0.086}\scriptstyle\pm0.007}$ & $0.158\scriptstyle\pm0.024$ & $0.147\scriptstyle\pm0.025$ & $\textcolor{red}{\textbf{0.083}\scriptstyle\pm0.011}$  \\ 
        ~ & 336 &$0.152\scriptstyle\pm0.032$& $0.146\scriptstyle\pm0.011$ & $0.283\scriptstyle\pm0.021$ & $0.149\scriptstyle\pm0.023$ & $0.168\scriptstyle\pm0.014$ & $0.170\scriptstyle\pm0.027$ & $0.138\scriptstyle\pm0.033$ & $\textcolor{blue}{\underline{0.098}\scriptstyle\pm0.002}$ & $0.162\scriptstyle\pm0.006$ & $0.160\scriptstyle\pm0.012$ & $\textcolor{red}{\textbf{0.093}\scriptstyle\pm0.010}$  \\ 
        ~ & 720 &$0.162\scriptstyle\pm0.009$& $0.147\scriptstyle\pm0.019$ & $0.317\scriptstyle\pm0.021$ & $0.152\scriptstyle\pm0.029$ & $0.149\scriptstyle\pm0.034$ & $0.148\scriptstyle\pm0.040$ & $0.141\scriptstyle\pm0.026$ &$ \textcolor{blue}{\underline{0.102}\scriptstyle\pm0.005}$ & $0.136\scriptstyle\pm0.020$ & $0.135\scriptstyle\pm0.008$ & $\textcolor{red}{\textbf{0.099}\scriptstyle\pm0.009}$  \\ \midrule
        \multirow{4}{*}{Exchange-L} & 96 &$0.079\scriptstyle\pm0.005$& $0.077\scriptstyle\pm0.001$ & $0.069\scriptstyle\pm0.007$ & $0.079\scriptstyle\pm0.002$ & $\textcolor{blue}{\underline{0.033}\scriptstyle\pm0.003}$ & $0.036\scriptstyle\pm0.009$ & $0.090\scriptstyle\pm0.010$ & $0.036\scriptstyle\pm0.005$ & $0.079\scriptstyle\pm0.002$ & $0.091\scriptstyle\pm0.009$ & $\textcolor{red}{\textbf{0.032}\scriptstyle\pm0.002}$  \\ 
        ~ & 192 &$0.081\scriptstyle\pm0.015$& $0.078\scriptstyle\pm0.008$ & $0.069\scriptstyle\pm0.007$ & $0.081\scriptstyle\pm0.002$ & $\textcolor{blue}{\underline{0.044}\scriptstyle\pm0.004}$ & $0.058\scriptstyle\pm0.007$ & $0.106\scriptstyle\pm0.010$ & $0.058\scriptstyle\pm0.005$ & $0.100\scriptstyle\pm0.019$ & $0.087\scriptstyle\pm0.005$ & $\textcolor{red}{\textbf{0.040}\scriptstyle\pm0.005}$ \\ 
        ~ & 336 &$0.086\scriptstyle\pm0.003$& $0.083\scriptstyle\pm0.005$ & $0.071\scriptstyle\pm0.005$ & $0.085\scriptstyle\pm0.010$ & $0.074\scriptstyle\pm0.017$ & $\textcolor{blue}{\underline{0.058}\scriptstyle\pm0.009}$ & $0.106\scriptstyle\pm0.010$ & $0.076\scriptstyle\pm0.006$ & $0.086\scriptstyle\pm0.008$& $0.091\scriptstyle\pm0.002$ & $\textcolor{red}{\textbf{0.054}\scriptstyle\pm0.001}$ \\ 
        ~ & 720 &$0.116\scriptstyle\pm0.022$& $0.113\scriptstyle\pm0.015$ & $\textcolor{blue}{\underline{0.097}\scriptstyle\pm0.011}$ & $0.126\scriptstyle\pm0.001$ & $0.182\scriptstyle\pm0.010$ & $0.191\scriptstyle\pm0.006$ & $0.142\scriptstyle\pm0.009$ & $0.173\scriptstyle\pm0.020$ & $0.113\scriptstyle\pm0.016$ & $0.103\scriptstyle\pm0.009$ & $\textcolor{red}{\textbf{0.084}\scriptstyle\pm0.017}$  \\ \midrule
        \multirow{4}{*}{ILI-L} & 24 &$0.303\scriptstyle\pm0.021$& $0.265\scriptstyle\pm0.027$ & $0.271\scriptstyle\pm0.032$ & $0.382\scriptstyle\pm0.018$& $0.124\scriptstyle\pm0.019$ & $\textcolor{blue}{\underline{0.118}\scriptstyle\pm0.033}$ & $0.242\scriptstyle\pm0.086$ & $0.263 \scriptstyle{\pm 0.012}$ & $0.296\scriptstyle\pm0.044$ & $0.283\scriptstyle\pm0.001$ & $\textcolor{red}{\textbf{0.116}\scriptstyle\pm0.011}$  \\
        ~ & 36 &$0.262\scriptstyle\pm0.013$& $0.222\scriptstyle\pm0.047$ & $0.258\scriptstyle\pm0.058$ & $0.286\scriptstyle\pm0.037$ & $0.144\scriptstyle\pm0.011$ & $\textcolor{blue}{\underline{0.143}\scriptstyle\pm0.089}$ & $0.246\scriptstyle\pm0.117$ & $0.298\scriptstyle\pm0.011$ & $0.298\scriptstyle\pm0.048$ & $0.307\scriptstyle\pm0.007$ & $\textcolor{red}{\textbf{0.142}\scriptstyle\pm0.008}$  \\ 
        ~ & 48 &$0.334\scriptstyle\pm0.028$& $0.262\scriptstyle\pm0.023$ & $0.225\scriptstyle\pm0.043$ & $0.291\scriptstyle\pm0.032$ & $\textcolor{blue}{\underline{0.159}\scriptstyle\pm0.020}$ & $0.160\scriptstyle\pm0.039$ & $0.275\scriptstyle\pm0.044$ & $0.301\scriptstyle\pm0.034$ & $0.320\scriptstyle\pm0.025$ & $0.314\scriptstyle\pm0.009$ & $\textcolor{red}{\textbf{0.152}\scriptstyle\pm0.017}$  \\ 
        ~ & 60 &$0.288\scriptstyle\pm0.031$& $0.278\scriptstyle\pm0.017$ & $0.245\scriptstyle\pm0.017$ & $0.287\scriptstyle\pm0.023$ & $0.216\scriptstyle\pm0.014$ & $\textcolor{blue}{\underline{0.183}\scriptstyle\pm0.019}$ & $0.272\scriptstyle\pm0.020$ & $0.299\scriptstyle\pm0.013$ & $0.325\scriptstyle\pm0.068$ & $0.333\scriptstyle\pm0.005$ & $\textcolor{red}{\textbf{0.167}\scriptstyle\pm0.007}$ \\ \bottomrule
        \multicolumn{12}{l}{Due to the excessive time and memory consumption, some results are unavailable in our implementation  and denoted as -.}
  \end{tabular}}
  \label{tab:long_term_fore_NMAE}
  \end{table}  

\clearpage
\subsection{Model Analysis}
\label{app: model analysis}
\begin{table}[!htbp]
    \centering
    \caption{Comparison on model efficiency. Lower values of Inference Speed (sec/sample) or Memory (GB) indicate higher model efficiency. L: the forecasting horizon. The results are obtained with batch size equals 1. \textcolor{red}{\textbf{Red}}: the best, \textcolor{blue}{\underline{Blue}}: the 2nd best.}
    \resizebox{0.78\columnwidth}{!}{
    \begin{tabular}{c|c|c|c|c|c|c|c|c|c|c|c}
    \toprule
        \multirow{2}{*}{Model} & \multirow{2}{*}{Metric} & Electricity-L & ETTm1-L & Exchange-S & Solar-S & Electricity-L & ETTm1-L & Electricity-L  & ETTm1-L  & Electricity-L  & ETTm1-L  \\
        ~ & ~ & (L = 720)&(L = 720)&(L = 30)&(L = 24)&(L = 96)&(L = 96)&(L = 192)&(L = 192)&(L = 336)&(L = 336)\\ \hline
        \multirow{2}{*}{TSDiff} & Inference Speed & 43.068 & \textcolor{blue}{\underline{2.841}} & 1.269 & 1.858 & 4.770 & 1.200 & 12.339 & 1.314 & 20.582 & \textcolor{blue}{\underline{1.676}}  \\
        ~ & Memory & 0.896 & 0.332 & 0.033 & 0.040 & 0.255 & 0.084 & 0.330 & 0.122 & 0.479 & 0.184  \\ \midrule
        \multirow{2}{*}{GRU NVP} & Inference Speed & \textcolor{red}{\textbf{26.296}} & 3.460 & \textcolor{blue}{\underline{0.405}} & \textcolor{blue}{\underline{0.665}} & \textcolor{blue}{\underline{3.450}} & \textcolor{blue}{\underline{0.580}} & \textcolor{red}{\textbf{7.414}} & \textcolor{blue}{\underline{1.115}} & \textcolor{red}{\textbf{12.441}} & 1.856  \\ 
        ~ & Memory & 0.427 & \textcolor{red}{\textbf{0.023}} & 0.014 & 0.040 & 0.145 & \textcolor{blue}{\underline{0.014}} & 0.173 & \textcolor{blue}{\underline{0.015}} & 0.244 & \textcolor{red}{\textbf{0.018}}  \\ \midrule
        \multirow{2}{*}{GRU MAF} & Inference Speed & 435.105 & 18.635 & 0.817 & 9.120 & 49.442 & 1.631 & 177.86 & 4.853 & 290.000 & 9.088  \\
        ~ & Memory & \textcolor{blue}{\underline{0.372}} & 0.028 & 0.013 & 0.040 & 0.129 & 0.023 & 0.175 & 0.024 & \textcolor{blue}{\underline{0.246}} & 0.025  \\ \midrule
        \multirow{2}{*}{Trans MAF} & Inference Speed & 532.151 & 19.401 & 0.883 & 9.275 & 45.367 & 1.900 & 169.368 & 5.336 & 311.954 & 10.130  \\ 
        ~ & Memory & \textcolor{red}{\textbf{0.368}} & 0.081 & \textcolor{blue}{\underline{0.011}} & \textcolor{blue}{\underline{0.037}} & 0.147 & 0.073 & 0.201 & 0.076 & 0.272 & 0.075  \\ \midrule
        \multirow{2}{*}{TimeGrad} & Inference Speed & - & - & 24.896 & 19.641 & 113.103 & 94.888 & 142.104 & 155.013 & - & 284.951  \\
        ~ & Memory & - & - & 0.016 & 0.041 & \textcolor{blue}{\underline{0.128}} & 0.016 & \textcolor{red}{\textbf{0.149}} & 0.022 & - & 0.034  \\ \midrule
        \multirow{2}{*}{CSDI} & Inference Speed & - & 86.182 & 19.251 & 29.251 & 388.315 & 16.328 & 659.428 & 25.838 & - & 39.883  \\
        ~ & Memory & - & 0.133 & 0.182 & 0.723 & 1.411 & 0.027 & 3.024 & 0.033 & - & 0.051  \\ \midrule
        \multirow{2}{*}{K2VAE} & Inference Speed & \textcolor{blue}{\underline{39.834}} & \textcolor{red}{\textbf{0.998}} & \textcolor{red}{\textbf{0.309}} & \textcolor{red}{\textbf{0.483}} & \textcolor{red}{\textbf{3.310}} & \textcolor{red}{\textbf{0.257}} & \textcolor{blue}{\underline{8.836}} & \textcolor{red}{\textbf{0.374}} & \textcolor{blue}{\underline{17.961}} & \textcolor{red}{\textbf{0.475}}  \\
        ~ & Memory & 0.474 & \textcolor{blue}{\underline{0.028}} & \textcolor{red}{\textbf{0.011}} & \textcolor{red}{\textbf{0.017}} & \textcolor{red}{\textbf{0.094}} & \textcolor{red}{\textbf{0.013}} & \textcolor{blue}{\underline{0.154}} & \textcolor{red}{\textbf{0.015}} & \textcolor{red}{\textbf{0.240}} & \textcolor{blue}{\underline{0.019}} \\ \bottomrule
        \multicolumn{8}{l}{Due to the excessive time and memory consumption, some results are unavailable in our implementation  and denoted as -.}
    \end{tabular}}
    \label{tab: complete model efficiency}
\end{table}

\begin{table}[!htbp]
    \centering
    \caption{Comparison on different Koopman Operators. Lower CRPS or NMAE values indicate better performance. \textcolor{red}{\textbf{Red}}: the best. L: the forecasting horizon. }
    \resizebox{\columnwidth}{!}{
        \begin{tabular}{c|c|cccccccccc}
    \toprule
       Koopman & \multirow{2}{*}{Metrics} & Electricity-L & ETTm1-L & Exchange-S & Solar-S & Electricity-L  & ETTm1-L &Electricity-L  & ETTm1-L & Electricity-L & ETTm1-L \\ 
         Operator & &(L = 720) &(L = 720) &(L = 30) &(L = 24)&(L = 96)&(L = 96)&(L = 192)&(L = 192)&(L = 336)&(L = 336)\\\hline
        \multirow{2}{*}{$\mathcal{K}_{loc}$} & CRPS & - & - & $0.012\scriptstyle\pm0.002$ & $0.450\scriptstyle\pm0.012$ & $0.077\scriptstyle\pm0.003$ & $0.298\scriptstyle\pm0.022$ & $0.114\scriptstyle\pm0.008$ & - & - & -  \\ 
        ~ & NMAE & - & - & $0.014\scriptstyle\pm0.002$ & $0.566\scriptstyle\pm0.015$ & $0.101\scriptstyle\pm0.004$ & $0.387\scriptstyle\pm0.014$ & $0.134\scriptstyle\pm0.011$ & - & - & -  \\ \midrule
        \multirow{2}{*}{$\mathcal{K}_{glo}$} & CRPS & $0.065\scriptstyle\pm0.007$ & $0.311\scriptstyle\pm0.024$ & $0.011\scriptstyle\pm0.001$ & $0.374\scriptstyle\pm0.004$ & $0.079\scriptstyle\pm0.004$ & $0.248\scriptstyle\pm0.016$ & $0.082\scriptstyle\pm0.004$ & $0.263\scriptstyle\pm0.012$ & $0.057\scriptstyle\pm0.002$ & $0.268\scriptstyle\pm0.026$  \\ 
        ~ & NMAE & $0.130\scriptstyle\pm0.024$ & $0.395\scriptstyle\pm0.027$ & $0.013\scriptstyle\pm0.001$ & $0.488\scriptstyle\pm0.008$ & $0.109\scriptstyle\pm0.005$ & $0.304\scriptstyle\pm0.021$ & $0.106\scriptstyle\pm0.012$ & $0.329\scriptstyle\pm0.017$ & $0.114\scriptstyle\pm0.003$ & $0.341\scriptstyle\pm0.020$  \\ \midrule
        \multirow{2}{*}{$\mathcal{K}_{loc}+\mathcal{K}_{glo}$} & CRPS & $\textcolor{red}{\textbf{0.057}\scriptstyle\pm 0.005}$ & $\textcolor{red}{\textbf{0.294}\scriptstyle\pm0.026}$ & $\textcolor{red}{\textbf{0.009}\scriptstyle\pm0.001}$ & $\textcolor{red}{\textbf{0.367}\scriptstyle\pm0.005}$ &$\textcolor{red}{\textbf{0.073}\scriptstyle\pm0.002}$ & $\textcolor{red}{\textbf{0.232}\scriptstyle\pm0.010}$ & $\textcolor{red}{\textbf{0.080}\scriptstyle\pm0.004}$ & $\textcolor{red}{\textbf{0.259}\scriptstyle\pm0.013}$ & $\textcolor{red}{\textbf{0.054}\scriptstyle\pm0.001}$ & $\textcolor{red}{\textbf{0.262}\scriptstyle\pm0.030}$  \\ 
        ~ & NMAE & $\textcolor{red}{\textbf{0.117}\scriptstyle\pm 0.019}$ & $\textcolor{red}{\textbf{0.373}\scriptstyle\pm0.032}$ & $\textcolor{red}{\textbf{0.009}\scriptstyle\pm0.001}$ & $\textcolor{red}{\textbf{0.480}\scriptstyle\pm0.008}$ & $\textcolor{red}{\textbf{0.093}\scriptstyle\pm0.002}$ & $\textcolor{red}{\textbf{0.284}\scriptstyle\pm0.011}$ & $\textcolor{red}{\textbf{0.102}\scriptstyle\pm0.010}$ & $\textcolor{red}{\textbf{0.323}\scriptstyle\pm0.020}$ & $\textcolor{red}{\textbf{0.107}\scriptstyle\pm0.002}$ & $\textcolor{red}{\textbf{0.330} \scriptstyle\pm0.014}$ \\ \bottomrule
        \multicolumn{12}{l}{Due to the numerical instability, some results are unavailable in our implementation  and denoted as -.}
    \end{tabular}
    }
    \label{tab: complete koopman operator}
\end{table}

\begin{table}[!htbp]
    \centering
    \caption{Comparison on different connections of KalmanNet. Lower CRPS or NMAE values indicate better performance. \textcolor{red}{\textbf{Red}}: the best. L: the forecasting horizon.}
    \resizebox{\columnwidth}{!}{
    \begin{tabular}{c|c|cccccccccc}
    \toprule
        Connections  & \multirow{2}{*}{Metrics} & Electricity-L & ETTm1-L & Exchange-S & Solar-S & Electricity-L & ETTm1-L & Electricity-L  & ETTm1-L & Electricity-L  & ETTm1-L  \\ 
        in KalmanNet & 
        & (L = 720) & (L = 720) &  (L = 30) & (L = 24) & (L = 96) & (L = 96) &(L = 192) &(L = 192) & (L = 336) & (L = 336) \\ \hline
        \multirow{2}{*}{w/o AuxiliaryNet} & CRPS & $0.082\scriptstyle\pm0.011$& $0.359\scriptstyle\pm0.024$ & $0.015\scriptstyle\pm0.002$ & $0.398\scriptstyle\pm0.005$ & $0.083\scriptstyle\pm0.001$ & $0.268\scriptstyle\pm0.014$ &$ 0.107\scriptstyle\pm0.008$ & $0.263\scriptstyle\pm0.014$ & $0.074\scriptstyle\pm0.004$ & $0.267\scriptstyle\pm0.025$  \\ 
        ~ & NMAE & $0.188\scriptstyle\pm0.028$ & $0.442\scriptstyle\pm0.029$ & $0.022\scriptstyle\pm0.001$ & $0.531\scriptstyle\pm0.010$ & $0.099\scriptstyle\pm0.003$ & $0.348\scriptstyle\pm0.011$ & $0.142\scriptstyle\pm0.012$ & $0.336\scriptstyle\pm0.016$ & $0.147\scriptstyle\pm0.003$ &$ 0.346\scriptstyle\pm0.018$  \\ \midrule
        \multirow{2}{*}{w/o skip connection} & CRPS & $0.063\scriptstyle\pm0.007$ &$ 0.315\scriptstyle\pm0.016$ & $0.011\scriptstyle\pm0.001$ & $0.388\scriptstyle\pm0.006$ & $0.092\scriptstyle\pm0.004$ & $0.243\scriptstyle\pm0.012$ & $0.087\scriptstyle\pm0.002$ & $0.277\scriptstyle\pm0.013$ & $0.058\scriptstyle\pm0.001$ &$0.266\scriptstyle\pm0.019$  \\ 
        ~ & NMAE & $0.131\scriptstyle\pm0.015$ & $0.402\scriptstyle\pm0.030$ & $0.013\scriptstyle\pm0.004$ & $0.511\scriptstyle\pm0.008$ & $0.116\scriptstyle\pm0.003$ & $0.292\scriptstyle\pm0.010$ & $0.114\scriptstyle\pm0.007$ & $0.376\scriptstyle\pm0.022$ & $0.119\scriptstyle\pm0.001$ & $0.349\scriptstyle\pm0.022$  \\ \midrule
        \multirow{2}{*}{w/o control input} & CRPS & $0.069\scriptstyle\pm0.005$ & $0.322\scriptstyle\pm0.017$ & $0.013\scriptstyle\pm0.006$ & $0.423\scriptstyle\pm0.005$ & $0.079\scriptstyle\pm0.002$ & $0.242\scriptstyle\pm0.011$ &$ 0.084\scriptstyle\pm0.003$ & $0.269\scriptstyle\pm0.017$ & $0.064\scriptstyle\pm0.001$ & $0.271\scriptstyle\pm0.026$  \\ 
        ~ & NMAE & $0.142\scriptstyle\pm0.018$ & $0.418\scriptstyle\pm0.022$ & $0.017\scriptstyle\pm0.003$ & $0.560\scriptstyle\pm0.009$ & $0.104\scriptstyle\pm0.001$ & $0.295\scriptstyle\pm0.012$ & $0.108\scriptstyle\pm0.007$ & $0.341\scriptstyle\pm0.019$ & $0.128\scriptstyle\pm0.002$ & $0.356\scriptstyle\pm0.015$  \\ \midrule
        \multirow{2}{*}{Mixed}& CRPS & $\textcolor{red}{\textbf{0.057}\scriptstyle\pm 0.005}$ & $\textcolor{red}{\textbf{0.294}\scriptstyle\pm0.026}$ & $\textcolor{red}{\textbf{0.009}\scriptstyle\pm0.001}$ & $\textcolor{red}{\textbf{0.367}\scriptstyle\pm0.005}$ &$\textcolor{red}{\textbf{0.073}\scriptstyle\pm0.002}$ & $\textcolor{red}{\textbf{0.232}\scriptstyle\pm0.010}$ & $\textcolor{red}{\textbf{0.080}\scriptstyle\pm0.004}$ & $\textcolor{red}{\textbf{0.259}\scriptstyle\pm0.013}$ & $\textcolor{red}{\textbf{0.054}\scriptstyle\pm0.001}$ & $\textcolor{red}{\textbf{0.262}\scriptstyle\pm0.030}$  \\ 
        ~ & NMAE & $\textcolor{red}{\textbf{0.117}\scriptstyle\pm 0.019}$ & $\textcolor{red}{\textbf{0.373}\scriptstyle\pm0.032}$ & $\textcolor{red}{\textbf{0.009}\scriptstyle\pm0.001}$ & $\textcolor{red}{\textbf{0.480}\scriptstyle\pm0.008}$ & $\textcolor{red}{\textbf{0.093}\scriptstyle\pm0.002}$ & $\textcolor{red}{\textbf{0.284}\scriptstyle\pm0.011}$ & $\textcolor{red}{\textbf{0.102}\scriptstyle\pm0.010}$ & $\textcolor{red}{\textbf{0.323}\scriptstyle\pm0.020}$ & $\textcolor{red}{\textbf{0.107}\scriptstyle\pm0.002}$ & $\textcolor{red}{\textbf{0.330} \scriptstyle\pm0.014}$ \\ \bottomrule
    \end{tabular}
    }
    \label{tab: complete kalman filter}
\end{table}

\begin{table}[!htbp]
    \centering
    \caption{Ablations on KoopmanNet and KalmanNet. Lower CRPS or NMAE values indicate better performance. \textcolor{red}{\textbf{Red}}: the best. L: the forecasting horizon.}
    \resizebox{\columnwidth}{!}{
    \begin{tabular}{c|c|cccccccccc}
    \toprule
   \multirow{2}{*}{Variants}  & \multirow{2}{*}{Metrics} & Electricity-L & ETTm1-L & Exchange-S & Solar-S & Electricity-L & ETTm1-L & Electricity-L  & ETTm1-L & Electricity-L  & ETTm1-L  \\ 
         & 
        & (L = 720) & (L = 720) &  (L = 30) & (L = 24) & (L = 96) & (L = 96) &(L = 192) &(L = 192) & (L = 336) & (L = 336) \\ \hline
        \multirow{2}{*}{w/o KoopmanNet} & CRPS & $0.074\scriptstyle\pm0.009$ & $0.443\scriptstyle\pm0.034$ & $0.014\scriptstyle\pm0.002$ & $0.385\scriptstyle\pm0.008$ & $0.079\scriptstyle\pm0.003$ & $0.265\scriptstyle\pm0.018$ & $0.087\scriptstyle\pm0.006$ & $0.288\scriptstyle\pm0.019$ & $0.114\scriptstyle\pm0.006$ & $0.291\scriptstyle\pm0.037$  \\
        ~ & NMAE & $0.162\scriptstyle\pm0.015$ & $0.601\scriptstyle\pm0.058$ & $0.016\scriptstyle\pm0.001$ & $0.528\scriptstyle\pm0.014$ & $0.112\scriptstyle\pm0.006$ & $0.328\scriptstyle\pm0.022$ & $0.112\scriptstyle\pm0.012$ & $0.382\scriptstyle\pm0.027$ & $0.175\scriptstyle\pm0.009$ & $0.372\scriptstyle\pm0.027$  \\ \midrule
        \multirow{2}{*}{w/o KalmanNet} & CRPS & $0.089\scriptstyle\pm0.011$ & $0.398\scriptstyle\pm0.038$ & $0.011\scriptstyle\pm0.001$ & $0.375\scriptstyle\pm0.005$ & $0.098\scriptstyle\pm0.004$ & $0.278\scriptstyle\pm0.023$ & $0.091\scriptstyle\pm0.003$ & $0.375\scriptstyle\pm0.025$ & $0.266\scriptstyle\pm0.012$ & $0.301\scriptstyle\pm0.035$  \\
        ~ & NMAE & $0.192\scriptstyle\pm0.023$ & $0.539\scriptstyle\pm0.044 $& $0.012\scriptstyle\pm0.001$ & $0.499\scriptstyle\pm0.009$ &$ 0.133\scriptstyle\pm0.008$ & $0.338\scriptstyle\pm0.012$ & $0.122\scriptstyle\pm0.011$ & $0.443\scriptstyle\pm0.033$ & $0.359\scriptstyle\pm0.017$ & $0.394\scriptstyle\pm0.018$  \\ \midrule
        \multirow{2}{*}{$K^2$VAE} & CRPS & $\textcolor{red}{\textbf{0.057}\scriptstyle\pm 0.005}$ & $\textcolor{red}{\textbf{0.294}\scriptstyle\pm0.026}$ & $\textcolor{red}{\textbf{0.009}\scriptstyle\pm0.001}$ & $\textcolor{red}{\textbf{0.367}\scriptstyle\pm0.005}$ &$\textcolor{red}{\textbf{0.073}\scriptstyle\pm0.002}$ & $\textcolor{red}{\textbf{0.232}\scriptstyle\pm0.010}$ & $\textcolor{red}{\textbf{0.080}\scriptstyle\pm0.004}$ & $\textcolor{red}{\textbf{0.259}\scriptstyle\pm0.013}$ & $\textcolor{red}{\textbf{0.054}\scriptstyle\pm0.001}$ & $\textcolor{red}{\textbf{0.262}\scriptstyle\pm0.030}$  \\ 
        ~ & NMAE & $\textcolor{red}{\textbf{0.117}\scriptstyle\pm 0.019}$ & $\textcolor{red}{\textbf{0.373}\scriptstyle\pm0.032}$ & $\textcolor{red}{\textbf{0.009}\scriptstyle\pm0.001}$ & $\textcolor{red}{\textbf{0.480}\scriptstyle\pm0.008}$ & $\textcolor{red}{\textbf{0.093}\scriptstyle\pm0.002}$ & $\textcolor{red}{\textbf{0.284}\scriptstyle\pm0.011}$ & $\textcolor{red}{\textbf{0.102}\scriptstyle\pm0.010}$ & $\textcolor{red}{\textbf{0.323}\scriptstyle\pm0.020}$ & $\textcolor{red}{\textbf{0.107}\scriptstyle\pm0.002}$ & $\textcolor{red}{\textbf{0.330} \scriptstyle\pm0.014}$ \\ \bottomrule
    \end{tabular}}
        \label{tab: complete koopman kalman}
\end{table}

\clearpage
\subsection{Showcases}
We provide some showcases of $K^2$VAE in Figure~\ref{fig: Solar-S cases}, \ref{fig: ETTm1-L cases}, and \ref{fig: Elc-L cases}, which demonstrates the strong interval estimation capabilities of $K^2$VAE. We observe that $K^2$VAE achieves good performance in 95\% confidence interval, which means the forecasting horizon of the time series is well modeled and estimated.
\begin{figure}[!htbp]
  \centering
  \includegraphics[width=0.63\linewidth]{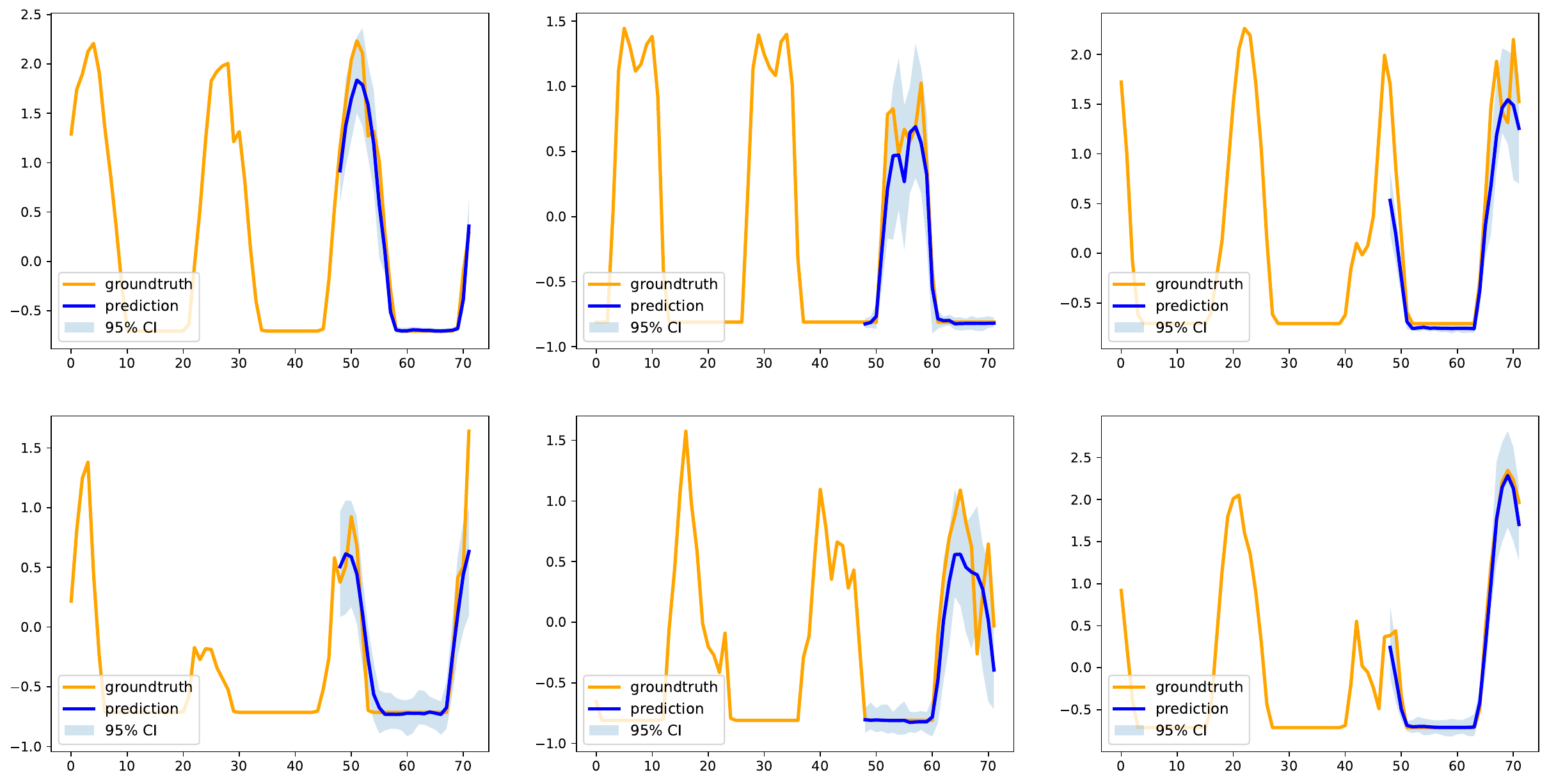}
  \caption{Visualization of input-24-predict-24 results on the Solar-S dataset.}
  \label{fig: Solar-S cases}
\end{figure}
\begin{figure}[!htbp]
  \centering
  \includegraphics[width=0.63\linewidth]{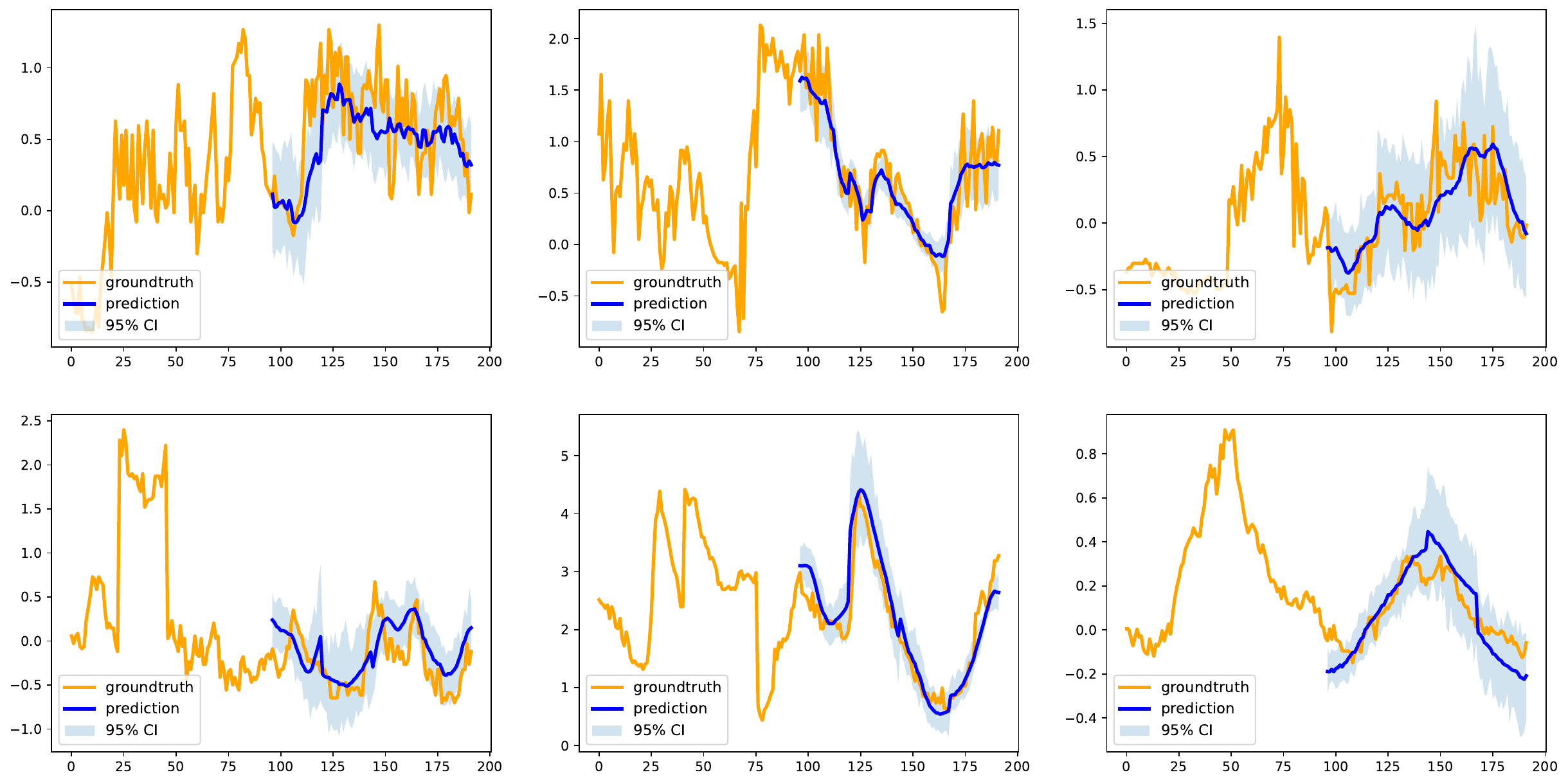}
  \caption{Visualization of input-96-predict-96 results on the ETTm1-L dataset.}
  \label{fig: ETTm1-L cases}
\end{figure}
\begin{figure}[!htbp]
  \centering
  \includegraphics[width=0.63\linewidth]{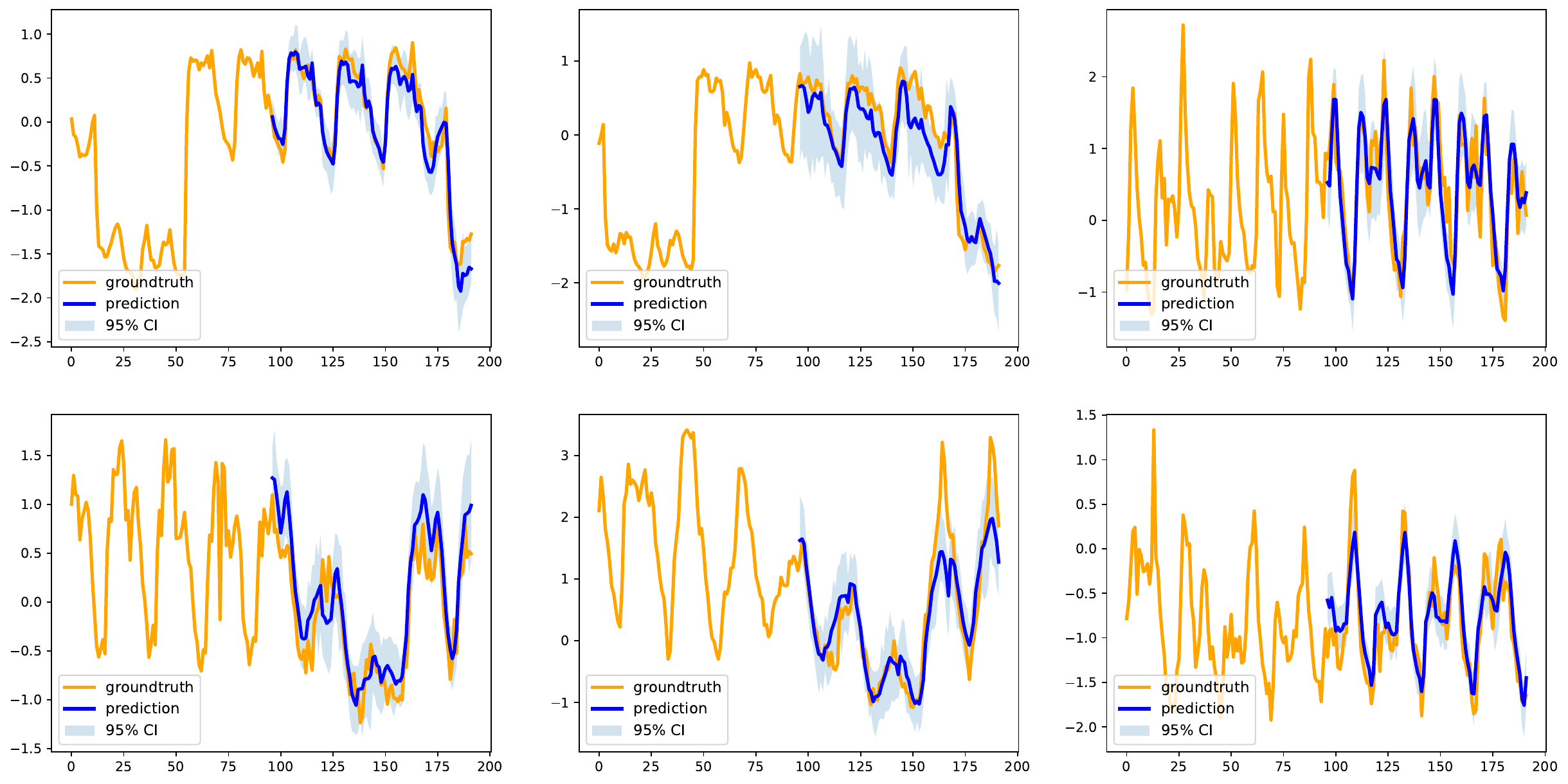}
  \caption{Visualization of input-96-predict-96 results on the Electricity-L dataset.}
  \label{fig: Elc-L cases}
\end{figure}
%%%%%%%%%%%%%%%%%%%%%%%%%%%%%%%%%%%%%%%%%%%%%%%%%%%%%%%%%%%%%%%%%%%%%%%%%%%%%%%
%%%%%%%%%%%%%%%%%%%%%%%%%%%%%%%%%%%%%%%%%%%%%%%%%%%%%%%%%%%%%%%%%%%%%%%%%%%%%%%

\end{document}